\documentclass[11pt]{article}

\usepackage[T1]{fontenc}
\usepackage{lmodern}
\usepackage[margin=1in]{geometry}
\usepackage{amsmath,amssymb,amsthm,mathtools,bm}
\usepackage{booktabs}
\usepackage{array}
\usepackage{multirow}
\usepackage{graphicx}
\usepackage{wrapfig}
\usepackage{float}
\usepackage{xcolor}
\usepackage{tikz}
\usetikzlibrary{arrows.meta,calc}
\usepackage{caption}
\usepackage[numbers,compress]{natbib}
\usepackage[colorlinks=true,linkcolor=blue!55!black,citecolor=blue!55!black,urlcolor=blue!55!black]{hyperref}
\usepackage[capitalize,noabbrev]{cleveref}

\definecolor{appthmgray}{gray}{0.96}
\definecolor{appthmborder}{gray}{0.72}
\definecolor{frontierbg}{HTML}{FBFAF7}
\newsavebox{\appthmstatementbox}
\newenvironment{appthmbox}[1]{%
  \par\medskip\noindent
  \begingroup
  \setlength{\fboxsep}{6pt}%
  \setlength{\fboxrule}{0.4pt}%
  \def\appthmboxtitle{#1}%
  \begin{lrbox}{\appthmstatementbox}%
  \begin{minipage}{\dimexpr\linewidth-2\fboxsep-2\fboxrule\relax}
  \textbf{\appthmboxtitle.}\enspace\itshape
}{%
  \end{minipage}%
  \end{lrbox}%
  \fcolorbox{appthmborder}{appthmgray}{\usebox{\appthmstatementbox}}%
  \endgroup
  \par\medskip
}
\newenvironment{appthmframe}{%
  \par\medskip\noindent
  \begingroup
  \setlength{\fboxsep}{6pt}%
  \setlength{\fboxrule}{0.4pt}%
  \begin{lrbox}{\appthmstatementbox}%
  \begin{minipage}{\dimexpr\linewidth-2\fboxsep-2\fboxrule\relax}
}{%
  \end{minipage}%
  \end{lrbox}%
  \fcolorbox{appthmborder}{appthmgray}{\usebox{\appthmstatementbox}}%
  \endgroup
  \par\medskip
}

\theoremstyle{plain}
\newtheorem{theorem}{Theorem}[section]
\newtheorem{lemma}[theorem]{Lemma}
\newtheorem{proposition}[theorem]{Proposition}
\newtheorem{corollary}[theorem]{Corollary}
\theoremstyle{definition}
\newtheorem{definition}[theorem]{Definition}

\theoremstyle{remark}
\newtheorem{remark}[theorem]{Remark}

\newcommand{\calX}{\mathcal{S}}
\newcommand{\E}{\mathbb{E}}

\newcommand{\1}{\mathbf{1}}
\newcommand{\dd}{\mathrm{d}}
\newcommand{\KL}{D_{\mathrm{KL}}}
\newcommand{\pdata}{p_{\mathrm{data}}}
\newcommand{\simplex}{\Delta^{S-1}}
\newcommand{\Rhat}{\widehat{R}}
\newcommand{\Rfwd}{R}
\newcommand{\ratekl}{\mathcal{D}}
\newcommand{\master}{\mathfrak{L}}
\newcommand{\ours}{\textsc{Unifusion}}
\newcommand{\method}{\textsc{Unifusion}}
\newcommand{\proofcase}[1]{\par\smallskip\noindent\textbf{\textit{#1.}}\ }

\newcommand{\best}[1]{\textbf{#1}}

\title{\method{}: Adapting Autoregressive Language Models into\\
Discrete Diffusion under a Unified Reverse-Rate Objective}

\author{%
  \textbf{Xiaoyi Jiang}$^{1,4}$\thanks{Equal contribution.}
  \quad
  \textbf{Jingyuan Li}$^{2,3,4}$\footnotemark[1]
  \quad
  \textbf{Yixuan Jiang}$^{1,4}$
  \quad
  \textbf{Wei Liu}$^{3}$ \\[0.35ex]
  \quad
  \textbf{Yi Zhu}$^{1,2,4}$
  \quad
  \textbf{Zuoqiang Shi}$^{1,2,4}$
  \quad
  \textbf{Pipi Hu}$^{2,4}$\thanks{Corresponding author.} \\[0.5ex]
  $^{1}$Tsinghua University
  \quad
  $^{2}$Beijing Institute of Mathematical Sciences and Applications \\
  $^{3}$Wuhan University
  \quad
  $^{4}$MathonAI
}
\date{}

\begin{document}
\maketitle

\begin{abstract}
Existing methods mainly adapt pretrained autoregressive (AR) language models to masked
diffusion, whereas we directly adapt them to uniform-noise diffusion, where every token remains editable
during sampling. However, adapting AR checkpoints across corruption kernels remains challenging because
existing DLMs use different objectives and prediction parameterizations. We establish connections among
SEDD, MDLM/GIDD, M2S, and Neural CTMC by expressing their conditional losses as a single generalized
Kullback--Leibler objective over model reverse rates. We further derive conversions from clean-token
predictions to concrete-score, posterior-mean, and exit-rate/jump parameterizations, yielding a shared
\(x_0\) interface that supports switching between mask and uniform kernels. Building on these connections,
we propose \ours{}, a simple continual pre-training approach for directly adapting pretrained GPT2
checkpoints to uniform-noise diffusion. Through systematic evaluation of 124M- and 355M-parameter models,
we show that \ours{} steadily improves the trade-off between generative perplexity (GenPPL) and unigram
entropy as the sampling budget increases from 16 to 256 steps. At 256 steps, \ours{}-S and \ours{}-M
achieve GenPPL/entropy pairs of \(97.783/5.2626\) and \(71.516/5.6669\), respectively; no evaluated
model at the same scale simultaneously outperforms \ours{} on both metrics. At both scales, \ours{} also
achieves the highest WinoGrande, SIQA, and BBH accuracy among the compared diffusion models.
\end{abstract}

\section{Introduction}
\label{sec:intro}

Autoregressive (AR) language models generate one token at a time from left to right. This factorization
supports efficient training and has produced many reusable checkpoints, but the generation order is
fixed: once a token is emitted, later steps cannot revise it. Discrete diffusion instead corrupts a
complete sequence and learns to reverse that corruption~\citep{austin2021structured,campbell2022continuous}.
Recent discrete diffusion models have narrowed the quality gap to AR models
~\citep{lou2024discrete,sahoo2024simple,shi2024simplified,nie2025llada}, but training a new diffusion
model from random initialization discards the language knowledge already stored in AR checkpoints.

DiffuGPT and DiffuLLaMA reuse GPT2 and LLaMA weights, and Dream-7B starts from Qwen
~\citep{radford2019language,gong2025scaling,ye2025dream}. These conversions use absorbing mask
corruption. This choice has a direct connection to AR training: an absorbing diffusion objective
averages over token-reveal orders, and left-to-right AR decoding is one such order
~\citep{uria2014deep,hoogeboom2022autoregressive,ou2025your,shi2024simplified}. However, a mask model
can only replace positions that remain masked. In our samples, increasing the number of denoising steps
sometimes lowers generative perplexity (GenPPL) while also lowering unigram entropy and producing
repeated numbers, templates, or table fragments (\Cref{sec:exp-frontier}). Uniform corruption offers a
different target: it replaces tokens with vocabulary items rather than a special mask, so all positions
remain editable throughout sampling. The open question is how to reach this model directly from an AR
checkpoint, without first training a masked diffusion model.

A direct conversion must specify both what the network predicts and how that prediction is trained.
Existing discrete diffusion methods make different choices. SEDD predicts concrete scores; MDLM and
GIDD predict clean tokens; M2S predicts posterior means; and Neural CTMC predicts an exit rate together
with a jump distribution
~\citep{lou2024discrete,sahoo2024simple,shi2024simplified,vonrutte2025gidd,neuralctmc2026,m2s2026}.
Their losses consequently have different formulas. Comparing those formulas alone does not reveal
whether two methods learn different reverse processes or merely describe the same process with different
outputs. Moreover, concrete scores, posterior means, and jump rates all contain the forward corruption
kernel. An output built for absorbing corruption therefore cannot be reused unchanged for uniform
corruption.

We address the training question in reverse-rate space (\Cref{sec:theory}). For a fixed clean token and
noised token, a discrete diffusion model assigns a rate to every possible reverse jump. We show that,
after mapping each method's output to these rates, the conditional losses of SEDD, MDLM/GIDD, M2S, and
Neural CTMC are the same generalized KL objective. We then convert a clean-token prediction to the
other three prediction formats. This equivalence has a precise scope: converting the same clean-token
logits into the four prediction formats produces identical reverse rates and the same rate-space objective.
Native optimization can still follow different paths
because the methods weight examples differently and use different batch sizes.

We address the kernel question with the clean-token distribution \(x_\theta(x_0\mid x_t,t)\)
(\Cref{thm:conversion}). Unlike a concrete score or jump rate, this distribution always answers the
same question: which vocabulary token occupied the current position before corruption? Once it is
predicted, the known forward kernel converts it into the reverse rates for either absorbing or uniform
noise. It therefore forms a concrete interface between a pretrained AR output and multiple diffusion
kernels.

Based on this interface, \ours{} adapts GPT2 directly to uniform diffusion (\Cref{sec:method}). The AR
softmax at position \(i-1\) already predicts the clean token at position \(i\); we shift these outputs by
one position and use them to initialize the diffusion \(x_0\) predictor. During fine-tuning, we gradually
open causal attention to bidirectional attention, add time conditioning, and use a linear uniform
corruption schedule. This requires no intermediate masked checkpoint.

Our experiments answer three questions (\Cref{sec:exp}). First, rate-equivalent native objectives follow
different finite-budget trajectories from the same GPT2-small initialization. Second, direct
AR-to-uniform training attains a minimum GenPPL comparable to the two-stage masked path. Third, across
five seeds, \ours{}-S and \ours{}-M improve both GenPPL and unigram entropy as the sampling budget grows.
At 256 steps, their GenPPL/entropy pairs are \(97.783/5.2626\) and \(71.516/5.6669\); at both scales,
they also achieve the highest WinoGrande, SIQA, and BBH accuracy among the compared diffusion models.

Our contributions are:
\begin{itemize}\itemsep1pt
  \item We unify the conditional losses of SEDD, MDLM/GIDD, M2S, and Neural CTMC in reverse-rate
    space and derive conversions from clean-token predictions to all four parameterizations
    (\Cref{sec:theory}).
  \item We use clean-token \(x_0\) predictions to bridge AR initialization and absorbing/uniform
    kernels, enabling direct AR-to-uniform adaptation (\Cref{sec:method,sec:exp-shift}).
  \item We separate rate-space equivalence from finite-budget optimization and improve the
    GenPPL--entropy trade-off while retaining strong zero-shot accuracy (\Cref{sec:exp}).
\end{itemize}

\section{Preliminaries}
\label{sec:prelim}

Let \(\mathcal V=\{1,\dots,S\}\) be the vocabulary. A uniform CTMC has state space
\(\calX=\mathcal V\), whereas an absorbing CTMC uses the augmented space
\(\calX=\mathcal V\cup\{m\}\). A CTMC on \(\calX\) over \([0,T]\) is characterized by a
time-dependent rate matrix \(\Rfwd_t\), with \(\Rfwd_t(i,j)\ge0\) for \(i\neq j\) and
\(\sum_j\Rfwd_t(i,j)=0\).

\begin{definition}
\label{def:fwd-rate}
If the chain is in state \(i\) at time \(s\) and \(t=s+h\) with \(h>0\), then, as \(h\to0\),
\begin{equation}
\label{eq:infinitesimal}
  q_{t\mid s}(j\mid i)=\delta_{i,j}+\Rfwd_t(i,j)\,h+o(h),
\end{equation}
and \(\Rfwd_t\) is the \emph{forward rate matrix}.
\end{definition}

\noindent The Kolmogorov forward equation \(\dot q_t=q_t\Rfwd_t\) determines the finite-time transition
matrix \(P_{s,t}\), and below we use two common forms of the corresponding cumulative kernel.

\begin{definition}
\label{def:kernel}
The cumulative kernel \(P_{s,t}\) is the finite-time transition matrix defined by
\begin{equation}
\label{eq:kernel-ode}
  \partial_t P_{s,t}=P_{s,t}\Rfwd_t,\qquad P_{s,s}=I,\qquad
  q_{t\mid s}(j\mid i)=(P_{s,t})_{ij}.
\end{equation}
We use two closed forms. \emph{(1) Generator form.} If
\(\Rfwd_t=\beta^{\mathrm{rate}}_t\bar R\) for a fixed generator \(\bar R\), then
\[
  P_{s,t}=\exp\!\left\{\Bigl(\int_s^t\beta^{\mathrm{rate}}_u\,\dd u\Bigr)\bar R\right\},
\]
with the homogeneous case given by \(\beta^{\mathrm{rate}}_t\equiv1\), and with a target prior \(\pi\)
imposed by choosing \(\bar R\) reversible,
\(\pi(i)\bar R(i,j)=\pi(j)\bar R(j,i)\). \emph{(2) Interpolating form.} The time-\(0\)-to-\(t\) transition
is specified directly by
\begin{equation}
\label{eq:mixture}
  q_{t\mid0}(j\mid i)=\alpha_t\delta_{ij}+\beta_t\pi(j),\qquad
  P_{0,t}=\alpha_t I+\beta_t\1\pi^\top,
\end{equation}
where \(\alpha_t+\beta_t=1\), \(\alpha_t\) is differentiable and nonincreasing,
\(\alpha_0=1\), \(\alpha_T=0\), \(\alpha_t>0\) for \(t<T\), and \(p_{\mathrm{prior}}=\pi\). The classical
choices \(\pi=e_m\) on \(\mathcal V\cup\{m\}\) and \(\pi=\1/S\) on \(\mathcal V\) give masked and
uniform kernels.
\end{definition}

For \(i\neq j\), let \(j\) be the current noised token, \(i\) the reverse jump target, and
\(\Rfwd_t(i,j)\) the rate of the forward jump \(i\to j\). The data-marginal reverse rate and the
conditional reverse rate given a clean token \(x_0\) are
\[
  \Rhat_t(j,i):=\Rfwd_t(i,j)\frac{q_t(i)}{q_t(j)},\qquad
  \Rhat_t(j,i\mid x_0):=\Rfwd_t(i,j)
  \frac{q_{t|0}(i\mid x_0)}{q_{t|0}(j\mid x_0)},
\]
with \(q_t(k):=\sum_z\pdata(z)q_{t|0}(k\mid z)\). Each ratio is used when its denominator is positive;
we set the corresponding rate to \(0\) when the denominator is \(0\). The marginal model rate and the clean-token-induced
conditional rate are denoted by \(\Rhat^\theta_t(j,i)\) and \(\Rhat_t(j,i\mid x_\theta)\), respectively, with
diagonals fixed by row sums, e.g., \(\Rhat^\theta_t(j,j)=-\sum_{i\neq j}\Rhat^\theta_t(j,i)\).

The models below use four neural parameterizations of off-diagonal reverse rates: SEDD predicts
concrete-score entries \(s_\theta(j,t)_i\) for candidate jumps
\(j\to i\)~\citep{lou2024discrete}; MDLM and GIDD predict a distribution on clean tokens
\(x_\theta(\cdot\mid j,t)\)~\citep{sahoo2024simple,shi2024simplified,vonrutte2025gidd}; M2S predicts a posterior distribution
\(\mu_\theta(\cdot\mid j,t)\)~\citep{m2s2026}, which the operator \(B_{j,t}\) maps to
the concrete-score vector \(B_{j,t}\mu_\theta\), with
\(\displaystyle (B_{j,t}\mu_\theta)_i:=\sum_{z\in\mathcal V}\mu_\theta(z\mid j,t)
q_{t|0}(i\mid z)/q_{t|0}(j\mid z)\) for \(i\neq j\) when \(j\) is reachable at time \(t\) from
every \(z\in\mathcal V\), meaning that
\(q_{t|0}(j\mid z)>0\) for all \(z\in\mathcal V\).
Neural CTMC predicts an exit rate
\(\lambda_\theta(j,t)\) and a conditional jump distribution
\(r_\theta(\cdot\mid j,t)\)~\citep{neuralctmc2026}. Their reverse-rate maps and explicit conversions from \(x_\theta\) are given in
\Cref{thm:five-faces,thm:conversion}, while the published losses are collected in
\Cref{app:losses}.

\section{Methodology}
\label{sec:methodology}

\subsection{A Unified Reverse-Rate Theorem}
\label{sec:theory}

This subsection states the main theory, with every proof deferred to \Cref{app:proofs} and numerical
verification deferred to \Cref{app:verification}. The key message is that the losses of different discrete
diffusion models, despite their different parameterizations, can all be converted by identities into the
same Bregman divergence. We first
isolate the rate-space functional that all later parameterizations will share.

\begin{definition}
\label{def:master}
For any off-diagonal reverse-rate parameterization $\Rhat^\theta$, the \emph{pointwise rate
divergence} and the \emph{master objective} are
\begin{equation}
\label{eq:master}
  \ratekl_t(x_t,x_0;\theta):=\sum_{y\neq x_t}f\bigl(\Rhat_t(x_t,y\mid x_0),\Rhat^\theta_t(x_t,y)\bigr),
  \qquad
  \master(\theta)=\int_0^T\E_{x_0,\,x_t\sim q_{t|0}}[\ratekl_t]\,\dd t.
\end{equation}
Here $f(r,c)=D_\phi(r,c)$ is the Bregman divergence generated by
$\phi(u)=u\log u$:
\[
  D_\phi(r,c)=\phi(r)-\phi(c)-\phi'(c)(r-c)
  =r\log\tfrac{r}{c}-r+c\ge0,\qquad r,c>0.
\]
For boundary cases, we use \(f(0,c)=c\), \(f(0,0)=0\), and \(f(r,0)=+\infty\) for \(r>0\).
All time integrals are over \(0<t<T\).
\end{definition}

The next proposition connects \(\master\) to likelihood training: \(\master\) upper-bounds the negative
log-likelihood up to a constant.

\begin{proposition}
\label{prop:master}
For every \(x_0\) with \(\pdata(x_0)>0\), assume that
\begin{enumerate}
\renewcommand{\labelenumi}{\textup{(\arabic{enumi})}}
\setlength{\itemsep}{2pt}
\setlength{\parsep}{0pt}
\item \(p_{\mathrm{prior}}(k)>0\) whenever \(q_{T|0}(k\mid x_0)>0\), for every \(k\);
\item for every \(i\neq j\) and almost every \(t\in(0,T)\),
\(\Rhat^\theta_t(j,i)>0\) whenever
\(q_{t|0}(j\mid x_0)>0\) and \(\Rhat_t(j,i\mid x_0)>0\);
\item the reverse model starts from \(X_T\sim p_{\mathrm{prior}}\) and has marginal
\(\rho^\theta_t\); each entry of
\(\Rhat_t(\cdot,\cdot\mid x_0)\) and \(\Rhat^\theta_t\) is integrable on every
\([\delta,T-\varepsilon]\subset(0,T)\), with the endpoint limits
\(\lim_{t\downarrow0}\KL(q_{t|0}(\cdot\mid x_0)\|
\rho^\theta_t)=-\log p_\theta(x_0)\) and
\(\lim_{t\uparrow T}\KL(q_{t|0}(\cdot\mid x_0)\|\rho^\theta_t)
=\KL(q_{T|0}(\cdot\mid x_0)\|p_{\mathrm{prior}})\).
\end{enumerate}
Then
\[
  \E_{\pdata}[-\log p_\theta(X_0)]
  \le
  \master(\theta)+C_{\mathrm{ELBO}},
  \qquad
  C_{\mathrm{ELBO}}
  :=
  \E_{x_0\sim\pdata}
  \KL\!\left(
    q_{T|0}(\cdot\mid x_0)
    \,\middle\|\,
    p_{\mathrm{prior}}
  \right).
\]

The master objective has the following conditional and marginal rate representations:
\begin{align}
  \master(\theta)
  &=
  \int_0^T
  \sum_j q_t(j)
  \sum_{i\neq j}
  \E\!\left[
    f\bigl(
      \Rhat_t(j,i\mid X_0),
      \Rhat^\theta_t(j,i)
    \bigr)
    \,\middle|\,
    X_t=j
  \right]
  \dd t,
  \label{eq:master-conditional}\\
  &=
  \int_0^T
  \sum_j q_t(j)
  \sum_{i\neq j}
  f\bigl(
    \Rhat_t(j,i),
    \Rhat^\theta_t(j,i)
  \bigr)
  \dd t
  +C_{\mathrm{gap}}
  \label{eq:master-marginal}
\end{align}
Here $C_{\mathrm{gap}}\ge0$ is independent of \(\theta\).
\end{proposition}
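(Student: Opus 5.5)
The argument has three parts: a pathwise KL (Girsanov-type) identity along the forward bridge conditioned on \(x_0\), which yields the likelihood bound, and two rewritings of \(\master\) obtained by conditioning and by a Bregman decomposition.

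\emph{Likelihood bound.} Fix \(x_0\) with \(\pdata(x_0)>0\), and write \(q_t:=q_{t|0}(\cdot\mid x_0)\) and \(\rho_t:=\rho^\theta_t\). By Bayes' rule, the time reversal of the conditional forward chain has rates \(\Rhat_t(j,i\mid x_0)\). Its marginals satisfy the backward Kolmogorov equation
\[
-\partial_t q_t(i)=\sum_j q_t(j)\Rhat_t(j,i\mid x_0),
\]
where the diagonal entries are fixed by the row sums. The model marginals satisfy the analogous equation with \(\Rhat^\theta_t\). I will differentiate \(K(t):=\KL(q_t\|\rho_t)\) on a compact \([\delta,T-\varepsilon]\), where the integrability in assumption (3) justifies this. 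Substituting both backward equations and grouping the off-diagonal terms should give
\[
\frac{\dd}{\dd t}K(t)=\sum_j q_t(j)\sum_{i\neq j}\Bigl[\Rhat_t(j,i\mid x_0)\Bigl(\log\tfrac{\rho_t(j)q_t(i)}{q_t(j)\rho_t(i)}\Bigr)-\Rhat_t(j,i\mid x_0)+\Rhat^\theta_t(j,i)\tfrac{\rho_t(i)}{\rho_t(j)}\tfrac{q_t(j)}{q_t(i)}\cdot(\ldots)\Bigr].
\]
The cleaner route is the standard data-processing/Girsanov inequality for path measures: \(\KL\) of the marginals at time \(\delta\) is at most \(\KL\) at time \(T-\varepsilon\) plus the path-space KL of the reverse processes on \([\delta,T-\varepsilon]\). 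That path-space KL equals
\[
\int_\delta^{T-\varepsilon}\sum_j q_t(j)\sum_{i\neq j}f\bigl(\Rhat_t(j,i\mid x_0),\Rhat^\theta_t(j,i)\bigr)\,\dd t .
\]
Assumption (2) guarantees that the model never sends to zero a rate on which the bridge places mass, so this quantity is finite or \(+\infty\) consistently, and the boundary conventions for \(f\) handle the zero cases. Letting \(\delta\downarrow0\) and \(\varepsilon\downarrow0\), the endpoint limits in assumption (3) give
\[
-\log p_\theta(x_0)\le \KL(q_{T|0}(\cdot\mid x_0)\|p_{\mathrm{prior}})+\int_0^T\E_{x_t}\bigl[\ratekl_t\bigr]\,\dd t .
\]
Assumption (1) makes the constant well defined. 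Monotone convergence applies since the integrand is nonnegative. Taking \(\E_{\pdata}\) then gives the bound.

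\emph{Representation \eqref{eq:master-conditional}.} This is Fubini/Tonelli applied to the nonnegative integrand. The joint law of \((x_0,x_t)\) is disintegrated as \(q_t(j)\,p(x_0\mid X_t=j)\), and the variable \(y\) is renamed \(i\).

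\emph{Representation \eqref{eq:master-marginal}.} Fix \(t\), \(j\), \(i\), and write \(c=\Rhat^\theta_t(j,i)\). First, the posterior mean of the conditional rate equals the marginal rate:
\[
\E[\Rhat_t(j,i\mid X_0)\mid X_t=j]=\Rfwd_t(i,j)\sum_z \pdata(z)q_{t|0}(i\mid z)/q_t(j)=\Rhat_t(j,i).
\]
Second, for a Bregman divergence, \(\E f(R,c)=f(\E R,c)+\E[\phi(R)]-\phi(\E R)\). This is because the terms \(-\phi(c)-\phi'(c)(r-c)\) are affine in \(r\). The remainder is the Jensen gap \(\E\phi(R)-\phi(\E R)\ge0\) by convexity of \(\phi\), and it does not depend on \(\theta\). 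Integrating the weighted remainders defines \(C_{\mathrm{gap}}\).

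The main obstacle is the first step. The difficulty is rigorously establishing the path-space KL identity with time-inhomogeneous rates that may blow up at \(t\to0\) and \(t\to T\), together with the zero-rate cases. To handle this, I work on compact subintervals, use the support condition (2) for absolute continuity of the path measures, and pass to the limit only through the assumed endpoint limits and monotone convergence. A secondary technicality is that \(C_{\mathrm{gap}}\) might be infinite. In that case I read the identity \eqref{eq:master-marginal} in the extended sense, or note that the quantity is finite under the integrability assumptions.
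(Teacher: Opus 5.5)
Your argument is correct in outline, and for the likelihood bound it takes a genuinely different route from the paper. The paper works only with marginals. It differentiates \(H_t=\KL(q_{t|0}(\cdot\mid x_0)\|\rho^\theta_t)\) using the two backward Kolmogorov equations, which gives \(-\frac{\dd}{\dd t}H_t=\sum_j q(j)\sum_{i\neq j}\{a_{ji}\log u_{ji}-b_{ji}u_{ji}+b_{ji}\}\), where \(a_{ji}=\Rhat_t(j,i\mid x_0)\), \(b_{ji}=\Rhat^\theta_t(j,i)\) and \(u_{ji}=q(i)\rho(j)/(\rho(i)q(j))\). It then bounds each bracket by \(f(a_{ji},b_{ji})\) using the scalar inequality \(a\log u-bu+b\le a\log(a/b)-a+b\), obtained by maximizing over \(u>0\), and integrates over \([\delta,T-\varepsilon]\). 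So the direct computation you started and abandoned does work. The formula you displayed for \(\frac{\dd}{\dd t}K\) is not correct as written: the overall sign is wrong, there is a spurious \(-\Rhat_t(j,i\mid x_0)\) term, and the ratio multiplying \(\Rhat^\theta_t(j,i)\) is inverted. The correct formula is the one above.

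Your chain-rule/Girsanov/data-processing route instead gives an exact path-space identity and loses only in the data-processing step. This has two benefits. It identifies the slack as the KL between the two reverse path laws conditioned on the time-\(\delta\) state. It also absorbs all zero-rate and support questions into absolute continuity of path measures, so you do not need the paper's backward support-propagation step \(q_{t|0}(i\mid x_0)>0\Rightarrow\rho^\theta_t(i)>0\). The cost is heavier machinery. You must know that the time reversal of the conditioned, time-inhomogeneous chain is Markov with rates \(\Rhat_t(\cdot,\cdot\mid x_0)\). You also need a Girsanov formula for jump processes whose rates are only locally integrable. The paper needs only the marginal ODEs and one elementary inequality.

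Both representations follow the paper's Step 2 (Tonelli, then the Bregman/Jensen-gap decomposition), but one step is asserted rather than proved. Bayes' rule alone gives
\(\E[\Rhat_t(j,i\mid X_0)\mid X_t=j]=\frac{\Rfwd_t(i,j)}{q_t(j)}\sum_{z:\,q_{t|0}(j\mid z)>0}\pdata(z)q_{t|0}(i\mid z)\),
because any \(z\) with \(q_{t|0}(j\mid z)=0\) has zero posterior weight. To extend the sum to all \(z\), and so obtain \(\Rhat_t(j,i)\), you need \(\Rfwd_t(i,j)q_{t|0}(i\mid z)=0\) whenever \(q_{t|0}(j\mid z)=0\). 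This is not a pointwise identity in \(t\), but it does hold for almost every \(t\). At almost every zero of the nonnegative, absolutely continuous function \(t\mapsto q_{t|0}(j\mid z)\), its derivative vanishes. The forward equation then forces each nonnegative term \(q_{t|0}(k\mid z)\Rfwd_t(k,j)\), \(k\neq j\), to vanish. This is exactly the paper's Lemma A.1, and you should add it.

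On a possibly infinite \(C_{\mathrm{gap}}\), keep your extended-sense reading rather than the claim that assumption (3) makes it finite. Assumption (3) is local on \((0,T)\) and says nothing about the endpoint singularities. Note that \(C_{\mathrm{gap}}\) equals \(\master\) evaluated at the exact marginal rates, so it is finite exactly when the optimal value of \(\master\) is finite.
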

Since \(C_{\mathrm{gap}}\) is independent of \(\theta\), the conditional and marginal objectives have the
same gradients whenever they are finite and differentiable. We next show that the SEDD, MDLM/GIDD,
M2S, and Neural CTMC conditional losses all reduce to \(\master\) when written in terms of their induced
reverse rates.

\begin{theorem}
\label{thm:five-faces}
The prediction heads introduced in \Cref{sec:prelim} induce off-diagonal model reverse rates through the
following mappings:
\begin{center}
\footnotesize
\setlength{\tabcolsep}{3pt}
\begin{tabular*}{\linewidth}{@{\extracolsep{\fill}}lll@{}}
\toprule
Method & Parameterization & Relation\\
\midrule
SEDD~\citep{lou2024discrete}
  & Concrete score \(s_\theta\)
  & \(\Rhat^\theta_t(j,i)=\Rfwd_t(i,j)s_\theta(j,t)_i\)\\
MDLM/GIDD~\citep{sahoo2024simple,shi2024simplified,vonrutte2025gidd}
  & Clean-token prediction \(x_\theta\)
  & \(\displaystyle \Rhat_t(j,i\mid x_\theta)
  =\Rfwd_t(i,j)\frac{q_t(i\mid x_\theta)}{q_t(j\mid x_\theta)}\)\\
M2S~\citep{m2s2026}
  & Mean-to-score \(\mu_\theta\)
  & \(\Rhat^\theta_t(j,i)=\Rfwd_t(i,j)(B_{j,t}\mu_\theta)_i\)\\
Neural CTMC~\citep{neuralctmc2026}
  & Exit rate / jump \((\lambda_\theta,r_\theta)\)
  & \(\Rhat^\theta_t(j,i)=\lambda_\theta(j,t)r_\theta(i\mid j,t)\)\\
\bottomrule
\end{tabular*}
\end{center}
For the clean-token row with \(\Rfwd_t(i,x_t)>0\) for some \(i\neq x_t\), assume
\(q_t(x_t\mid x_\theta)>0\). For \(q_{t|0}(x_t\mid x_0)>0\), substituting any row into
\Cref{def:master} gives the corresponding published
rate-space loss exactly. Thus these losses are the master objective written in different reverse-rate
coordinates.
\end{theorem}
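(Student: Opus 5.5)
We will prove the statement row by row: for each parameterization we write down the published conditional loss (collected in \Cref{app:losses}) at a fixed pair \((x_0,x_t=j)\) with \(q_{t|0}(j\mid x_0)>0\), substitute the stated rate map into \(\ratekl_t(j,x_0;\theta)=\sum_{i\neq j}f(\Rhat_t(j,i\mid x_0),\Rhat^\theta_t(j,i))\), and check that the two expressions agree term by term. The common preliminary step is to expand \(f(r,c)=r\log r-r\log c-r+c\) and group it as a \(\theta\)-independent part \(r\log r-r\), a cross-entropy part \(-r\log c\), and a linear part \(+c\). Since every published loss is itself a weighted sum of cross-entropy and linear terms plus a normalizing constant, each row reduces to matching these three groups. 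The boundary conventions \(f(0,c)=c\) and \(f(r,0)=+\infty\) cover the terms where a rate vanishes.

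\textbf{SEDD.} Here \(\Rhat_t(j,i\mid x_0)=\Rfwd_t(i,j)\,q_{t|0}(i\mid x_0)/q_{t|0}(j\mid x_0)\). Factoring \(\Rfwd_t(i,j)\) out of \(f\) uses the homogeneity \(f(ar,ac)=a f(r,c)\) for \(a\ge 0\). This gives \(\sum_{i\neq j}\Rfwd_t(i,j)\bigl[s_i-\tfrac{q(i)}{q(j)}\log s_i+K(\tfrac{q(i)}{q(j)})\bigr]\) with \(K(a)=a\log a-a\), which is the denoising score entropy with its usual constant. \textbf{M2S} follows from the same computation after setting \(s_\theta=B_{j,t}\mu_\theta\). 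The reachability assumption makes \(B_{j,t}\) well defined, so the M2S loss is the SEDD loss evaluated at \(B_{j,t}\mu_\theta\). \textbf{Neural CTMC} is immediate: \(\sum_{i\neq j}\Rhat^\theta_t(j,i)=\lambda_\theta(j,t)\) because \(r_\theta\) is a probability distribution, so the linear group is \(\lambda_\theta\). The cross-entropy group splits as \(-\sum_i r\log\lambda_\theta-\sum_i r\log r_\theta(i\mid j,t)\), which is the published exit-rate-plus-jump likelihood loss.

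\textbf{MDLM/GIDD.} For the clean-token row we use the mixture kernel \eqref{eq:mixture}. We compute \(q_t(i\mid x_\theta)=\sum_z x_\theta(z)q_{t|0}(i\mid z)=\alpha_t x_\theta(i)+\beta_t\pi(i)\), and the hypothesis \(q_t(j\mid x_\theta)>0\) makes the ratio defined. Substituting into \(f\) and summing over \(i\neq j\) gives, in the absorbing case (\(j=m\), \(\pi=e_m\)), rates \(\Rfwd_t(i,m)\alpha_t x_\theta(i)/\beta_t\). The target rate is nonzero only at \(i=x_0\). The linear group then collapses via \(\sum_{i\neq m}x_\theta(i)=1\) to a \(\theta\)-independent constant (using the carry-over/zero-mask-probability convention), and the cross-entropy group gives \(-\tfrac{\alpha_t'}{1-\alpha_t}\log x_\theta(x_0)\). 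This is the MDLM weighted cross-entropy. For GIDD and the uniform kernel, we keep the general form and match against GIDD's ELBO written with the ratio \(q_t(i\mid x_\theta)/q_t(j\mid x_\theta)\).

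\textbf{Main obstacle.} We expect the hardest step to be this last matching, in two respects. First, we must pin down the forward rates \(\Rfwd_t(i,j)\) generated by the interpolating kernel, namely \(\Rfwd_t=\tfrac{-\dot\alpha_t}{\alpha_t}(\1\pi^\top-I)\), by differentiating \eqref{eq:mixture} and checking consistency with \eqref{eq:kernel-ode}. Second, we must verify that the published GIDD loss agrees with \(\ratekl_t\) exactly, including its constant, and not merely up to \(\theta\)-independent terms. Our first attempt will be to rewrite the GIDD loss in rate form. If the constants disagree, we will state the agreement modulo the \(\theta\)-independent \(K\) terms and reconcile them with the convention fixed in \Cref{app:losses}.
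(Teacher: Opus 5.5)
\textbf{Gap: the GIDD row is not proved.} Your SEDD, M2S and Neural CTMC rows follow the paper's proof. You pull $R_t(i,j)$ out of $f$ by homogeneity, recognize $h(s,r)$ as the SEDD bracket at $s=B_{j,t}\mu_\theta$, and split $\lambda_\theta r_\theta$ using $\sum_{i\neq j}r_\theta(i\mid j,t)=1$. The GIDD row, however, is the objective the paper actually trains with, and you leave it as the ``main obstacle''. Your fallback of agreement modulo $\theta$-independent terms would not prove the stated ``exactly''.

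The missing observation is that the interpolating kernel's off-diagonal forward rate into $j$ does not depend on the source state. Your own derivation gives $R_t(i,j)=-\tfrac{\dot\alpha_t}{\alpha_t}\pi(j)$ for every $i\neq j$; for a time-varying $\pi_t$ it is $\beta_t\pi_t'(j)-\tfrac{\alpha_t'}{\alpha_t}\pi_t(j)$. Hence $R_t(i,j)=w_t(j,x_0)\,q_{t|0}(j\mid x_0)$ with GIDD's weight, uniformly in $i$. Writing $q=q_{t|0}(\cdot\mid x_0)$ and $p=q_t(\cdot\mid x_\theta)$, homogeneity gives
\[
  \mathcal D_t=w_t(j,x_0)\sum_{y\neq j}\Bigl[q(y)\log\frac{q(y)\,p(j)}{q(j)\,p(y)}-q(y)+q(j)\frac{p(y)}{p(j)}\Bigr].
\]
Now use $\sum_{y\neq j}q(y)=1-q(j)$ and $\sum_{y\neq j}p(y)=1-p(j)$:
\begin{itemize}
\item The partial KL over $y\neq j$ equals $D_{\mathrm{KL}}(q\|p)-q(j)\log\frac{q(j)}{p(j)}$.
\item Adding $(1-q(j))\log\frac{p(j)}{q(j)}$ turns this into $D_{\mathrm{KL}}(q\|p)-\log\frac{q(j)}{p(j)}$.
\item The remaining terms give $-(1-q(j))+q(j)\frac{1-p(j)}{p(j)}=\frac{q(j)}{p(j)}-1$.
\end{itemize}
The result is exactly GIDD's integrand with no leftover constant, so no fallback is needed. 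The proof is incomplete until this computation is supplied.

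\textbf{Smaller fixes in the MDLM row.}
\begin{itemize}
\item Your final expression has a sign slip. With $a:=-\alpha_t'/(1-\alpha_t)\ge0$, the result is $-a\log x_\theta(x_0\mid m,t)=\frac{\alpha_t'}{1-\alpha_t}\log x_\theta(x_0\mid m,t)\ge0$. Your $-\frac{\alpha_t'}{1-\alpha_t}\log x_\theta(x_0)$ is nonpositive, which is impossible since $f\ge0$.
\item Saying ``the linear group collapses to a $\theta$-independent constant'' is weaker than what the statement needs. The three groups contribute $a\log a-a$, then $-a\log a-a\log x_\theta(x_0\mid m,t)$, then $+a$, so the constant is exactly zero.
\item State explicitly that unmasked states contribute nothing. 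For $j\in\mathcal V$ one has $R_t(i,j)=0$, so both rates vanish and $f(0,0)=0$; this is what restricts MDLM's sum to masked positions.
\end{itemize}
Separately, your SEDD match silently uses the transpose convention $Q_t=R_t^\top$, i.e.\ $Q_t(x_t,y)=R_t(y,x_t)$; state it.
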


The following corollary summarizes their equivalence in reverse-rate space.

\begin{corollary}
\label{cor:switching}
Under the same forward CTMC and noise schedule, expressing each parameterization through its induced
off-diagonal model reverse rate \(\Rhat^\theta\) yields the following equivalent conditional losses:
\[
  L_{\mathrm{SEDD}}(\Rhat^\theta)
  =L_{\mathrm{GIDD}}(\Rhat^\theta)
  =L_{\mathrm{M2S}}(\Rhat^\theta)
  =L_{\mathrm{NCTMC}}(\Rhat^\theta)
  =\master(\Rhat^\theta).
\]
Consequently, their gradients with respect to the reverse rates are identical. Over unconstrained reverse
rates, all four losses are minimized at
\(\Rhat^\theta_t(j,i)=\Rhat_t(j,i)\) on states with \(q_t(j)>0\).
\end{corollary}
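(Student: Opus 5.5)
The corollary follows from \Cref{thm:five-faces} and \Cref{prop:master}; I would proceed in three steps.

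\textbf{Step 1: equality of the losses.} Fix the forward CTMC and schedule, and a parameterization whose induced off-diagonal rate is \(\Rhat^\theta\). By \Cref{thm:five-faces}, for every \((x_0,x_t)\) with \(q_{t|0}(x_t\mid x_0)>0\), that method's conditional loss equals \(\ratekl_t(x_t,x_0;\theta)\). Here \(\ratekl_t\) is evaluated at the rate obtained from the table: \(\Rfwd_t(i,j)s_\theta\), the \(x_\theta\)-induced rate, \(\Rfwd_t(i,j)(B_{j,t}\mu_\theta)_i\), or \(\lambda_\theta r_\theta\). Pairs with \(q_{t|0}(x_t\mid x_0)=0\) have zero weight under \(x_t\sim q_{t|0}\), so integrating over \(t\) and taking the expectation over \(x_0,x_t\) gives \(L_{\cdot}(\Rhat^\theta)=\master(\Rhat^\theta)\) for each of the four methods. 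This proves the chain of equalities. Each loss then depends on the network only through \(\Rhat^\theta\). So the losses are the same functional of the rates, and their derivatives with respect to any entry \(\Rhat^\theta_t(j,i)\) coincide wherever finite. Explicitly, this derivative is \(q_t(j)\bigl(1-\E[\Rhat_t(j,i\mid X_0)\mid X_t=j]/\Rhat^\theta_t(j,i)\bigr)\), from \(\partial_c f(r,c)=1-r/c\).

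\textbf{Step 2: the minimizer.} I would use the marginal representation \eqref{eq:master-marginal}. There \(\master\) equals \(\int_0^T\sum_j q_t(j)\sum_{i\neq j}f(\Rhat_t(j,i),\Rhat^\theta_t(j,i))\,\dd t+C_{\mathrm{gap}}\), and \(C_{\mathrm{gap}}\) does not depend on \(\theta\).

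\begin{itemize}
\item Because \(f=D_\phi\ge0\) with equality iff \(r=c\), including the boundary conventions \(f(0,c)=c\) and \(f(0,0)=0\), the integrand is nonnegative.
\item It vanishes exactly when \(\Rhat^\theta_t(j,i)=\Rhat_t(j,i)\) for every \(j\) with \(q_t(j)>0\), for almost every \(t\).
\item Over unconstrained rates this choice is admissible. It attains the lower bound \(C_{\mathrm{gap}}\), so it minimizes \(\master\) and hence all four losses.
\end{itemize}

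As a cross-check without \eqref{eq:master-marginal}, one can minimize the conditional form \eqref{eq:master-conditional} pointwise in \(c\). The map \(c\mapsto \E[r\log(r/c)-r+c]\) is strictly convex, with stationary point \(c=\E[r\mid X_t=j]\). That conditional mean equals \(\Rhat_t(j,i)\), since \(\E[q_{t|0}(i\mid X_0)/q_{t|0}(j\mid X_0)\mid X_t=j]=q_t(i)/q_t(j)\) by Bayes' rule.

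\textbf{Step 3: the main obstacle.} The hardest part is the bookkeeping at degenerate states.

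\begin{itemize}
\item For \(q_t(j)=0\), the rates are unconstrained by the objective. This is why the claim is restricted to states with \(q_t(j)>0\).
\item When \(\Rhat_t(j,i)=0\), the minimizer must also be \(0\), which is consistent with \(f(0,c)=c\).
\item Finiteness must be checked so that the gradient statement is meaningful. I would invoke assumption (2) of \Cref{prop:master} to rule out \(f(r,0)=+\infty\) at the optimum.
\item The equality ``minimized at'' is understood up to \(\dd t\)-null sets.
\end{itemize}
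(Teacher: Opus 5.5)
Your proof is correct. Step 1 matches the paper: the values coincide by \Cref{thm:five-faces}, and the per-edge first variation is $q_t(j)\bigl(1-\E[\Rhat_t(j,i\mid X_0)\mid X_t=j]/\Rhat^\theta_t(j,i)\bigr)$.

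Where you differ is the minimizer. You read it off the marginal representation \eqref{eq:master-marginal}. Under the boundary conventions, $f(r,c)=0$ iff $r=c$ on all of $[0,\infty)^2$, so the $\theta$-dependent part is a nonnegative integrand that vanishes exactly at $\Rhat^\theta=\Rhat$.

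The paper instead stays on the conditional form:
\begin{itemize}
\item It applies \Cref{lem:exp-transform} to the first variation to get $q_t(j)\bigl(1-\Rhat_t(j,i)/\Rhat^\theta_t(j,i)\bigr)$.
\item It computes the second variation $q_t(j)\Rhat_t(j,i)/\bigl(\Rhat^\theta_t(j,i)\bigr)^2$, giving strict convexity and a unique minimizer on edges with $q_t(j)\Rhat_t(j,i)>0$.
\item It handles $\Rhat_t(j,i)=0$ separately via $f(0,c)=c$.
\end{itemize}
This is precisely your ``cross-check''.

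What each route buys:
\begin{itemize}
\item Yours reuses \Cref{prop:master}, whose second step already packages the lemma and Jensen. It then gets minimality directly from nonnegativity, with no derivatives, and treats zero-rate edges uniformly.
\item The paper's route does not pass through \Cref{prop:master}. It also yields the explicit gradient that underlies the ``identical gradients'' clause.
\end{itemize}

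One caveat on your route. Comparing integrated values is informative only when $C_{\mathrm{gap}}<\infty$; otherwise $\master\equiv+\infty$. Applying your decomposition pointwise in $(t,j,i)$, where $\Delta_t(j,i)$ is finite, removes this caveat.

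Finally, you do not need assumption (2) in Step 3 to exclude $f(r,0)=+\infty$ at the optimum. If $\Rhat_t(j,i)=0$, then $\Rhat_t(j,i\mid X_0)\ge0$ has conditional mean $0$ given $X_t=j$ by \Cref{lem:exp-transform}. It therefore vanishes almost surely, and only $f(0,0)=0$ terms occur.
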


The AR model provides an \(x_0\) prediction. To use it with the other three parameterizations, the
next theorem converts this prediction into the corresponding SEDD, M2S, and Neural CTMC coordinates.

\begin{theorem}
\label{thm:conversion}
Using the notation of Section~2, let \(x_\theta(\cdot\mid j,t)\in\Delta(\mathcal V)\) be an
\(x_0\) prediction for a state \(j\) that is reachable at time \(t\) from every
\(z\in\mathcal V\). Define
\[
  q_t(k\mid x_\theta):=\sum_{z\in\mathcal V}x_\theta(z)q_{t|0}(k\mid z),\qquad
  \Rhat_t(j,i\mid x_\theta):=
  \Rfwd_t(i,j)\frac{q_t(i\mid x_\theta)}{q_t(j\mid x_\theta)}\qquad (i\neq j).
\]
For a nonzero induced reverse-rate row, the corresponding SEDD, M2S, and Neural CTMC outputs are
\[
\begin{aligned}
  s_\theta(j,t)_i
  &=\frac{q_t(i\mid x_\theta)}{q_t(j\mid x_\theta)},\\
  \mu_\theta(z)
  &=\frac{x_\theta(z)q_{t|0}(j\mid z)}{q_t(j\mid x_\theta)},
  & (B_{j,t}\mu_\theta)_i
  &=\frac{q_t(i\mid x_\theta)}{q_t(j\mid x_\theta)},\\
  \lambda_\theta(j,t)
  &=\sum_{k\neq j}\Rhat_t(j,k\mid x_\theta),
  & r_\theta(i\mid j,t)
  &=\frac{\Rhat_t(j,i\mid x_\theta)}{\lambda_\theta(j,t)} .
\end{aligned}
\]
\end{theorem}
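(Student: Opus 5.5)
The plan is to verify, for each of the three target parameterizations, that the proposed output, when pushed through its reverse-rate map from \Cref{thm:five-faces}, reproduces exactly \(\Rhat_t(j,i\mid x_\theta)\) for every \(i\neq j\). Matching the induced rates is what ``corresponding outputs'' means, and by \Cref{cor:switching} it gives identical losses. Throughout, the reachability hypothesis \(q_{t|0}(j\mid z)>0\) for all \(z\) is used first. Since \(x_\theta\in\Delta(\mathcal V)\), it yields \(q_t(j\mid x_\theta)=\sum_z x_\theta(z)q_{t|0}(j\mid z)>0\), so every ratio in the statement is well defined.

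\textbf{SEDD.} The map is \(\Rhat^\theta_t(j,i)=\Rfwd_t(i,j)s_\theta(j,t)_i\). Substituting \(s_\theta(j,t)_i=q_t(i\mid x_\theta)/q_t(j\mid x_\theta)\) gives the definition of \(\Rhat_t(j,i\mid x_\theta)\) verbatim. This step is immediate.

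\textbf{M2S.} Two checks are needed. First, \(\mu_\theta\) must be a probability distribution. Nonnegativity is clear, and \(\sum_z\mu_\theta(z)=q_t(j\mid x_\theta)/q_t(j\mid x_\theta)=1\) by the definition of \(q_t(j\mid x_\theta)\); this is just Bayes' rule with prior \(x_\theta\). Second, I compute
\[
(B_{j,t}\mu_\theta)_i=\sum_z \frac{x_\theta(z)q_{t|0}(j\mid z)}{q_t(j\mid x_\theta)}\cdot\frac{q_{t|0}(i\mid z)}{q_{t|0}(j\mid z)}.
\]
Here the factor \(q_{t|0}(j\mid z)\) cancels, which is legitimate because of reachability. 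The sum collapses to \(q_t(i\mid x_\theta)/q_t(j\mid x_\theta)\). Multiplying by \(\Rfwd_t(i,j)\) then again gives \(\Rhat_t(j,i\mid x_\theta)\). The main subtlety in the whole proof sits here: the reachability assumption is exactly what makes \(B_{j,t}\) defined and the cancellation valid. I would remark that for the absorbing kernel this restricts \(j\) to the mask state, and that this is consistent with the statement.

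\textbf{Neural CTMC.} The map is \(\Rhat^\theta_t(j,i)=\lambda_\theta r_\theta(i\mid j,t)\). With \(\lambda_\theta\) defined as the off-diagonal row sum, the hypothesis ``nonzero induced reverse-rate row'' gives \(\lambda_\theta>0\), because all entries are nonnegative: \(\Rfwd_t(i,j)\ge0\) and the \(q\)'s are nonnegative. Consequently \(r_\theta(\cdot\mid j,t)\) is a well-defined probability distribution on \(\{i\neq j\}\), and \(\lambda_\theta r_\theta(i\mid j,t)=\Rhat_t(j,i\mid x_\theta)\) trivially.

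Finally, I would note that the diagonal entries agree automatically, since they are fixed by row sums. Then, by \Cref{thm:five-faces} and \Cref{cor:switching}, all four parameterizations induce the same reverse-rate row and hence the same master-objective contribution at \((j,t)\).
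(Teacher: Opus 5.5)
Your proposal is correct and follows the paper's proof essentially step for step. Reachability gives \(q_t(j\mid x_\theta)>0\), and the SEDD score follows by direct substitution. The M2S posterior is checked to be normalized and, after cancelling \(q_{t|0}(j\mid z)>0\), to reproduce that score. The Neural CTMC pair comes from the row sum, which is positive because the row is nonzero with nonnegative entries. The only cosmetic difference is that the paper first motivates \(\mu_\theta\) by matching the two sums term by term and then verifies it, whereas you verify the given formula directly.
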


\subsection{Method}
\label{sec:method}

\ours{} adapts a pretrained AR transformer to uniform-noise diffusion through the following
techniques.

\enlargethispage{2\baselineskip}
\paragraph{AR as the \(x_0\) model.}
After a one-position shift, an AR checkpoint's next-token distribution for \(x^i\) given \(x^{<i}\)
can be used either as the clean-token prediction \(x_\theta^i(\cdot\mid x_t,t)\) or as the M2S
prediction \(\mu_\theta\).

\noindent
\begin{minipage}[t]{0.57\linewidth}
\vspace{0pt}
\Cref{fig:ar-x0-loss} compares these two choices. We initialize two models from the same AR checkpoint
(GPT2-small) and train them with the same uniform-kernel M2S loss \eqref{eq:loss-m2s}, differing only
in how the shifted AR output is used. The \(x_0\) model has \(48.6\%\) lower average loss
during the first \(2\)k updates and \(27.9\%\) lower average loss during updates \(5\)k--\(10\)k,
before the loss of the \(\mu\) model catches up after roughly \(10\)k updates. This comparison indicates
that the pretrained AR output already behaves as an \(x_0\) prediction, while using it directly as
\(\mu_\theta\) requires the model to adapt during training. We therefore use the shifted AR output as
\(x_\theta\) and convert it to SEDD, M2S, or Neural CTMC using \Cref{thm:conversion}.
\end{minipage}\hfill
\begin{minipage}[t]{0.40\linewidth}
\vspace{0pt}
\centering
\includegraphics[width=0.95\linewidth]{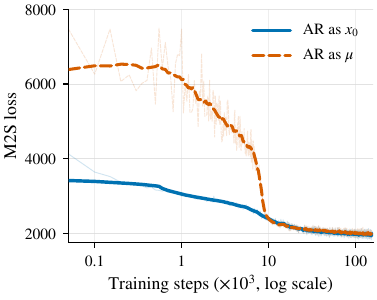}
\captionsetup{font=small,justification=raggedright,singlelinecheck=false,hypcap=false}
    \captionof{figure}{\textbf{M2S loss comparison under different parameterizations.}}
\label{fig:ar-x0-loss}
\end{minipage}
\par\medskip

\paragraph{Causal-to-bidirectional annealing.}
AR pretraining uses causal attention, whereas diffusion denoising requires bidirectional attention over
visible corrupted tokens. We anneal the attention mask from causal to fully bidirectional, so optimization
starts from the AR conditioning pattern and gradually opens the full context. Because the AR checkpoint
does not include diffusion-time conditioning, we learn the \(t\)-conditioning parameters during
adaptation while retaining the AR initialization for the remaining model parameters.

\paragraph{Uniform corruption with a linear schedule.}
We train with the uniform prior \(\pi=\1/S\). After rescaling time to \(t\in(0,1)\), the cumulative
forward corruption kernel is \(q_{t\mid0}(j\mid i)=\alpha_t\delta_{ij}+\beta_t\frac{1}{S}\), where
\(\alpha_t=1-t\) and \(\beta_t=t\).
The uniform kernel has two advantages over an absorbing mask kernel: corrupted tokens remain editable
throughout sampling, and generation retains higher lexical diversity.

\section{Experiments}
\label{sec:exp}

\subsection{Trainability of Rate-Equivalent Losses}
\label{sec:exp-loss}

We test the finite-budget trainability of four rate-equivalent formulations initialized from the same
GPT2-small AR checkpoint. All runs used uniform corruption, the same FineWeb split, and the same
\(60\)B-token training window. GIDD directly used the shifted AR softmax as its clean-token \(x_0\)
prediction. For SEDD, M2S, and Neural CTMC, we converted the same \(x_0\) prediction using
\Cref{thm:conversion} and optimized each method's native loss. Each checkpoint was evaluated with its
native \(256\)-step sampler. Training and evaluation details are provided in
\Cref{app:exp-loss-details}. \Cref{fig:method} shows the GenPPL trajectories, and \Cref{tab:loss} reports
the minimum GenPPL within the shared training window.

\begin{figure}[htbp]
\centering
\begin{minipage}[t]{0.55\linewidth}
  \vspace{0pt}
  \centering
  \includegraphics[width=\linewidth,height=0.45\linewidth]{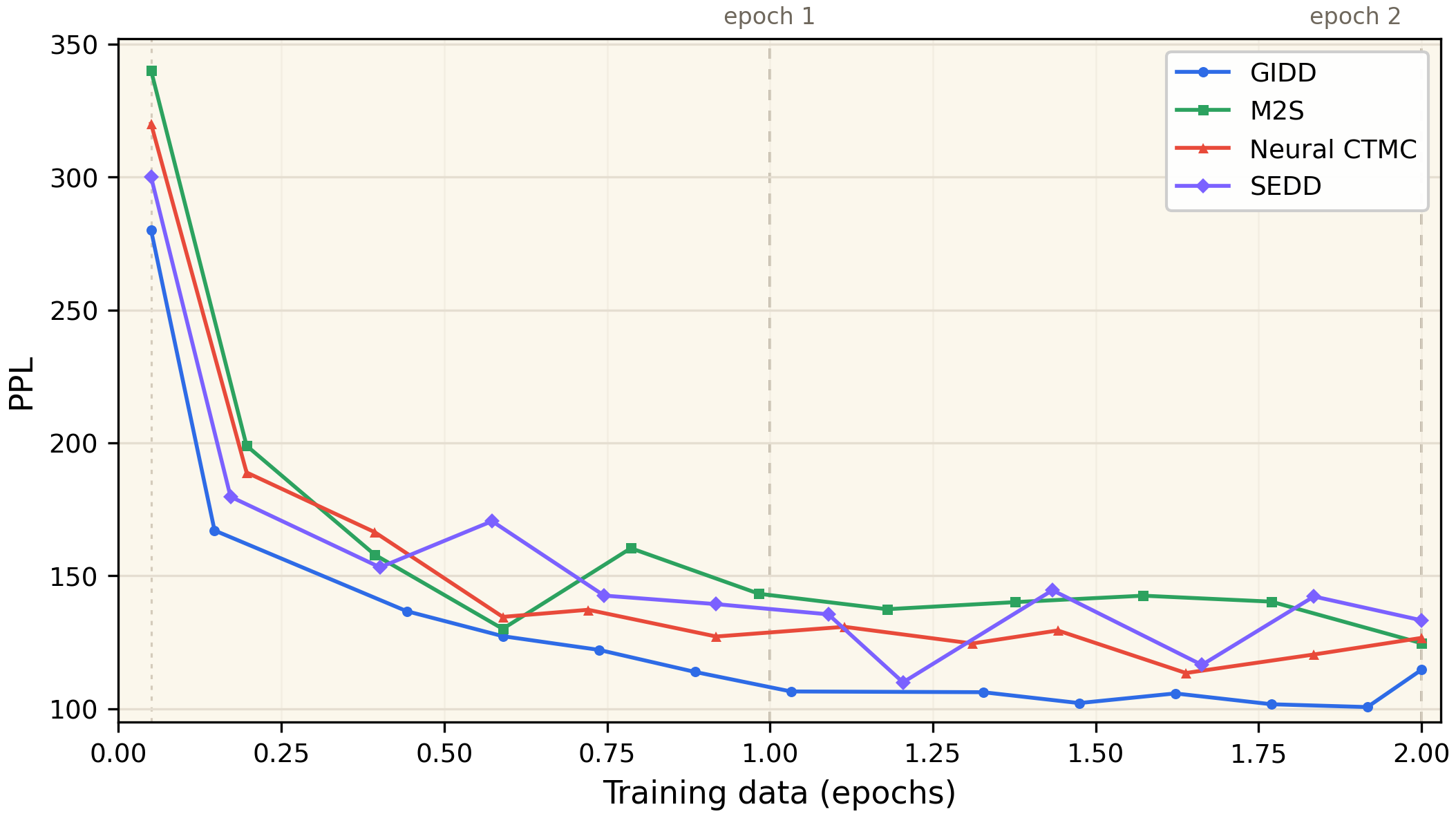}
  \caption{\textbf{Finite-budget training trajectories.} GenPPL trajectories for the four
  rate-equivalent objectives.}
  \label{fig:method}
\end{minipage}\hfill
\begin{minipage}[t]{0.41\linewidth}
  \vspace{0pt}
  \centering
  \captionsetup{type=table,justification=raggedright,singlelinecheck=false}
  \caption{\textbf{Minimum GenPPL by training objective.} Minimum GenPPL and corresponding epoch within
  the shared training window.}
  \label{tab:loss}
  \scriptsize
  \setlength{\tabcolsep}{3pt}
  \renewcommand{\arraystretch}{1.22}
  \resizebox{\linewidth}{!}{%
  \begin{tabular}{llc}
  \toprule
Objective & Epoch & Min. GenPPL $\downarrow$\\
  \midrule
  GIDD~\citep{vonrutte2025gidd}      & 1.92 & \best{100.7}\\
  SEDD~\citep{lou2024discrete}       & 1.20 & 110.0\\
  Neural CTMC~\citep{neuralctmc2026} & 1.64 & 113.5\\
  M2S~\citep{m2s2026}                & 2.00 & 124.7\\
  \bottomrule
  \end{tabular}}
\end{minipage}
\end{figure}

Rate-space equivalence (\Cref{cor:switching}) means that the four conversions produce the same reverse
rates from the same logits, which we verify within \(10^{-11}\) in \Cref{app:verification}. Their native
training runs can still follow different trajectories because their loss weighting and batch
sizes differ.
GIDD uses dynamic weighting and clips loss weights at \(2.0\), limiting the extreme endpoint weights
identified in GIDD~\citep{vonrutte2025gidd}. GIDD reached a minimum GenPPL of \(100.7\) at epoch \(1.92\);
SEDD, Neural CTMC, and M2S reached \(110.0\), \(113.5\), and \(124.7\) at epochs \(1.20\), \(1.64\), and
\(2.00\), respectively. We therefore use GIDD's native loss and sampler in the remaining experiments.

\subsection{Transfer via \texorpdfstring{\(x_0\)}{x0} Prediction}
\label{sec:exp-shift}

We next test whether the clean-token \(x_0\) output can transfer a pretrained AR model to masked and
uniform diffusion. After a one-position shift, the AR next-token prediction is used as the \(x_0\)
prediction, and training then adapts it to the target corruption kernel. We evaluate four conversion paths:
AR${\to}$mask, AR${\to}$uniform, AR${\to}$mask${\to}$uniform, and AR${\to}$uniform${\to}$mask.

\begin{wrapfigure}{r}{0.60\textwidth}
\vspace{-0.6\baselineskip}
\centering
\includegraphics[width=\linewidth]{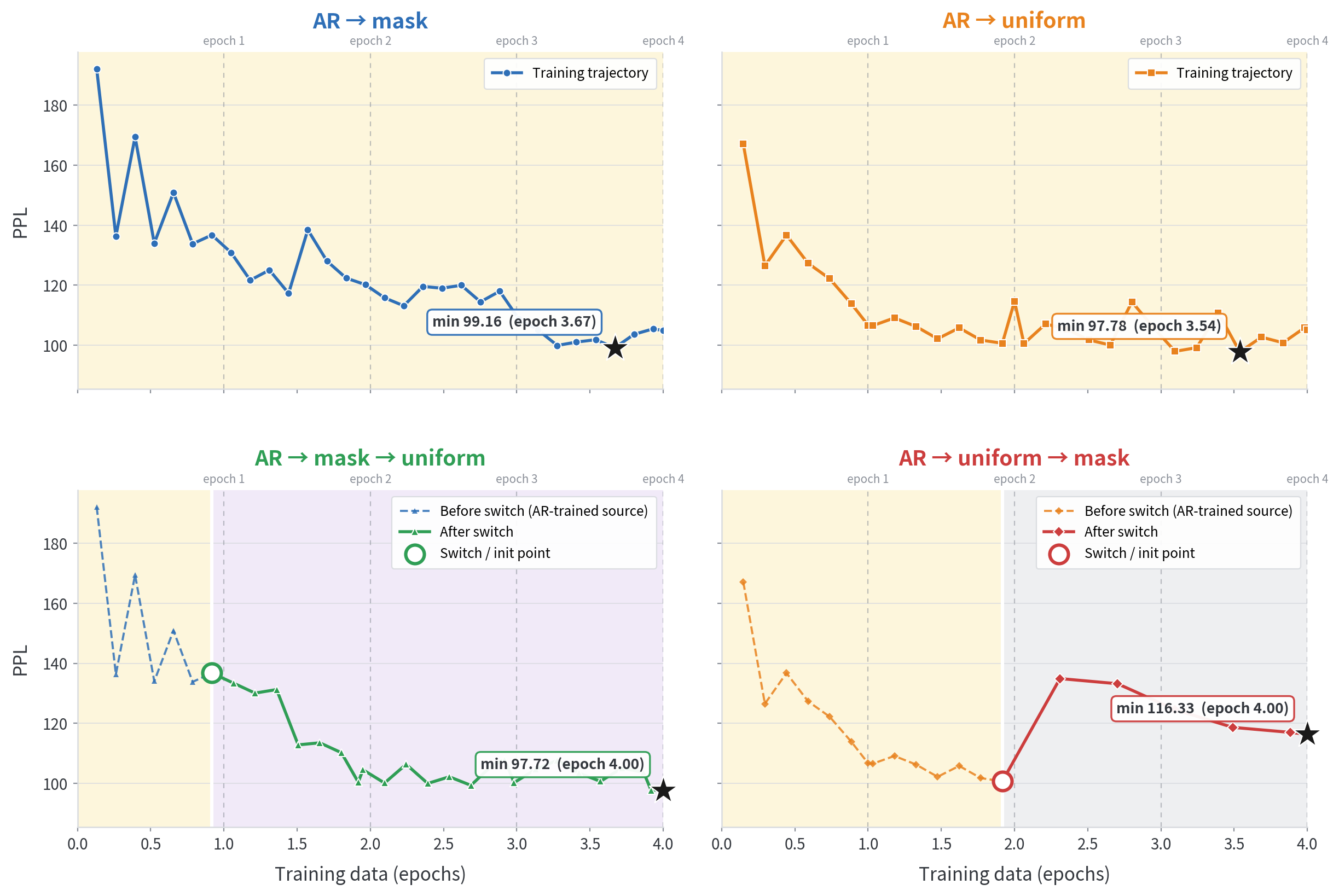}
\captionsetup{font=small}
\caption{\textbf{Conversion paths.}}
\label{fig:shift}
\vspace{-0.5\baselineskip}
\end{wrapfigure}

All four paths started from the same GPT2-small AR checkpoint and were optimized with GIDD under the
same data budget and optimization settings. Full training and evaluation details are provided in
\Cref{app:exp-shift-details}. \Cref{fig:shift} shows the GenPPL trajectory for each path and marks its
minimum value.

The direct and two-stage AR-to-uniform paths attained comparable minimum GenPPL. This indicates that
uniform diffusion can inherit the AR initialization without an intermediate masked-diffusion stage. In
contrast, uniform-to-mask conversion reached a substantially higher minimum GenPPL of \(116.33\) under
the same training budget. We therefore adopt direct AR-to-uniform conversion in subsequent experiments.

\subsection{Perplexity--Entropy Frontier}
\label{sec:exp-frontier}

GenPPL and unigram entropy are complementary but individually insufficient diagnostics: low GenPPL may
favor repetitive, high-probability text, whereas high entropy may reflect noise or random symbols rather
than meaningful diversity. We therefore analyze their Pareto frontier toward low GenPPL and high
entropy. The complete evaluation protocol and five-seed statistics are reported in
Appendix~\ref{app:frontier-setup}.

\begin{figure}[H]
\centering
\begin{minipage}[t]{0.64\linewidth}
\vspace{0pt}
\centering
\begin{tikzpicture}
\node[inner sep=0] (frontierplot) {\includegraphics[width=\linewidth]{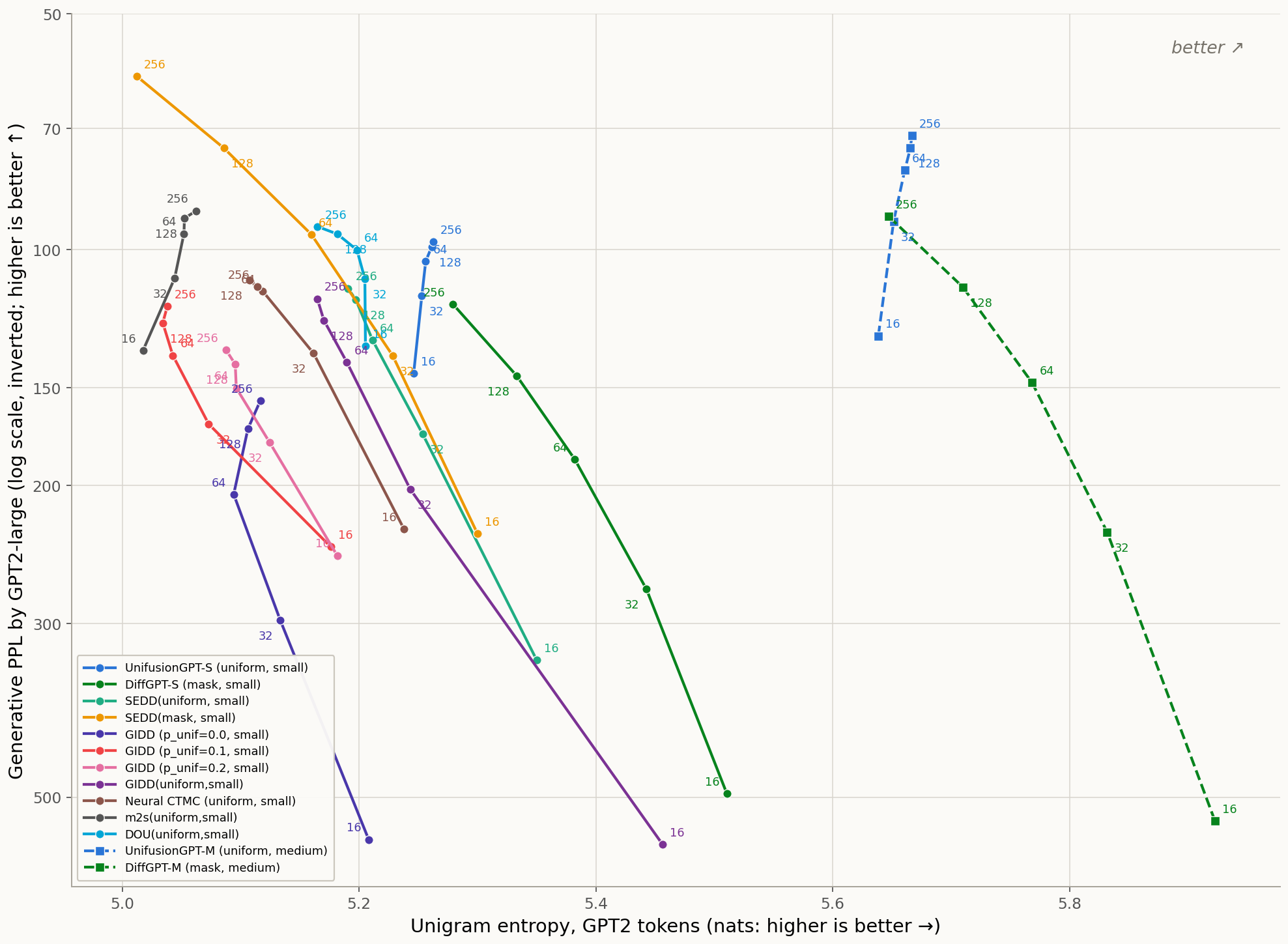}};
\begin{scope}[
  x={($(frontierplot.south east)-(frontierplot.south west)$)},
  y={($(frontierplot.north west)-(frontierplot.south west)$)},
  shift={(frontierplot.south west)}
]
\draw[-{Stealth[length=2.2mm,width=1.6mm]},draw=black!85,line width=0.8pt]
  (0.962,0.054) -- (0.998,0.054);
\draw[-{Stealth[length=2.2mm,width=1.6mm]},draw=black!85,line width=0.8pt]
  (0.056,0.940) -- (0.056,0.998);
\fill[frontierbg] (0.838,0.892) rectangle (0.992,0.982);
\node[anchor=west,inner sep=0,text=black!90,font=\bfseries\itshape\fontsize{12}{13}\selectfont]
  at (0.846,0.943) {better};
\draw[-{Stealth[length=3.4mm,width=2.5mm]},draw=black!90,line width=1.15pt,
  line cap=round,line join=round]
  (0.945,0.930) -- (0.987,0.977);
\fill[frontierbg] (0.000,0.150) rectangle (0.025,0.890);
\node[rotate=90,anchor=center,inner sep=0,text=black,
  font=\sffamily\fontsize{4.3}{5}\selectfont]
  at (0.0125,0.520)
  {Generative PPL by GPT2-large (log scale, inverted; lower is better \(\uparrow\))};
\end{scope}
\end{tikzpicture}
\end{minipage}\hfill
\begin{minipage}[t]{0.34\linewidth}
\vspace{0pt}
\centering
\begingroup
\setlength{\fboxsep}{4pt}
\fcolorbox{blue!55!black}{blue!3}{%
\begin{minipage}{\dimexpr\linewidth-2\fboxsep-2\fboxrule\relax}
\raggedright\scriptsize
\textbf{\ours{}-S, 256 steps}\\[-1pt]
\(\mathrm{GenPPL}=97.783,\ H=5.2626\)\\
\textit{Balanced frontier point.}\\
\textit{Sample:} A Center of Food and Agricultural Development provides training to agricultural industry professionals and consultants. With a degree in consulting and curriculum-based marketing\ldots
\end{minipage}}\par\nointerlineskip
\fcolorbox{green!45!black}{green!3}{%
\begin{minipage}{\dimexpr\linewidth-2\fboxsep-2\fboxrule\relax}
\raggedright\scriptsize
\textbf{DiffuGPT-S, 16 steps}\\[-1pt]
\(\mathrm{GenPPL}=494.491,\ H=5.5106\)\\
\textit{High entropy, noise-like text.}\\
\textit{Sample:} M0742 072 4TFD Fathers and Solop Over Guardians 8671 = Captain Lewis \& One Elf Piece 8670 = Hero Action Only the Queen\ldots
\end{minipage}}\par\nointerlineskip
\fcolorbox{red!60!black}{red!3}{%
\begin{minipage}{\dimexpr\linewidth-2\fboxsep-2\fboxrule\relax}
\raggedright\scriptsize
\textbf{SEDD-mask, 256 steps}\\[-1pt]
\(\mathrm{GenPPL}=60.045,\ H=5.0121\)\\
\textit{Low GenPPL, repetitive collapse.}\\
\textit{Sample:} 6 (4) of 3:00. Terminator September 3:59:00 C 20:00 9:00 C ((-c) CLINARY\ldots
\end{minipage}}
\endgroup
\end{minipage}
\caption{\textbf{(a) GenPPL--entropy frontier. (b) Selected samples.}}
\label{fig:frontier}
\end{figure}

Across 16--256 steps, \ours{}-S improves monotonically from
\(143.944\pm0.872\) to \(97.783\pm1.901\) GenPPL while its mean sample-level entropy rises from
\(5.2461\pm0.0022\) to \(5.2626\pm0.0041\) nats. Its 64-step point already Pareto-dominates the
256-step endpoints of SEDD-uniform, Neural CTMC, and every evaluated GIDD variant. At 256 steps,
\ours{}-S occupies a favorable knee of the GenPPL--entropy Pareto frontier. Compared with DiffGPT-S,
it reduces GenPPL from \(117.334\) to \(97.783\), a \(16.7\%\) improvement, while preserving nearly
the same entropy (\(5.2626\) versus \(5.2792\)). Conversely, the baselines achieving lower GenPPL
incur substantially larger diversity losses: the entropy decreases to \(5.1649\) for DOU, \(5.0624\)
for M2S, and \(5.0121\) for SEDD-mask. Consequently, no competing small model
simultaneously improves upon \ours{}-S in both metrics, demonstrating its favorable balance between
fluency and lexical diversity.

At medium scale, \ours{}-M improves from \(128.832\pm1.968\) GenPPL and
\(5.6386\pm0.0027\) entropy at 16 steps to a GenPPL of \(71.516\pm1.165\) and an entropy of
\(5.6669\pm0.0053\) at 256 steps. It Pareto-dominates DiffGPT-M at 256 steps, whose
corresponding values are \(90.645\pm2.195\) and \(5.6472\pm0.0036\). Moreover, the 64-step
\ours{}-M point (\(79.207/5.6609\)) already dominates the 256-step DiffGPT-M point, showing that the
improvement persists under a fourfold smaller sampling budget. Additional samples are provided in
Appendix~\ref{app:qual-small}.

\subsection{Downstream tasks}
\label{sec:exp-downstream}

We evaluate zero-shot accuracy on HellaSwag, WinoGrande, SocialIQA, PIQA, and BBH; the shared
evaluation protocol is detailed in Appendix~\ref{app:downstream-details}. As shown in
\Cref{tab:acc}, \ours{} achieves the highest accuracy on three of the five benchmarks at both scales.
\ours{}-S ranks first among all evaluated small models on WinoGrande (\(53.0\)), SIQA (\(37.8\)), and
BBH (\(32.0\)); \ours{}-M retains the same three-task lead at medium scale, reaching \(53.7\) on
WinoGrande, \(39.7\) on SIQA, and \(32.7\) on BBH. Both models also outperform their GPT2 initializations on all
three tasks.

\begin{table}[H]
\centering
\small
\caption{\textbf{Zero-shot downstream accuracy} (\%). Type: AR = autoregressive, DD = discrete
diffusion.}
\label{tab:acc}
\setlength{\tabcolsep}{2.5pt}
\resizebox{\textwidth}{!}{%
\begin{tabular}{llcccccccc}
\toprule
Scale & Model & Size & Type & Kernel & HSwag & Wino. & SIQA & PIQA & BBH \\
\midrule
\multirow{11}{*}{Small} & GPT2                & 124M & AR & $-$     & 29.9 & 48.5 & 35.7 & \best{62.1} & 31.3 \\
                       & DiffuGPT           & 124M & DD & mask    & \best{33.4} & 50.8 & 37.0 & \best{57.7} & 31.2 \\
                       & SEDD                & 170M & DD & mask    & 30.2 & 50.1 & 34.4 & 55.6 & 30.6 \\
                       & SEDD               & 170M & DD & uniform & 30.4 & 48.5 & 35.4 & 54.6 & 30.9 \\
                       & GIDD               & 170M & DD & hybrid ($p_{\mathrm{unif}}{=}0.0$) & 31.7 & 51.5 & 36.0 & 56.0 & 31.0 \\
                       & GIDD               & 170M & DD & hybrid ($p_{\mathrm{unif}}{=}0.1$) & 30.6 & 51.3 & 36.7 & 54.5 & 31.1 \\
                       & GIDD               & 170M & DD & hybrid ($p_{\mathrm{unif}}{=}0.2$) & 30.6 & 48.8 & 34.0 & 53.7 & 31.4 \\
                       & GIDD               & 170M & DD & uniform & 29.4 & 49.3 & 33.2 & 50.5 & 31.5 \\
                       & Neural-CTMC        & 170M & DD & uniform & 29.4 & 48.1 & 36.9 & 53.3 & 31.9 \\
                       & M2S                & 170M & DD & uniform & 30.2 & 48.4 & 36.4 & 53.4 & 31.5 \\
                       & \textsc{UnifusionGPT} & 124M & DD & uniform & 29.6 & \best{53.0} & \best{37.8} & 57.2 & \best{32.0} \\
\midrule
\multirow{4}{*}{Medium} & GPT2                & 355M & AR & $-$     & 38.3 & 50.7 & 37.7 & \best{67.4} & 32.6 \\
                        & DiffuGPT           & 355M & DD & mask    & \best{37.2} & 52.6 & 39.0 & \best{59.6} & 31.3 \\
                        & SEDD                & 424M & DD & mask    & 31.5 & 49.0 & 35.4 & 56.1 & 32.4 \\
                        & \textsc{UnifusionGPT} & 355M & DD & uniform & 35.6 & \best{53.7} & \best{39.7} & 56.8 & \best{32.7} \\
\bottomrule
\end{tabular}
}
\end{table}

\section{Related Work}
\label{sec:related}

Discrete diffusion language models build on D3PM~\citep{austin2021structured} and its continuous-time
Markov-chain formulation~\citep{campbell2022continuous}. Subsequent work follows two related directions.
The first develops different reverse-process parameterizations: SEDD predicts concrete scores
~\citep{lou2024discrete}, MDLM and GIDD predict clean tokens
~\citep{sahoo2024simple,shi2024simplified,vonrutte2025gidd}, Neural CTMC predicts exit rates and jump
distributions~\citep{neuralctmc2026}, and M2S maps posterior means to scores~\citep{m2s2026}. The second
connects absorbing diffusion to order-agnostic AR modeling and learned generation orders
~\citep{uria2014deep,hoogeboom2022autoregressive,ou2025your,garg2025learnedorder}. DiffuGPT/DiffuLLaMA
and Dream-7B use this connection to adapt pretrained AR checkpoints to absorbing diffusion
~\citep{gong2025scaling,ye2025dream}.

\section{Conclusion}
\label{sec:conclusion}

We unified the conditional losses of SEDD, MDLM/GIDD, M2S, and Neural CTMC as the same generalized KL
objective in model reverse-rate space. We further derived explicit conversions among their
parameterizations. Using the clean-token \(x_0\) prediction, \ours{} adapts GPT2 checkpoints to uniform
diffusion. On the five-seed GenPPL--entropy frontier, \ours{}-S reduces GenPPL by \(16.7\%\) relative to
DiffGPT-S at similar entropy, while \ours{}-M Pareto-dominates DiffGPT-M at 256 steps. On zero-shot
downstream tasks, both \ours{}-S and \ours{}-M achieve the best WinoGrande, SIQA, and BBH scores among
diffusion models at their respective scales.


\appendix
\clearpage

\section*{Appendix: contents}
\addcontentsline{toc}{section}{Appendix: contents}
\begingroup
\setlength{\parindent}{0pt}
\newcommand{\atocA}[2]{\makebox[1.6em][l]{\ref{#1}}#2\dotfill\pageref{#1}\par\smallskip}
\newcommand{\atocB}[2]{\hspace{1.6em}\makebox[2.4em][l]{\ref{#1}}#2\dotfill\pageref{#1}\par\smallskip}
\atocA{app:proofs}{Proofs}
\atocB{app:losses}{The losses of discrete diffusion}
\atocB{app:mainproofs}{Proofs of Section 3.1}
\atocA{app:verification}{Numerical verification}
\atocA{app:exp-details}{Additional experimental details}
\atocB{app:exp-loss-details}{Trainability of Rate-Equivalent Losses}
\atocB{app:exp-shift-details}{Transfer via \(x_0\) Prediction}
\atocB{app:frontier-setup}{Perplexity--Entropy Frontier}
\atocB{app:downstream-details}{Downstream Evaluation}
\atocB{app:training-sampling-algorithms}{Algorithms}
\atocB{app:inference-speed}{Inference Speed}
\atocA{app:qual}{Generated samples}
\atocB{app:qual-small}{Samples across sampling steps}
\atocB{app:qual-medium}{\textsc{UnifusionGPT}-M samples}
\endgroup

\begingroup
\setcounter{theorem}{0}
\renewcommand{\thetheorem}{\thesection.\arabic{theorem}}

\section{Proofs}
\label{app:proofs}

We first collect the discrete-diffusion losses in the notation of their sources in
\Cref{app:losses} and then prove the results of Section 3.1 in \Cref{app:mainproofs}.

\subsection{The losses of discrete diffusion}
\label{app:losses}

Throughout, $x_0\sim\pdata$ is the clean token/sequence and $x_t$ (or $z_t$) the corrupted sample.

\paragraph{SEDD.}
Let $s_\theta(x,t)_y\approx p_t(y)/p_t(x)$ and $K(a):=a(\log a-1)$.
The training term~\citep{lou2024discrete} is
\begin{equation}
\label{eq:loss-sedd}
  \mathcal{L}_{\mathrm{SEDD}}(\theta)
  =\int_0^T\E_{x_t\sim p_{t|0}(\cdot\mid x_0)}
  \sum_{y\neq x_t} Q_t(x_t,y)
  \Bigl(s_\theta(x_t,t)_y-\tfrac{p_{t|0}(y\mid x_0)}{p_{t|0}(x_t\mid x_0)}\log s_\theta(x_t,t)_y
    +K\bigl(\tfrac{p_{t|0}(y\mid x_0)}{p_{t|0}(x_t\mid x_0)}\bigr)\Bigr)\dd t .
\end{equation}
SEDD's rate-matrix convention is the transpose of ours, \(Q_t=\Rfwd_t^\top\).
For the uniform kernel, the rate matrix is symmetric, so \(Q_t=\Rfwd_t\).

\paragraph{MDLM.}
Under the absorbing kernel, MDLM uses the clean-token prediction
\(\mathbf{x}_\theta\).
The training term~\citep{sahoo2024simple} is
\begin{equation}
\label{eq:loss-mdlm}
  \mathcal{L}_{\mathrm{MDLM}}(\theta)
  =\E_{q}\int_0^1 \frac{\alpha'_t}{1-\alpha_t}
  \sum_{\ell:\,\mathbf z_t^\ell=\mathbf m}
  \log\bigl\langle \mathbf x_\theta^\ell(\mathbf z_t,t),\,\mathbf x^\ell\bigr\rangle\,\dd t\ (\ge0),
\end{equation}

\paragraph{GIDD.}
GIDD predicts \(\mathbf x_\theta\in\simplex\) and applies
\(q_t(z_t\mid u)=\mathrm{Cat}(z_t;\alpha_t\mathbf u+\beta_t\boldsymbol\pi_t)\)
to both \(u=\mathbf x\) and \(u=\mathbf x_\theta\).
The training term~\citep{vonrutte2025gidd} is
\begin{equation}
\label{eq:loss-gidd}
  \mathcal{L}_{\mathrm{GIDD}}(\theta)
  =\E_{\substack{t\sim\mathrm{Unif}(0,1)\\z_t\sim q_t(\cdot\mid x)}}
  \Bigl[w_t(z_t,x)\Bigl(\KL\bigl(q_t(\cdot\mid x)\|q_t(\cdot\mid\mathbf x_\theta)\bigr)
    +\tfrac{q_t(z_t\mid x)}{q_t(z_t\mid\mathbf x_\theta)}
    -\log\tfrac{q_t(z_t\mid x)}{q_t(z_t\mid\mathbf x_\theta)}-1\Bigr)\Bigr],
\end{equation}
with
\[
  w_t(z_t,x)
  =\frac{1}{q_t(z_t\mid x)}\mathbf z_t^\top
  \left(\beta_t\boldsymbol\pi_t'
  -\frac{\alpha_t'}{\alpha_t}\boldsymbol\pi_t\right).
\]

\paragraph{Neural CTMC.}
The conditional exit rate and jump distribution are defined as
\[
  \hat\lambda_t(j\mid x_0):=\sum_{i\neq j}\Rhat_t(j,i\mid x_0),
  \qquad
  \hat r_t(i\mid j,x_0):=
  \frac{\Rhat_t(j,i\mid x_0)}{\hat\lambda_t(j\mid x_0)}.
\]
The training term~\citep{neuralctmc2026} is
\begin{equation}
\label{eq:loss-nctmc-cond}
  \mathcal{L}_{\mathrm{NCTMC}}(\theta)
  =\int_0^T\E_{x_0,x_t}\Bigl[
    \KL^{\mathrm{Poi}}\bigl(\hat\lambda_t(x_t\mid x_0)\|\lambda_\theta(x_t,t)\bigr)
    +\hat\lambda_t(x_t\mid x_0)
    \KL^{\mathrm{Cat}}\bigl(\hat r_t(\cdot\mid x_t,x_0)\|r_\theta(\cdot\mid x_t,t)\bigr)
  \Bigr]\dd t .
\end{equation}
The two KL divergences are defined as
\[
  \KL^{\mathrm{Poi}}(\lambda\|\lambda')
  :=\lambda\log\frac{\lambda}{\lambda'}-\lambda+\lambda',
  \qquad
  \KL^{\mathrm{Cat}}(r\|r')
  :=\sum_i r(i)\log\frac{r(i)}{r'(i)}.
\]

\paragraph{M2S.}
For the uniform kernel \(\alpha_t=e^{-\sigma(t)}\), M2S~\citep{m2s2026} predicts
\(\mu^i_\theta\in\simplex\) and maps it to the score
\(s_{\theta,i}=B\mu^i_\theta\). For \(y\neq x_t^i\), define
\[
  r_i(x_0,x_t,t;y)
  :=\frac{q_{t|0}(y\mid x_0^i)}{q_{t|0}(x_t^i\mid x_0^i)},
  \qquad
  w_{t,i}(x_t^i,y)
  :=\Rfwd_t^{(i)}(y,x_t^i)=\frac{\dot\sigma(t)}{S}.
\]
The score-entropy divergence is
\(h(s,r):=s-r\log s+r\log r-r\), with \(h(s,0):=s\).
The training term is
\begin{equation}
\label{eq:loss-m2s}
  \mathcal{L}_{\mathrm{M2S}}(\theta)
  =\E_{t,x_0,x_t}\Bigl[\sum_{i=1}^{L}\sum_{y\neq x_t^i} w_{t,i}(x_t^i,y)\,
    h\bigl(s_{\theta,i}(x_t,t;y),r_i(x_0,x_t,t;y)\bigr)\Bigr].
\end{equation}

\subsection{Proofs of Section 3.1}
\label{app:mainproofs}

Below, \(x_0\) is the clean state, \(j=x_t\) the corrupted state, and \(i\neq j\) the reverse jump target.

\begin{appthmframe}
\begin{lemma}
\label{lem:exp-transform}
Fix \(i\neq j\). For almost every \(t\) such that \(q_t(j)>0\),
\[
  \E\!\left[
    \Rhat_t(j,i\mid X_0)
    \,\middle|\,
    X_t=j
  \right]
  =
  \Rhat_t(j,i).
\]
\end{lemma}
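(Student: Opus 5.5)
The plan is to expand the conditional expectation by Bayes' rule and let the conditional denominator cancel. Fix \(i\neq j\) and a time \(t\) with \(q_t(j)>0\). The posterior of the clean token given \(X_t=j\) is
\[
  \Pr(X_0=z\mid X_t=j)=\frac{\pdata(z)\,q_{t|0}(j\mid z)}{q_t(j)},
\]
which is supported on the set \(Z_j:=\{z:\pdata(z)>0,\ q_{t|0}(j\mid z)>0\}\). On \(Z_j\) the conditional rate is given by the ratio formula, with no zero-denominator convention needed. Clean tokens outside \(Z_j\) carry zero posterior mass, so the convention \(\Rhat_t(j,i\mid z)=0\) for them does not affect the expectation.

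The key computation is then
\[
  \E\!\left[\Rhat_t(j,i\mid X_0)\,\middle|\,X_t=j\right]
  =\sum_{z\in Z_j}\frac{\pdata(z)\,q_{t|0}(j\mid z)}{q_t(j)}\,
  \Rfwd_t(i,j)\frac{q_{t|0}(i\mid z)}{q_{t|0}(j\mid z)}
  =\frac{\Rfwd_t(i,j)}{q_t(j)}\sum_{z\in Z_j}\pdata(z)\,q_{t|0}(i\mid z).
\]

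To finish, the restricted sum must be identified with \(q_t(i)=\sum_z\pdata(z)q_{t|0}(i\mid z)\). This is the only real obstacle: tokens \(z\) with \(\pdata(z)>0\) and \(q_{t|0}(j\mid z)=0\) but \(q_{t|0}(i\mid z)>0\) would contribute to \(q_t(i)\) and not to the sum. I will show that such terms vanish once multiplied by \(\Rfwd_t(i,j)\), for almost every \(t\).

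\begin{itemize}
\item \emph{Interpolating form with \(0<t<T\).} Here \(q_{t|0}(j\mid z)=\alpha_t\delta_{zj}+\beta_t\pi(j)\). If this vanishes for some \(z\), then \(\pi(j)=0\) (because \(\beta_t>0\) when \(t>0\), apart from degenerate schedules) and \(z\neq j\). But then \(\Rfwd_t(i,j)\propto\pi(j)=0\) for \(i\neq j\), since in the interpolating form the generator is \(\Rfwd_t=-\tfrac{\dot\alpha_t}{\alpha_t}(\1\pi^\top-I)\), up to the diagonal. So both sides are \(0\).
\item \emph{General case.} Use the Kolmogorov equation \eqref{eq:kernel-ode}: \(\partial_t q_{t|0}(j\mid z)=\sum_k q_{t|0}(k\mid z)\Rfwd_t(k,j)\). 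If \(q_{t|0}(j\mid z)=0\) on a set of times of positive measure, then at almost every such time (a density point) the derivative is \(0\). Every term \(q_{t|0}(k\mid z)\Rfwd_t(k,j)\) with \(k\neq j\) is nonnegative, so each must vanish; in particular \(q_{t|0}(i\mid z)\Rfwd_t(i,j)=0\).
\end{itemize}

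The set of times where \(q_{t|0}(j\mid z)=0\) but the derivative identity fails is null, which explains the "almost every \(t\)" qualifier in the statement. With the missing terms shown to vanish, the restricted sum equals \(\Rfwd_t(i,j)\,q_t(i)\), and the expression above becomes \(\Rfwd_t(i,j)\,q_t(i)/q_t(j)=\Rhat_t(j,i)\). The argument uses finite sums throughout, so no interchange issues arise.
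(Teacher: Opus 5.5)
Your proposal is correct and follows the paper's proof essentially step for step. It uses Bayes' rule, cancels \(q_{t|0}(j\mid z)\) on the posterior support, and applies the Kolmogorov-equation argument: a nonnegative absolutely continuous function has zero derivative at almost every time where it vanishes, which forces \(q_{t|0}(i\mid z)\Rfwd_t(i,j)=0\) and lets the restricted sum be extended to \(q_t(i)\). Your interpolating-kernel bullet is valid but unnecessary, since your general argument already covers that case.
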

\end{appthmframe}
\begin{proof}
Bayes' rule gives
\[
\begin{aligned}
  \E\!\left[\Rhat_t(j,i\mid X_0)\,\middle|\,X_t=j\right]
  &=
  \sum_{z:\,q_{t|0}(j\mid z)>0}
  \frac{\pdata(z)q_{t|0}(j\mid z)}{q_t(j)}
  \Rfwd_t(i,j)\frac{q_{t|0}(i\mid z)}{q_{t|0}(j\mid z)}\\
  &=
  \frac{\Rfwd_t(i,j)}{q_t(j)}
  \sum_{z:\,q_{t|0}(j\mid z)>0}
  \pdata(z)q_{t|0}(i\mid z).
\end{aligned}
\]
If \(q_{t|0}(j\mid z)=0\), then, for almost every \(t\),
\[
  0
  =
  \frac{\dd}{\dd t}q_{t|0}(j\mid z)
  =
  \sum_{k\neq j}q_{t|0}(k\mid z)\Rfwd_t(k,j).
\]
Since \(q_{t|0}(j\mid z)\ge0\) is absolutely continuous, its derivative is zero for almost every \(t\)
at which \(q_{t|0}(j\mid z)=0\). Every term in the sum is nonnegative, so
\[
  q_{t|0}(i\mid z)\Rfwd_t(i,j)=0.
\]
Thus the sum can be extended to all \(z\), and
\[
  \E\!\left[\Rhat_t(j,i\mid X_0)\,\middle|\,X_t=j\right]
  =
  \frac{\Rfwd_t(i,j)}{q_t(j)}
  \sum_z\pdata(z)q_{t|0}(i\mid z)
  =
  \Rfwd_t(i,j)\frac{q_t(i)}{q_t(j)}
  =
  \Rhat_t(j,i).
\]
\end{proof}

\begin{appthmbox}{\Cref*{prop:master}}
For every \(x_0\) with \(\pdata(x_0)>0\), assume that
\begin{enumerate}
\renewcommand{\labelenumi}{\textup{(\arabic{enumi})}}
\setlength{\itemsep}{2pt}
\setlength{\parsep}{0pt}
\item \(p_{\mathrm{prior}}(k)>0\) whenever \(q_{T|0}(k\mid x_0)>0\), for every \(k\);
\item for every \(i\neq j\) and almost every \(t\in(0,T)\),
\(\Rhat^\theta_t(j,i)>0\) whenever
\(q_{t|0}(j\mid x_0)>0\) and \(\Rhat_t(j,i\mid x_0)>0\);
\item the reverse model starts from \(X_T\sim p_{\mathrm{prior}}\) and has marginal
\(\rho^\theta_t\); each entry of
\(\Rhat_t(\cdot,\cdot\mid x_0)\) and \(\Rhat^\theta_t\) is integrable on every
\([\delta,T-\varepsilon]\subset(0,T)\), with the endpoint limits
\(\lim_{t\downarrow0}\KL(q_{t|0}(\cdot\mid x_0)\|
\rho^\theta_t)=-\log p_\theta(x_0)\) and
\(\lim_{t\uparrow T}\KL(q_{t|0}(\cdot\mid x_0)\|\rho^\theta_t)
=\KL(q_{T|0}(\cdot\mid x_0)\|p_{\mathrm{prior}})\).
\end{enumerate}
Then
\[
  \E_{\pdata}[-\log p_\theta(X_0)]
  \le
  \master(\theta)+C_{\mathrm{ELBO}},
  \qquad
  C_{\mathrm{ELBO}}
  :=
  \E_{x_0\sim\pdata}
  \KL\!\left(
    q_{T|0}(\cdot\mid x_0)
    \,\middle\|\,
    p_{\mathrm{prior}}
  \right).
\]

The master objective has the following conditional and marginal rate representations:
\begin{align*}
  \master(\theta)
  &=
  \int_0^T
  \sum_j q_t(j)
  \sum_{i\neq j}
  \E\!\left[
    f\bigl(
      \Rhat_t(j,i\mid X_0),
      \Rhat^\theta_t(j,i)
    \bigr)
    \,\middle|\,
    X_t=j
  \right]
  \dd t
  \tag{\ref{eq:master-conditional}}\\
  &=
  \int_0^T
  \sum_j q_t(j)
  \sum_{i\neq j}
  f\bigl(
    \Rhat_t(j,i),
    \Rhat^\theta_t(j,i)
  \bigr)
  \dd t
  +C_{\mathrm{gap}}
  \tag{\ref{eq:master-marginal}}
\end{align*}
Here $C_{\mathrm{gap}}\ge0$ is independent of \(\theta\).
\end{appthmbox}

\begin{remark}
\label{rem:terminal-time}
In \eqref{eq:master-conditional}, the sum over \(j\) is restricted to \(q_t(j)>0\).
For a nonnegative integrand singular at an endpoint, we use
\(\int_0^T g_t\dd t:=\lim_{\delta,\varepsilon\downarrow0}
\int_\delta^{T-\varepsilon}g_t\dd t\), possibly \(+\infty\).
\end{remark}

\begin{proof}
\proofcase{Step 1: likelihood bound}
Fix \(x_0\), write \(q^{x_0}_t:=q_{t|0}(\cdot\mid x_0)\), and let
\(\rho^\theta_t\) be the model marginal at time \(t\). Then
\[
  q^{x_0}_0=\delta_{x_0},\qquad q^{x_0}_T=q_{T|0}(\cdot\mid x_0),
  \qquad
  \rho^\theta_0=p_\theta,\qquad \rho^\theta_T=p_{\mathrm{prior}}.
\]
Set
\[
  H_t:=\KL\!\left(q^{x_0}_t\,\middle\|\,\rho^\theta_t\right).
\]
For \(0<t<T\), the two marginals satisfy
\[
  \frac{\dd}{\dd t}q^{x_0}_t=-q^{x_0}_t\Rhat_t(\cdot,\cdot\mid x_0),
  \qquad
  \frac{\dd}{\dd t}\rho^\theta_t=-\rho^\theta_t\Rhat^\theta_t.
\]
The terminal KL limit is finite by the prior support condition, so
\(q^{x_0}_t(i)>0\Rightarrow\rho^\theta_t(i)>0\) for \(t\) sufficiently close to \(T\).
Local integrability and the rate support condition preserve this implication backward on every
truncated interval. Hence it holds for all \(0<t<T\).
Fix \(t\), set \(q=q^{x_0}_t\), \(\rho=\rho^\theta_t\),
\[
  a_{ji}:=\Rhat_t(j,i\mid x_0),\qquad
  b_{ji}:=\Rhat^\theta_t(j,i),\qquad
  u_{ji}:=\frac{q(i)\rho(j)}{\rho(i)q(j)}.
\]
For positive entries, direct differentiation gives
\[
  -\frac{\dd}{\dd t}\KL(q\|\rho)
  =
  \sum_j q(j)\sum_{i\neq j}
  \bigl\{a_{ji}\log u_{ji}-b_{ji}u_{ji}+b_{ji}\bigr\}
  \le
  \sum_j q(j)\sum_{i\neq j}f(a_{ji},b_{ji}),
\]
because
\[
  a\log u-bu+b\le a\log\frac{a}{b}-a+b=f(a,b).
\]
The boundary cases follow from the conventions in \Cref{def:master}. Integrating over
\([\delta,T-\varepsilon]\) gives
\[
  H_\delta
  \le
  H_{T-\varepsilon}
  +
  \int_\delta^{T-\varepsilon}
  \sum_j q_{t|0}(j\mid x_0)\sum_{i\neq j}
  f\bigl(\Rhat_t(j,i\mid x_0),\Rhat^\theta_t(j,i)\bigr)\dd t .
\]
The endpoint assumptions and \Cref{rem:terminal-time} then give
\[
  -\log p_\theta(x_0)
  \le
  \KL\!\left(q_{T|0}(\cdot\mid x_0)\,\middle\|\,p_{\mathrm{prior}}\right)
  +\int_0^T\sum_j q_{t|0}(j\mid x_0)\sum_{i\neq j}
    f\bigl(\Rhat_t(j,i\mid x_0),\Rhat^\theta_t(j,i)\bigr)\dd t .
\]
Averaging over \(x_0\sim\pdata\) gives the stated likelihood bound.

Since
\[
  \Pr(X_0=x_0,X_t=j)=\pdata(x_0)q_{t|0}(j\mid x_0),
\]
we have
\begin{align}
  \master(\theta)
  &=\int_0^T\sum_{x_0}\pdata(x_0)\sum_jq_{t|0}(j\mid x_0)
    \sum_{i\neq j}
    f\bigl(\Rhat_t(j,i\mid x_0),\Rhat^\theta_t(j,i)\bigr)\dd t \notag\\
  &=\int_0^T\sum_j q_t(j)\sum_{i\neq j}
    \E\!\left[
      f\bigl(\Rhat_t(j,i\mid X_0),\Rhat^\theta_t(j,i)\bigr)
      \,\middle|\, X_t=j
    \right]\dd t ,
    \label{eq:proof-master-conditional-form}
\end{align}

\proofcase{Step 2: conditional-to-marginal Bregman identity}
Fix \((t,j,i)\) with \(q_t(j)>0\), and write
\[
  R:=\Rhat_t(j,i\mid X_0),\qquad
  \bar R:=\E[R\mid X_t=j],\qquad
  c:=\Rhat^\theta_t(j,i).
\]
By \Cref{lem:exp-transform}, \(\bar R=\Rhat_t(j,i)\). If \(c=0\), the support condition gives
\[
  \Pr(R=0\mid X_t=j)=1,\qquad
  \bar R=0,\qquad
  \E[f(R,c)\mid X_t=j]=f(\bar R,c)=0.
\]
For \(c>0\),
\[
  f(r,c)=r\log r-r-r\log c+c,
\]
and therefore
\begin{align}
  \E[f(R,c)\mid X_t=j]
  &=\E[R\log R\mid X_t=j]-\bar R-\bar R\log c+c,\\
  f(\bar R,c)
  &=\bar R\log\bar R-\bar R-\bar R\log c+c.
\end{align}
Thus
\begin{equation}
\label{eq:proof-master-gap-pointwise}
  \E[f(R,c)\mid X_t=j]
  =f(\bar R,c)+\Delta_t(j,i),
  \qquad
  \Delta_t(j,i):=\E[R\log R\mid X_t=j]-\bar R\log\bar R,
\end{equation}
where \(0\log0=0\). Jensen's inequality gives \(\Delta_t(j,i)\ge0\), and
\(\Delta_t(j,i)\) does not depend on \(c\) or \(\theta\).
Multiplying by $q_t(j)$, summing, and integrating gives
\begin{align}
  \master(\theta)
  &=\int_0^T\sum_j q_t(j)\sum_{i\neq j}
    \E\!\left[f\bigl(\Rhat_t(j,i\mid X_0),\Rhat^\theta_t(j,i)\bigr)\mid X_t=j\right]\dd t\\
  &=\int_0^T\sum_j q_t(j)\sum_{i\neq j}
    f\bigl(\Rhat_t(j,i),\Rhat^\theta_t(j,i)\bigr)\dd t+C_{\mathrm{gap}},
\end{align}
where
\[
  C_{\mathrm{gap}}
  :=\int_0^T\sum_j q_t(j)\sum_{i\neq j}\Delta_t(j,i)\dd t\ge0 .
\]
This proves \eqref{eq:master-marginal}.
\end{proof}

\begin{appthmbox}{\Cref*{thm:five-faces}}
The prediction heads introduced in \Cref{sec:prelim} induce off-diagonal model reverse rates through the
following mappings:
\begin{center}
\footnotesize
\setlength{\tabcolsep}{3pt}
\begin{tabular*}{\linewidth}{@{\extracolsep{\fill}}lll@{}}
\toprule
Method & Parameterization & Relation\\
\midrule
SEDD~\citep{lou2024discrete}
  & Concrete score \(s_\theta\)
  & \(\Rhat^\theta_t(j,i)=\Rfwd_t(i,j)s_\theta(j,t)_i\)\\
MDLM/GIDD~\citep{sahoo2024simple,shi2024simplified,vonrutte2025gidd}
  & Clean-token prediction \(x_\theta\)
  & \(\displaystyle \Rhat_t(j,i\mid x_\theta)
  =\Rfwd_t(i,j)\frac{q_t(i\mid x_\theta)}{q_t(j\mid x_\theta)}\)\\
M2S~\citep{m2s2026}
  & Mean-to-score \(\mu_\theta\)
  & \(\Rhat^\theta_t(j,i)=\Rfwd_t(i,j)(B_{j,t}\mu_\theta)_i\)\\
Neural CTMC~\citep{neuralctmc2026}
  & Exit rate / jump \((\lambda_\theta,r_\theta)\)
  & \(\Rhat^\theta_t(j,i)=\lambda_\theta(j,t)r_\theta(i\mid j,t)\)\\
\bottomrule
\end{tabular*}
\end{center}
For the clean-token row with \(\Rfwd_t(i,x_t)>0\) for some \(i\neq x_t\), assume
\(q_t(x_t\mid x_\theta)>0\). For \(q_{t|0}(x_t\mid x_0)>0\), substituting any row into
\Cref{def:master} gives the corresponding published
rate-space loss exactly. Thus these losses are the master objective written in different reverse-rate
coordinates.
\end{appthmbox}

\begin{proof}
\proofcase{Setup}\par\nobreak\noindent
Fix $(x_0,j,t)$ with \(q_{t|0}(j\mid x_0)>0\). If \(\Rfwd_t(i,j)=0\) for all \(i\neq j\), set the
induced reverse-rate row to zero. Otherwise, for the clean-token row, \(q_t(j\mid x_\theta)>0\) by
assumption and the displayed ratio is well defined.
For each reverse target \(i\neq j\), the true conditional reverse branch is
\(\Rhat_t(j,i\mid x_0)=\Rfwd_t(i,j)q_{t|0}(i\mid x_0)/q_{t|0}(j\mid x_0)\).

\proofcase{SEDD}\par\nobreak\noindent
Using \(Q_t(j,i)=\Rfwd_t(i,j)\) from \Cref{app:losses}, the corresponding term in the master objective is
\begin{align}
  f\bigl(\Rhat_t(j,i\mid x_0),\Rhat^\theta_t(j,i)\bigr)
  &=f\!\left(
    Q_t(j,i)\frac{q_{t|0}(i\mid x_0)}{q_{t|0}(j\mid x_0)},
    Q_t(j,i)s_\theta(j,t)_i
  \right)\notag\\
  &=Q_t(j,i)\tfrac{q_{t|0}(i\mid x_0)}{q_{t|0}(j\mid x_0)}
    \log
    \frac{
      Q_t(j,i)\tfrac{q_{t|0}(i\mid x_0)}{q_{t|0}(j\mid x_0)}
    }{
      Q_t(j,i)s_\theta(j,t)_i
    }
    -Q_t(j,i)\tfrac{q_{t|0}(i\mid x_0)}{q_{t|0}(j\mid x_0)}
    +Q_t(j,i)s_\theta(j,t)_i\notag\\
  &=Q_t(j,i)
  \left\{
    \frac{q_{t|0}(i\mid x_0)}{q_{t|0}(j\mid x_0)}
    \log
    \frac{
      \frac{q_{t|0}(i\mid x_0)}{q_{t|0}(j\mid x_0)}
    }{
      s_\theta(j,t)_i
    }
    -\frac{q_{t|0}(i\mid x_0)}{q_{t|0}(j\mid x_0)}
    +s_\theta(j,t)_i
  \right\}\notag\\
  &=Q_t(j,i)
  \left\{
    s_\theta(j,t)_i
    -\frac{q_{t|0}(i\mid x_0)}{q_{t|0}(j\mid x_0)}
      \log s_\theta(j,t)_i
    +K\!\left(
      \frac{q_{t|0}(i\mid x_0)}{q_{t|0}(j\mid x_0)}
    \right)
  \right\}.
\end{align}
Relabeling \(j=x_t\) and \(i=y\), summing over \(y\neq x_t\), and averaging over
\(x_t\sim q_{t|0}(\cdot\mid x_0)\) gives \eqref{eq:loss-sedd}. Terms with \(Q_t(x_t,y)=0\) contribute
zero to both \(\master\) and \(\mathcal{L}_{\mathrm{SEDD}}\).

\proofcase{GIDD}\par\nobreak\noindent
For the interpolating kernel and \(i\neq j\), the weight in \Cref{app:losses} satisfies
\[
  w_t(j,x_0)
  =
  \frac{\Rfwd_t(i,j)}{q_{t|0}(j\mid x_0)}
  =
  \frac{
    \beta_t\boldsymbol\pi_t'(j)
    -\frac{\alpha_t'}{\alpha_t}\boldsymbol\pi_t(j)
  }{
    q_{t|0}(j\mid x_0)
  }.
\]
Using \(q_t(\cdot\mid x_\theta)\) from \Cref{app:losses}, the true and model reverse rates are
\[
  \Rhat_t(j,i\mid x_0)=\Rfwd_t(i,j)\frac{q_{t|0}(i\mid x_0)}{q_{t|0}(j\mid x_0)},\qquad
  \Rhat^\theta_t(j,i)=\Rfwd_t(i,j)\frac{q_t(i\mid x_\theta)}{q_t(j\mid x_\theta)} .
\]
For \(y\neq j\), substituting these rates into \(f\) gives
\begin{align}
&f\!\left(
  \Rfwd_t(y,j)\frac{q_{t|0}(y\mid x_0)}{q_{t|0}(j\mid x_0)},
  \Rfwd_t(y,j)\frac{q_t(y\mid x_\theta)}{q_t(j\mid x_\theta)}
\right)\notag\\
&\qquad=
w_t(j,x_0)
\left[
  q_{t|0}(y\mid x_0)
  \log\frac{q_{t|0}(y\mid x_0)\,q_t(j\mid x_\theta)}
           {q_{t|0}(j\mid x_0)\,q_t(y\mid x_\theta)}
  -q_{t|0}(y\mid x_0)
  +q_{t|0}(j\mid x_0)\frac{q_t(y\mid x_\theta)}{q_t(j\mid x_\theta)}
\right].
\end{align}
Using
\(\sum_{y\neq j}q_{t|0}(y\mid x_0)=1-q_{t|0}(j\mid x_0)\) and
\(\sum_{y\neq j}q_t(y\mid x_\theta)=1-q_t(j\mid x_\theta)\), summing over \(y\neq j\) gives
\begin{equation}
  \label{eq:gidd-face}
  \ratekl_t
  =
  w_t(j,x_0)
  \left[
    \KL\!\left(q_{t|0}(\cdot\mid x_0)\,\middle\|\,q_t(\cdot\mid x_\theta)\right)
    +\frac{q_{t|0}(j\mid x_0)}{q_t(j\mid x_\theta)}
    -\log\frac{q_{t|0}(j\mid x_0)}{q_t(j\mid x_\theta)}-1
  \right].
\end{equation}
Setting \(j=z_t\) and averaging over \(t\), \(x_0\), and \(z_t\) gives \eqref{eq:loss-gidd}.

\proofcase{MDLM}\par\nobreak\noindent
Under the absorbing kernel \(\pi=e_m\), a clean token can only stay unchanged or jump to \(m\), while
\(m\) never changes. Therefore, the reverse process can jump only from the masked state \(m\); an
unmasked state has zero reverse rates and contributes no loss. For \(i\in\mathcal V\),
\[
  q_{t|0}(i\mid x_0)=\alpha_t\mathbf 1\{i=x_0\},
  \qquad
  q_{t|0}(m\mid x_0)=1-\alpha_t,
  \qquad
  \Rfwd_t(i,m)=-\frac{\alpha_t'}{\alpha_t}.
\]
Since \(x_\theta(\cdot\mid m,t)\) is a distribution on \(\mathcal V\), the same kernel gives
\(q_t(i\mid x_\theta)=\alpha_t x_\theta(i\mid m,t)\) and
\(q_t(m\mid x_\theta)=1-\alpha_t\).
Thus, at \(j=m\), the true and model reverse rates are
\[
\begin{aligned}
  \Rhat_t(m,i\mid x_0)
  &=\Rfwd_t(i,m)
    \frac{q_{t|0}(i\mid x_0)}{q_{t|0}(m\mid x_0)}
   =-\frac{\alpha_t'}{1-\alpha_t}\mathbf 1\{i=x_0\},\\
  \Rhat^\theta_t(m,i)
  &=\Rfwd_t(i,m)
    \frac{q_t(i\mid x_\theta)}{q_t(m\mid x_\theta)}
   =-\frac{\alpha_t'}{1-\alpha_t}x_\theta(i\mid m,t).
\end{aligned}
\]
Substituting these rates into \(f\) and summing over \(i\in\mathcal V\) gives
\begin{align*}
  \ratekl_t
  &=
  f\!\left(
    -\frac{\alpha_t'}{1-\alpha_t},
    -\frac{\alpha_t'}{1-\alpha_t}x_\theta(x_0\mid m,t)
  \right)
  +\sum_{\substack{i\in\mathcal V\\i\neq x_0}}
  f\!\left(
    0,
    -\frac{\alpha_t'}{1-\alpha_t}x_\theta(i\mid m,t)
  \right)\\
  &=
  -\frac{\alpha_t'}{1-\alpha_t}
  \left[
    \log\frac{1}{x_\theta(x_0\mid m,t)}
    -1+x_\theta(x_0\mid m,t)
    +\sum_{\substack{i\in\mathcal V\\i\neq x_0}}x_\theta(i\mid m,t)
  \right]\\
  &=
  -\frac{\alpha_t'}{1-\alpha_t}
  \bigl[-\log x_\theta(x_0\mid m,t)\bigr],
\end{align*}
where the last equality uses \(\sum_{i\in\mathcal V}x_\theta(i\mid m,t)=1\). For a sequence, at each
masked position \(\ell\),
\[
  x_\theta(x_0\mid m,t)
  =
  \bigl\langle \mathbf x_\theta^\ell(\mathbf z_t,t),\mathbf x^\ell\bigr\rangle .
\]
Summing over masked positions and integrating over \(t\) gives \eqref{eq:loss-mdlm}.

\proofcase{M2S}\par\nobreak\noindent
At position \(\ell\), let \(j=x_t^\ell\) be the current token and \(y\neq j\) a candidate reverse
jump. M2S defines
\begin{align*}
  s_{\theta,\ell}(x_t,t;y)
  &=(B_{j,t}\mu_\theta^\ell)_y
    =\sum_{z\in\mathcal V}[\mu_\theta^\ell(x_t,t)]_z
      \frac{q_{t|0}(y\mid z)}{q_{t|0}(j\mid z)},\\
  r_\ell(x_0,x_t,t;y)
  &=\frac{q_{t|0}(y\mid x_0^\ell)}{q_{t|0}(j\mid x_0^\ell)},\qquad
  w_{t,\ell}(j,y)=\Rfwd_t^{(\ell)}(y,j).
\end{align*}
The true and model reverse rates for the jump \(j\to y\) are therefore
\[
  \Rhat_t(j,y\mid x_0)=w_{t,\ell}(j,y)r_\ell(x_0,x_t,t;y),\qquad
  \Rhat^\theta_t(j,y)=w_{t,\ell}(j,y)s_{\theta,\ell}(x_t,t;y).
\]
Substituting these rates into \(f(r,c)=r\log(r/c)-r+c\) gives
\begin{align*}
  f\bigl(\Rhat_t(j,y\mid x_0),\Rhat^\theta_t(j,y)\bigr)
  &=
  w_{t,\ell}(j,y)
  \left[
    s_{\theta,\ell}
    -r_\ell\log s_{\theta,\ell}
    +r_\ell\log r_\ell-r_\ell
  \right]\\
  &=w_{t,\ell}(j,y)\,
    h\bigl(s_{\theta,\ell}(x_t,t;y),r_\ell(x_0,x_t,t;y)\bigr).
\end{align*}
Summing over \(y\neq x_t^\ell\) and all positions \(\ell\), then averaging over
\((t,x_0,x_t)\), gives \eqref{eq:loss-m2s}.

\proofcase{Neural CTMC}\par\nobreak\noindent
At the current state \(j\), Neural CTMC predicts a total jump rate \(\lambda_\theta(j,t)\) and
probabilities \(r_\theta(i\mid j,t)\) for the targets \(i\neq j\). Thus
\[
  \Rhat^\theta_t(j,i)=\lambda_\theta(j,t)r_\theta(i\mid j,t),\qquad
  \sum_{i\neq j}r_\theta(i\mid j,t)=1 .
\]
The true total jump rate and the corresponding jump probabilities are
\[
  \hat\lambda_t(j\mid x_0)
  =
  \sum_{i\neq j}\Rhat_t(j,i\mid x_0)
  =
  \sum_{i\neq j}\Rfwd_t(i,j)\frac{q_{t|0}(i\mid x_0)}{q_{t|0}(j\mid x_0)},
  \qquad
  \hat r_t(i\mid j,x_0)
  =
  \frac{\Rhat_t(j,i\mid x_0)}{\hat\lambda_t(j\mid x_0)} .
\]
Therefore \(\Rhat_t(j,i\mid x_0)=\hat\lambda_t(j\mid x_0)\hat r_t(i\mid j,x_0)\) and
\(\sum_{i\neq j}\hat r_t(i\mid j,x_0)=1\).
Substituting the true and model rates into \(f\) and using that both jump distributions sum to one gives
\begin{align*}
  \ratekl_t
  &=\sum_{i\neq j}
    f\!\left(
      \hat\lambda_t(j\mid x_0)\hat r_t(i\mid j,x_0),
      \lambda_\theta(j,t)r_\theta(i\mid j,t)
    \right)\\
  &=\hat\lambda_t(j\mid x_0)
      \log\frac{\hat\lambda_t(j\mid x_0)}{\lambda_\theta(j,t)}
    -\hat\lambda_t(j\mid x_0)
    +\lambda_\theta(j,t)
    +\hat\lambda_t(j\mid x_0)
      \sum_{i\neq j}\hat r_t(i\mid j,x_0)
      \log\frac{\hat r_t(i\mid j,x_0)}{r_\theta(i\mid j,t)}\\
  &=\KL^{\mathrm{Poi}}\!\left(
      \hat\lambda_t(j\mid x_0)\middle\|\lambda_\theta(j,t)
    \right)
    +\hat\lambda_t(j\mid x_0)
    \KL^{\mathrm{Cat}}\!\left(
      \hat r_t(\cdot\mid j,x_0)\middle\|r_\theta(\cdot\mid j,t)
    \right).
\end{align*}
If \(\hat\lambda_t(j\mid x_0)=0\), every true edge rate is zero and
\[
  \ratekl_t
  =\sum_{i\neq j}f\bigl(0,\lambda_\theta(j,t)r_\theta(i\mid j,t)\bigr)
  =\lambda_\theta(j,t)
  =\KL^{\mathrm{Poi}}\!\bigl(0\|\lambda_\theta(j,t)\bigr);
\]
the categorical term is zero because it is multiplied by \(\hat\lambda_t(j\mid x_0)\).
Averaging over \((X_0,X_t,t)\) gives \eqref{eq:loss-nctmc-cond}.
\end{proof}

\begin{appthmbox}{\Cref*{cor:switching}}
Under the same forward CTMC and noise schedule, expressing each parameterization through its induced
off-diagonal model reverse rate \(\Rhat^\theta\) yields the following equivalent conditional losses:
\[
  L_{\mathrm{SEDD}}(\Rhat^\theta)
  =L_{\mathrm{GIDD}}(\Rhat^\theta)
  =L_{\mathrm{M2S}}(\Rhat^\theta)
  =L_{\mathrm{NCTMC}}(\Rhat^\theta)
  =\master(\Rhat^\theta).
\]
Consequently, their gradients with respect to the reverse rates are identical. Over unconstrained reverse
rates, all four losses are minimized at
\(\Rhat^\theta_t(j,i)=\Rhat_t(j,i)\) on states with \(q_t(j)>0\).
\end{appthmbox}

\begin{proof}
By \Cref{thm:five-faces}, substituting any of these heads into the conditional objective
\Cref{def:master} gives the same functional of the induced off-diagonal reverse rate:
\[
  L_{\mathrm{SEDD}}(\Rhat^\theta)
  =L_{\mathrm{GIDD}}(\Rhat^\theta)
  =L_{\mathrm{M2S}}(\Rhat^\theta)
  =L_{\mathrm{NCTMC}}(\Rhat^\theta)
  =\master(\Rhat^\theta).
\]
Thus the values are identical. For an edge with \(q_t(j)>0\) and
\(\Rhat^\theta_t(j,i)>0\), compute the functional derivative directly from the conditional form of
\(\master\):
\[
\begin{aligned}
  \frac{\delta \master}{\delta \Rhat^\theta_t(j,i)}
  &=
  q_t(j)\,
  \E\!\left[
    \partial_c f\bigl(\Rhat_t(j,i\mid X_0),\Rhat^\theta_t(j,i)\bigr)
    \,\middle|\, X_t=j
  \right] \\
  &=
  q_t(j)\,
  \E\!\left[
    1-\frac{\Rhat_t(j,i\mid X_0)}{\Rhat^\theta_t(j,i)}
    \,\middle|\, X_t=j
  \right] \\
  &=
  q_t(j)\left(1-\frac{\Rhat_t(j,i)}{\Rhat^\theta_t(j,i)}\right).
\end{aligned}
\]
The last equality uses \(\E[\Rhat_t(j,i\mid X_0)\mid X_t=j]=\Rhat_t(j,i)\)
from \Cref{lem:exp-transform}. The corresponding second variation on this edge is
\[
  \frac{\delta^2 \master}
       {\delta\bigl(\Rhat^\theta_t(j,i)\bigr)^2}
  =q_t(j)\frac{\Rhat_t(j,i)}{\bigl(\Rhat^\theta_t(j,i)\bigr)^2}.
\]
It is strictly positive whenever \(q_t(j)\Rhat_t(j,i)>0\). Thus the shared rate-space functional is
strictly convex on every active true edge, and its unique minimizer there is
\(\Rhat^\theta_t(j,i)=\Rhat_t(j,i)\) for \(i\neq j\).
If \(\Rhat_t(j,i)=0\), then
\(f\bigl(0,\Rhat^\theta_t(j,i)\bigr)=\Rhat^\theta_t(j,i)\), so the minimum is attained at
\(\Rhat^\theta_t(j,i)=0\).
\end{proof}

\begin{appthmbox}{\Cref*{thm:conversion}}
Using the notation of Section~2, let \(x_\theta(\cdot\mid j,t)\in\Delta(\mathcal V)\) be an
\(x_0\) prediction for a state \(j\) that is reachable at time \(t\) from every
\(z\in\mathcal V\). Define
\[
  q_t(k\mid x_\theta):=\sum_{z\in\mathcal V}x_\theta(z)q_{t|0}(k\mid z),\qquad
  \Rhat_t(j,i\mid x_\theta):=
  \Rfwd_t(i,j)\frac{q_t(i\mid x_\theta)}{q_t(j\mid x_\theta)}\qquad (i\neq j).
\]
For a nonzero induced reverse-rate row, the corresponding SEDD, M2S, and Neural CTMC outputs are
\[
\begin{aligned}
  s_\theta(j,t)_i
  &=\frac{q_t(i\mid x_\theta)}{q_t(j\mid x_\theta)},\\
  \mu_\theta(z)
  &=\frac{x_\theta(z)q_{t|0}(j\mid z)}{q_t(j\mid x_\theta)},
  & (B_{j,t}\mu_\theta)_i
  &=\frac{q_t(i\mid x_\theta)}{q_t(j\mid x_\theta)},\\
  \lambda_\theta(j,t)
  &=\sum_{k\neq j}\Rhat_t(j,k\mid x_\theta),
  & r_\theta(i\mid j,t)
  &=\frac{\Rhat_t(j,i\mid x_\theta)}{\lambda_\theta(j,t)} .
\end{aligned}
\]
\end{appthmbox}

\begin{proof}
Fix \(j\) and \(t\), and abbreviate \(x_\theta(z\mid j,t)\) by \(x_\theta(z)\).
Reachability gives \(q_{t|0}(j\mid z)>0\) for every \(z\), so
\(q_t(j\mid x_\theta)=\sum_zx_\theta(z)q_{t|0}(j\mid z)>0\).
Thus all kernel ratios below are well defined.

\proofcase{SEDD}\par\nobreak\noindent
SEDD represents the rate on \(j\to i\) as \(\Rfwd_t(i,j)s_\theta(j,t)_i\), whereas the
clean-token prediction gives
\(\Rhat_t(j,i\mid x_\theta)=\Rfwd_t(i,j)q_t(i\mid x_\theta)/q_t(j\mid x_\theta)\).
Matching the two expressions gives the stated choice
\[
  s_\theta(j,t)_i
  =\frac{q_t(i\mid x_\theta)}{q_t(j\mid x_\theta)}.
\]

\proofcase{M2S}\par\nobreak\noindent
To represent the clean-token-induced score with the M2S head, choose \(\mu_\theta\) so that
\((B_{j,t}\mu_\theta)_i=q_t(i\mid x_\theta)/q_t(j\mid x_\theta)\).
Expanding both sides gives
\[
  \sum_z\frac{\mu_\theta(z)}{q_{t|0}(j\mid z)}q_{t|0}(i\mid z)
  =
  \sum_z\frac{x_\theta(z)}{q_t(j\mid x_\theta)}q_{t|0}(i\mid z).
\]
Making the two sums equal term by term gives
\(\mu_\theta(z)/q_{t|0}(j\mid z)=x_\theta(z)/q_t(j\mid x_\theta)\), hence
\[
  \mu_\theta(z)
  =\frac{x_\theta(z)q_{t|0}(j\mid z)}{q_t(j\mid x_\theta)}.
\]
Its numerator sums to \(q_t(j\mid x_\theta)\), so \(\mu_\theta\) is a probability distribution.
Substituting it into \(B_{j,t}\) gives
\[
\begin{aligned}
  (B_{j,t}\mu_\theta)_i
  &=\sum_z
    \frac{x_\theta(z)q_{t|0}(j\mid z)}{q_t(j\mid x_\theta)}
    \frac{q_{t|0}(i\mid z)}{q_{t|0}(j\mid z)}\\
  &=\frac{\sum_zx_\theta(z)q_{t|0}(i\mid z)}{q_t(j\mid x_\theta)}
   =\frac{q_t(i\mid x_\theta)}{q_t(j\mid x_\theta)}.
\end{aligned}
\]
Here \(q_{t|0}(j\mid z)>0\) justifies the cancellation. Thus the M2S score equals the
SEDD score above and induces \(\Rhat_t(j,i\mid x_\theta)\).

\proofcase{Neural CTMC}\par\nobreak\noindent
Neural CTMC writes
\(\Rhat_t(j,i\mid x_\theta)=\lambda_\theta(j,t)r_\theta(i\mid j,t)\), where the jump
probabilities sum to one. Summing over all targets gives
\[
  \lambda_\theta(j,t)
  =\lambda_\theta(j,t)\sum_{k\neq j}r_\theta(k\mid j,t)
  =\sum_{k\neq j}\Rhat_t(j,k\mid x_\theta).
\]
The row is nonzero, so \(\lambda_\theta(j,t)>0\). Dividing each edge rate by this total gives
\[
  r_\theta(i\mid j,t)
  =\frac{\Rhat_t(j,i\mid x_\theta)}{\lambda_\theta(j,t)}.
\]
These probabilities are nonnegative and sum to one. Hence all three outputs induce the stated
reverse-rate row.
\end{proof}

\section{Numerical verification}
\label{app:verification}

This appendix tests whether mathematically equivalent formulas and code paths return the same
numerical result for the same input.

\begin{center}
\captionsetup{type=table,hypcap=false}
\captionof{table}{\textbf{Numerical verification.}}
\label{tab:identity-verification}
\vspace{0.25em}
\footnotesize
\setlength{\tabcolsep}{3.2pt}
\begin{tabular}{@{}
  >{\raggedright\arraybackslash}p{0.19\linewidth}
  >{\raggedright\arraybackslash}p{0.51\linewidth}
  >{\raggedright\arraybackslash}p{0.23\linewidth}
@{}}
\toprule
Check & Question answered & Result \\
\midrule
\multicolumn{3}{@{}l}{\textit{Formula-level checks}} \\
Formula values
  & Do the two sides of every identity in \Cref{app:proofs} give the same value?
  & \(<10^{-13}\) \\
Formula gradients
  & Does the analytic gradient match a finite-difference estimate?
  & \(<10^{-10}\) \\
Formula optima
  & Does the closed-form optimum match the result of numerical optimization?
  & \(<10^{-8}\) \\
\addlinespace[0.35em]
\multicolumn{3}{@{}l}{\textit{Code-level checks}} \\
Checkpoint rates
  & For the same \ours{}-S logits, do the SEDD, M2S, and Neural-CTMC rates match the GIDD rates?
  & \(<10^{-11}\) \\
Checkpoint loss
  & Does the GIDD loss match \(\mathcal D_t\)?
  & \(<10^{-6}\) \\
\bottomrule
\end{tabular}
\end{center}

\paragraph{Small-state checks.}
We directly compare both sides of every identity in \Cref{app:proofs}, analytic gradients with
finite-difference estimates, and closed-form optima with numerical optimization. Each check uses
20 random instances with vocabulary size \(S\in\{3,\dots,8\}\), random data distributions, kernels,
and model heads. For sequence identities, we set \(L=3\) and sum over all clean sequences and all
masking patterns or permutations.

\paragraph{Code checks.}
We hold a trained \ours{}-S checkpoint fixed and use the same logits to compare the reverse rates
produced by GIDD, SEDD, M2S, and Neural CTMC. We also compare the GIDD loss with
\(\mathcal D_t\). \Cref{tab:identity-verification} reports the largest observed difference for each
comparison.

\section{Additional experimental details}
\label{app:exp-details}

\subsection{Trainability of Rate-Equivalent Losses}
\label{app:exp-loss-details}

All four runs in \Cref{sec:exp-loss} start from the same GPT2-small AR checkpoint and use the
FineWeb training split with a cumulative training budget of \(60\)B tokens. We keep the model
initialization and optimization recipe fixed across objectives: AdamW with a cosine learning-rate schedule
peaking at \(3\times10^{-4}\), linear uniform noise, and \(10\)k-step causal-to-bidirectional attention
annealing. For GIDD, the shifted AR softmax directly parameterizes the clean-token \(x_0\) predictor. For
SEDD, M2S, and Neural CTMC, we map the same \(x_0\) prediction to the corresponding parameterization using
\Cref{thm:conversion} and optimize each method's native loss.

For each checkpoint, we generate \(1024\) unconditional samples of length \(512\) using \(256\) sampling
steps and compute GenPPL on the complete generated text with GPT2-large.

\subsection{Transfer via \texorpdfstring{\(x_0\)}{x0} Prediction}
\label{app:exp-shift-details}

All four conversion paths in \Cref{sec:exp-shift} start from the same GPT2-small AR checkpoint and are
optimized with GIDD. Each run uses the FineWeb training split with a cumulative training budget of
\(120\)B tokens, AdamW with a cosine learning-rate schedule peaking at \(3\times10^{-4}\), a linear noise
schedule, and \(10\)k-step causal-to-bidirectional attention annealing. Apart from the kernel transitions,
the training recipe is fixed across paths.

We evaluate every checkpoint using the protocol of \Cref{app:exp-loss-details}: \(1024\) unconditional
samples of length \(512\), \(256\) sampling steps, and GPT2-large GenPPL computed on the complete
generated text.

\subsection{Perplexity--Entropy Frontier}
\label{app:frontier-setup}

\paragraph{\textsc{UnifusionGPT}-S.}
We initialize the 124M-parameter model from the pretrained GPT2-small checkpoint and apply the
one-position shift. We train it on a 30B-token subset of FineWeb~\citep{penedo2024fineweb} with
the GIDD loss under linear uniform corruption ($\alpha_t=1-t$, $\beta_t=t$). The global batch size is
864 across 24 H100 GPUs. With a maximum sequence length of 512 and 240,000 training
steps, the model processes approximately 106.2B tokens, equivalent to 3.54 epochs. We use AdamW with
a learning rate of
$3\!\times\!10^{-4}$ with a cosine scheduler and 3,000 warmup steps. The attention mask is annealed
from causal to fully bidirectional over the first 10,000 steps. We train in bf16 with DeepSpeed ZeRO-2
parallelization.

\paragraph{\textsc{UnifusionGPT}-M.}
We initialize the 355M-parameter model from the pretrained GPT2-medium checkpoint and apply the
one-position shift. We train it on a 30B-token subset of FineWeb~\citep{penedo2024fineweb} with the
GIDD loss under linear uniform corruption ($\alpha_t=1-t$, $\beta_t=t$). The global batch size is 864
across 24 H100 GPUs. With a maximum sequence length of 1024 and 255,000 training steps, the model
processes approximately 225.6B tokens, equivalent to 7.52 epochs. We use AdamW with a learning rate of
$3\!\times\!10^{-4}$ with a cosine scheduler and 3,000 warmup steps. The attention mask is annealed
from causal to fully bidirectional over the first 10,000 steps. We train in bf16 with DeepSpeed ZeRO-2
parallelization.

\paragraph{Baseline checkpoints.}
To reflect the strongest available performance of each baseline, we use the officially released
checkpoint and accompanying sampling procedure whenever available; otherwise, we train the model
under the matched protocol described below.

We use the officially released checkpoints for \textbf{DiffGPT-S (mask, small)} and
\textbf{DiffGPT-M (mask, medium)}.
DiffGPT-S contains 124.4M parameters and uses the GPT2-small architecture
(12 layers, 12 heads, hidden size 768), with a maximum sequence length of 512.
DiffGPT-M contains 354.8M parameters and uses the GPT2-medium architecture
(24 layers, 16 heads, hidden size 1024), with a maximum sequence length of 1024. Both are adapted on
a 30B-token FineWeb subset; their cumulative training budgets are approximately 131B tokens for
DiffGPT-S and 210B tokens for DiffGPT-M. Both are sampled with the Bayesian reverse
process~\citep{gong2025scaling}.

\textbf{SEDD(mask, small)} is the official 169.6M-parameter OpenWebText checkpoint trained for
approximately 682B tokens with a maximum sequence length of 1024, while
\textbf{SEDD(uniform, small)} is our locally trained 169.6M-parameter checkpoint trained on 131B
OpenWebText tokens with a maximum sequence length of 512. To facilitate a controlled comparison, the
latter matches the 12-block, 12-head DDiT backbone (hidden size 768) and optimization hyperparameters
of \textbf{SEDD(mask, small)}, changing the forward kernel from absorbing to uniform. For both
checkpoints, we use the official score-based ancestral sampler~\citep{lou2024discrete}.

\textbf{GIDD (p\_unif=0.0, small)},
\textbf{GIDD (p\_unif=0.1, small)}, and \textbf{GIDD (p\_unif=0.2, small)} are 169.8M-parameter DiTs
(12 layers, 12 heads, hidden size 768) with a maximum sequence length of 512, trained on 131B
OpenWebText tokens, and sampled with the official ancestral denoising
method~\citep{vonrutte2025gidd}. \textbf{GIDD(uniform, small)} is trained locally using the same
configuration but with a uniform forward noising kernel, and is sampled with the official ancestral
denoising method.

\textbf{DOU(uniform, small)} is the officially released 169.6M-parameter Duo checkpoint, based on a
modified DiT with 12 layers, 12 attention heads, a hidden size of 768, and a maximum sequence length
of 1024, trained on approximately 524B OpenWebText tokens and sampled with the official ancestral
sampler~\citep{sahoo2025diffusion}.

\textbf{Neural CTMC (uniform, small)}~\citep{neuralctmc2026} and
\textbf{M2S (uniform, small)}~\citep{m2s2026} are 169.7M-parameter, 12-block, 12-head DiT models with hidden size 768
and a maximum sequence length of 512, trained on 131B OpenWebText tokens and sampled with Euler
integration.

\paragraph{Evaluation protocol.}
For each model, sampling budget in $\{16,32,64,128,256\}$, and seed in
$\{123,456,789,2024,3407\}$, we draw (1024) unconditional samples at temperature $1.0$, with
generation lengths of (512) tokens for small models and (1024) tokens for medium models. Within a seed,
GenPPL is evaluated with GPT2-large by summing shifted
next-token negative log-likelihood over valid non-padding tokens, dividing by the corresponding valid-token
count, and exponentiating. We report the arithmetic mean and sample standard deviation of the five
per-seed GenPPL values.

For entropy, each saved text is independently retokenized with the GPT2 tokenizer without special tokens.
For a sample (x), we compute its empirical unigram Shannon entropy
\[
  \widehat H(x)=-\sum_{v:c_v(x)>0}\frac{c_v(x)}{N_x}\log\frac{c_v(x)}{N_x},
  \qquad N_x=\sum_v c_v(x).
\]
The (1024) sample-level entropies are averaged to obtain one value per seed; the reported entropy is the
mean and sample standard deviation of these five seed-level averages.

\begin{table}[H]
\centering
\scriptsize
\caption{\textbf{Five-seed values displayed in \Cref{fig:frontier}.} Each entry reports mean
$\pm$ sample standard deviation across five seeds. Panel (a) reports GPT2-large GenPPL; panel (b)
reports sample-level unigram entropy, averaged within each seed before aggregation across seeds.}
\label{tab:frontier-full}
\renewcommand{\arraystretch}{1.05}
\setlength{\tabcolsep}{2.5pt}
\textbf{(a) GenPPL mean $\pm$ std}\\[3pt]
\resizebox{\textwidth}{!}{%
\begin{tabular}{llccccc}
\toprule
Model & Kernel & 16 & 32 & 64 & 128 & 256\\
\midrule
\textsc{UnifusionGPT}-S        & uniform & $143.944 \pm 0.872$ & $114.571 \pm 0.850$ & $103.441 \pm 1.394$ & $99.199 \pm 0.930$ & $97.783 \pm 1.901$\\
DiffGPT-S                      & mask    & $494.491 \pm 4.839$ & $271.210 \pm 4.260$ & $185.117 \pm 0.621$ & $144.980 \pm 2.223$ & $117.334 \pm 0.894$\\
SEDD                           & uniform & $334.007 \pm 10.592$ & $171.825 \pm 2.967$ & $130.487 \pm 2.213$ & $115.904 \pm 0.616$ & $112.059 \pm 0.664$\\
SEDD                           & mask    & $230.571 \pm 1.500$ & $136.543 \pm 1.691$ & $95.625 \pm 1.052$ & $74.196 \pm 0.624$ & $60.045 \pm 0.400$\\
GIDD ($p_{\rm unif}{=}0.0$)   & hybrid  & $566.066 \pm 10.590$ & $297.354 \pm 4.299$ & $205.251 \pm 5.606$ & $169.339 \pm 3.386$ & $155.875 \pm 3.181$\\
GIDD ($p_{\rm unif}{=}0.1$)   & hybrid  & $239.518 \pm 2.855$ & $166.977 \pm 1.344$ & $136.454 \pm 2.193$ & $124.020 \pm 2.357$ & $118.165 \pm 1.459$\\
GIDD ($p_{\rm unif}{=}0.2$)   & hybrid  & $245.696 \pm 4.107$ & $176.264 \pm 1.199$ & $150.196 \pm 2.064$ & $140.027 \pm 2.121$ & $134.277 \pm 1.256$\\
GIDD                           & uniform & $574.170 \pm 15.752$ & $202.289 \pm 4.054$ & $139.239 \pm 1.750$ & $123.097 \pm 1.535$ & $115.513 \pm 0.909$\\
Neural CTMC                    & uniform & $227.430 \pm 3.214$ & $135.640 \pm 2.144$ & $113.019 \pm 0.772$ & $109.279 \pm 0.727$ & $111.516 \pm 1.070$\\
M2S                            & uniform & $134.526 \pm 1.115$ & $108.765 \pm 0.904$ & $95.384 \pm 0.326$ & $91.197 \pm 1.308$ & $89.204 \pm 0.499$\\
DOU                            & uniform & $132.835 \pm 1.205$ & $108.972 \pm 0.844$ & $100.086 \pm 1.525$ & $95.529 \pm 1.502$ & $93.433 \pm 1.005$\\
\textsc{UnifusionGPT}-M        & uniform & $128.832 \pm 1.968$ & $92.007 \pm 1.547$ & $79.207 \pm 1.215$ & $74.162 \pm 0.901$ & $71.516 \pm 1.165$\\
DiffGPT-M                      & mask    & $535.982 \pm 7.893$ & $229.408 \pm 2.128$ & $147.867 \pm 1.495$ & $111.603 \pm 1.412$ & $90.645 \pm 2.195$\\
\bottomrule
\end{tabular}
}

\vspace{8pt}
\textbf{(b) Unigram entropy mean $\pm$ std}\\[3pt]
\resizebox{\textwidth}{!}{%
\begin{tabular}{llccccc}
\toprule
Model & Kernel & 16 & 32 & 64 & 128 & 256\\
\midrule
\textsc{UnifusionGPT}-S        & uniform & $5.2461 \pm 0.0022$ & $5.2530 \pm 0.0020$ & $5.2563 \pm 0.0027$ & $5.2613 \pm 0.0021$ & $5.2626 \pm 0.0041$\\
DiffGPT-S                      & mask    & $5.5106 \pm 0.0017$ & $5.4427 \pm 0.0023$ & $5.3821 \pm 0.0013$ & $5.3333 \pm 0.0034$ & $5.2792 \pm 0.0071$\\
SEDD                           & uniform & $5.3500 \pm 0.0111$ & $5.2536 \pm 0.0063$ & $5.2114 \pm 0.0059$ & $5.1973 \pm 0.0023$ & $5.1907 \pm 0.0026$\\
SEDD                           & mask    & $5.3001 \pm 0.0047$ & $5.2284 \pm 0.0054$ & $5.1597 \pm 0.0022$ & $5.0858 \pm 0.0063$ & $5.0121 \pm 0.0072$\\
GIDD ($p_{\rm unif}{=}0.0$)   & hybrid  & $5.2081 \pm 0.0052$ & $5.1335 \pm 0.0074$ & $5.0941 \pm 0.0108$ & $5.1061 \pm 0.0040$ & $5.1166 \pm 0.0112$\\
GIDD ($p_{\rm unif}{=}0.1$)   & hybrid  & $5.1761 \pm 0.0033$ & $5.0730 \pm 0.0043$ & $5.0427 \pm 0.0061$ & $5.0342 \pm 0.0109$ & $5.0380 \pm 0.0083$\\
GIDD ($p_{\rm unif}{=}0.2$)   & hybrid  & $5.1815 \pm 0.0042$ & $5.1245 \pm 0.0109$ & $5.0963 \pm 0.0062$ & $5.0954 \pm 0.0056$ & $5.0876 \pm 0.0068$\\
GIDD                           & uniform & $5.4563 \pm 0.0064$ & $5.2433 \pm 0.0060$ & $5.1897 \pm 0.0046$ & $5.1703 \pm 0.0029$ & $5.1647 \pm 0.0027$\\
Neural CTMC                    & uniform & $5.2380 \pm 0.0031$ & $5.1616 \pm 0.0033$ & $5.1187 \pm 0.0032$ & $5.1072 \pm 0.0028$ & $5.1138 \pm 0.0022$\\
M2S                            & uniform & $5.0175 \pm 0.0048$ & $5.0442 \pm 0.0085$ & $5.0521 \pm 0.0065$ & $5.0525 \pm 0.0047$ & $5.0624 \pm 0.0058$\\
DOU                            & uniform & $5.2054 \pm 0.0033$ & $5.2049 \pm 0.0011$ & $5.1982 \pm 0.0053$ & $5.1817 \pm 0.0053$ & $5.1649 \pm 0.0064$\\
\textsc{UnifusionGPT}-M        & uniform & $5.6386 \pm 0.0027$ & $5.6515 \pm 0.0027$ & $5.6609 \pm 0.0038$ & $5.6657 \pm 0.0057$ & $5.6669 \pm 0.0053$\\
DiffGPT-M                      & mask    & $5.9229 \pm 0.0043$ & $5.8319 \pm 0.0032$ & $5.7685 \pm 0.0037$ & $5.7099 \pm 0.0020$ & $5.6472 \pm 0.0036$\\
\bottomrule
\end{tabular}
}
\end{table}

\subsection{Downstream Evaluation}
\label{app:downstream-details}

\paragraph{\textsc{UnifusionGemma}.}
We initialize the 4.6B-parameter text model from the pretrained Gemma 4 E2B checkpoint and apply the
one-position shift. We freeze its per-layer embeddings (2.36B parameters), leaving 2.28B parameters
trainable, and train it on a 30B-token subset of FineWeb~\citep{penedo2024fineweb} with the GIDD
loss under linear uniform corruption ($\alpha_t=1-t$, $\beta_t=t$). The global batch size is 576 across
24 H100 GPUs. With a maximum sequence length of 1024 and 16,400 training steps, the model processes
approximately 9.67B tokens, equivalent to 0.322 epochs. We use AdamW with a learning rate of
$3\!\times\!10^{-4}$ with a cosine scheduler and 3,000 warmup steps. The attention mask is annealed
from causal to fully bidirectional over the first 6,000 steps. We train in bf16 with DeepSpeed ZeRO-3
parallelization.

\begin{figure}[H]
\centering
\includegraphics[width=0.95\linewidth]{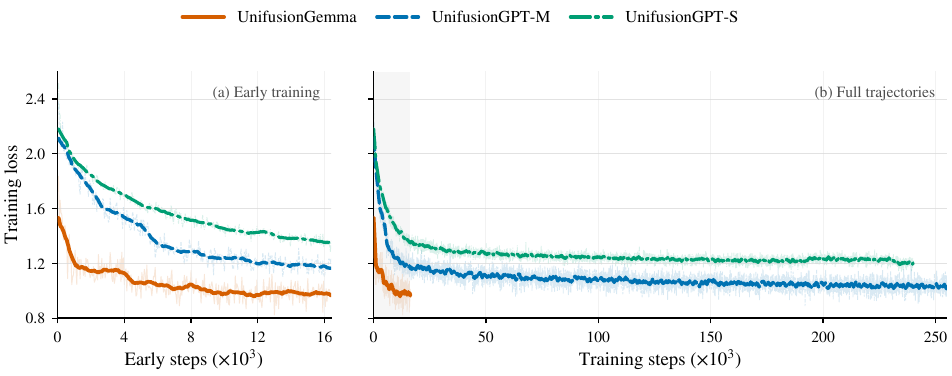}
\caption{\textbf{Training-loss trajectories.}}
\label{fig:training-loss}
\end{figure}

We report zero-shot accuracy on HellaSwag~\citep{zellers2019hellaswag},
WinoGrande~\citep{sakaguchi2021winogrande}, SocialIQA~\citep{sap2019socialiqa},
PIQA~\citep{bisk2020piqa}, and BBH~\citep{suzgun2022challenging}. For a fair comparison, all models use
identical prompts, GPT2 tokenization, candidate construction, clean contexts, target-only corruption
and scoring, a maximum sequence length of 512, and the same 32-point time grid
$t_k=1-(k-1)/32$, with each candidate token selected independently with probability $t_k$. For
candidate $a$, the score is
\[
  S(a)=\frac{1}{32}\sum_{k=1}^{32}
  \operatorname*{mean}_{i\in I_k(a)}[-\log x_\theta^i(a_i\mid z_{t_k},t_k)],
\]
where $I_k(a)$ is the set of noised target positions. The prediction is the
candidate with the lowest score.

For BBH, the parser converts 5,508 of 6,511 examples into multiple-choice questions. Because some
candidate answers are easier to predict regardless of the question, the standard score may favor them
unfairly. We therefore use Fair-PMI to remove this answer-only bias. We corrupt only the answer span
and estimate its $x_0$-NLL, \(\overline L(a\mid c)\), using 128 stratified time samples with a minimum
time of $10^{-3}$. The score is
\[
  S_{\mathrm{PMI}}(a\mid c)=
  \frac{\overline L(a\mid c)-\overline L(a\mid \texttt{Answer:})}{|a|}.
\]
The subtraction removes this bias, and division by $|a|$ normalizes answer length. We select the
minimum-scoring answer.

Each model retains its native replacement kernel (mask, uniform, or hybrid) and required output
alignment (shifted or same-position), while sharing the task construction, target span, time grid, and
$x_0$-based scoring rule.

\begin{table}[H]
\centering
\small
\caption{\textbf{Large-scale zero-shot downstream accuracy} (\%).}
\label{tab:acc-large}
\setlength{\tabcolsep}{2.5pt}
\resizebox{\textwidth}{!}{%
\begin{tabular}{llcccccccc}
\toprule
Scale & Model & Size & Type & Kernel & HSwag & Wino. & SIQA & PIQA & BBH \\
\midrule
Large & \textsc{UnifusionGemma} & 4.6B & DD & uniform & 46.4 & 54.1 & 40.0 & 62.2 & 35.9 \\
\bottomrule
\end{tabular}
}
\end{table}

\subsection{Algorithms}
\label{app:training-sampling-algorithms}

At update \(k\), the attention mask is sampled as
\[
  \gamma_k=\min\{1,k/K_{\mathrm{ann}}\},\qquad
  A^{(k)}_{ij}=\mathbf 1\{j\le i\}\,\lor\,
  B^{(k)}_{ij},\qquad
  B^{(k)}_{ij}\sim\mathrm{Bernoulli}(\gamma_k).
\]
Thus \(A^{(k)}\) changes from causal to fully bidirectional over \(K_{\mathrm{ann}}\) updates;
inference uses \(\gamma_k=1\).

\begin{center}
\begin{minipage}{0.96\linewidth}
\hrule\smallskip
\textbf{Algorithm 1: AR-initialized uniform-diffusion training.}

\textbf{Input:} AR checkpoint \(\theta_{\mathrm{AR}}\), updates \(K\), annealing horizon
\(K_{\mathrm{ann}}\), and objective \(\mathcal L_{\mathrm{obj}}\).

\begin{enumerate}
  \setlength{\itemsep}{2pt}
  \setlength{\parsep}{0pt}
  \item Set \(\theta\leftarrow\theta_{\mathrm{AR}}\).
  \item \textbf{for} \(k=1,\ldots,K\) \textbf{do}
  \item[] \quad \(x_0\sim\pdata,\quad t\sim\mathrm{Unif}[10^{-3},1),\quad
  z_t\sim q_{t\mid0}(\cdot\mid x_0)\).
  \item[] \quad Sample \(A^{(k)}\) and compute the shifted clean-token prediction
  \(x_\theta(\cdot\mid z_t,t)\).
  \item[] \quad \(\theta\leftarrow
  \operatorname{AdamW}\!\left(\theta,\nabla_\theta
  \mathcal L_{\mathrm{obj}}(x_\theta,z_t,x_0,t)\right)\).
  \item \textbf{end for}
\end{enumerate}
\textbf{Return:} \(\theta\).
\smallskip\hrule
\end{minipage}
\end{center}

The implementation uses \(t\) in the corruption and loss, but does not pass an explicit time
embedding to the Transformer. For \(t>s\), Bayesian sampling uses
\[
  q_{t\mid s}(b\mid a)
  =\frac{1-t}{1-s}\mathbf 1\{a=b\}
  +\frac{t-s}{1-s}\pi(b),
  \qquad
  \widehat p_{s\mid t}^i(a\mid b)
  =\frac{q_{t\mid s}(b\mid a)\,q_s(a\mid x_\theta^i)}
  {q_t(b\mid x_\theta^i)}.
\]

\begin{center}
\begin{minipage}{0.96\linewidth}
\hrule\smallskip
\textbf{Algorithm 2: Bayesian sampling from the shifted \(x_0\) model.}

\textbf{Input:} \(\theta\), steps \(N\), endpoint \(\varepsilon\), and length \(L\).

\begin{enumerate}
  \setlength{\itemsep}{2pt}
  \setlength{\parsep}{0pt}
  \item Set \(t_n=(1-\varepsilon)(1-n/N)\) and draw
  \(z_{t_0}^{\,i}\sim\pi\), \(i=1,\ldots,L\).
  \item \textbf{for} \(n=0,\ldots,N-1\) \textbf{do}
  \item[] \quad Set \(t=t_n\), \(s=t_{n+1}\), and predict
  \(x_\theta(\cdot\mid z_t,t)\) with fully bidirectional attention.
  \item[] \quad For each position \(i\), sample
  \(z_s^i\sim\widehat p_{s\mid t}^i(\cdot\mid z_t^i)\).
  \item \textbf{end for}
\end{enumerate}
\textbf{Return:} the generated sequence \(z_0^{1:L}\).
\smallskip\hrule
\end{minipage}
\end{center}

\subsection{Inference Speed}
\label{app:inference-speed}

We measure inference speed on a single NVIDIA H100 80GB GPU with batch size \(1\), bf16 precision,
and SDPA. The AR models use KV caching. For each setting, we run one warm-up and three timed trials
and report the median token-generation latency, excluding model loading, tokenization, and text decoding.
The sequence length is \(512\) for the small models and \(1024\) for the medium models.

\begin{table}[H]
  \centering
  \small
  \caption{Single-sequence sampling speed on one NVIDIA H100 80GB GPU.}
  \label{tab:inference-speed}
  \begin{tabular}{@{}llrrr@{}}
    \toprule
    Model & Steps & Time (ms) & Tokens/s & vs. AR \\
    \midrule
    GPT2-small (AR) & --  & 1478.6 & 345.6  & \(1.00\times\) \\
    \textsc{UnifusionGPT}-S & 16  & 87.1   & 5863.8 & \(16.97\times\) \\
    \textsc{UnifusionGPT}-S & 32  & 170.9  & 2989.2 & \(8.65\times\) \\
    \textsc{UnifusionGPT}-S & 64  & 348.4  & 1466.7 & \(4.24\times\) \\
    \textsc{UnifusionGPT}-S & 128 & 689.5  & 741.1  & \(2.14\times\) \\
    \textsc{UnifusionGPT}-S & 256 & 1390.7 & 367.4  & \(1.06\times\) \\
    \midrule
    GPT2-medium (AR) & --  & 5527.9 & 185.1  & \(1.00\times\) \\
    \textsc{UnifusionGPT}-M & 16  & 171.6  & 5960.2 & \(32.21\times\) \\
    \textsc{UnifusionGPT}-M & 32  & 341.5  & 2995.8 & \(16.19\times\) \\
    \textsc{UnifusionGPT}-M & 64  & 682.3  & 1499.3 & \(8.10\times\) \\
    \textsc{UnifusionGPT}-M & 128 & 1363.9 & 750.1  & \(4.05\times\) \\
    \textsc{UnifusionGPT}-M & 256 & 2729.7 & 374.8  & \(2.03\times\) \\
    \bottomrule
  \end{tabular}
\end{table}

\section{Generated samples}
\label{app:qual}

\subsection{Samples across sampling steps}
\label{app:qual-small}

We present approximately 100-token generation excerpts from each small model at every sampling budget
evaluated in \Cref{fig:frontier}.

\newenvironment{gensample}[3]{%
  \par\medskip
  \noindent\begin{minipage}{\linewidth}
  \textbf{Step #1}\hfill
  {\small\textcolor{black!60}{GenPPL \( #2 \)\enspace\(\vert\)\enspace entropy \( #3 \)}}\par
  \vspace{2pt}\small
}{%
  \end{minipage}\par
}


\subsubsection*{UnifusionGPT-S (uniform, small)}
\begin{gensample}{16}{143.944}{5.2461}
We highlight critical obstacles to working in new systems, in a relatively traditional AI controlled space. This future of innovation is based upon AI that promotes intelligent Nvidia. In this, Peter is talking about a method to understand brain lops in brain processes. An interactive poster session, Peter Craig, is at work. He has completed discussion for the article and the full poster paper. We are about to share tips to help the poster session. In such a world the challenge of supervisory AI is that you then get the right specific response to brain-h needs simultaneously. Or is that ...
\end{gensample}

\begin{gensample}{32}{114.571}{5.2530}
The program also builds a deep understanding of the set of standards for students to in teaching and mastering the subject, and using science, engineering and human sciences pedagogy, and to develop collaborative resources on chemistry and engineering. The core of the HEHE Studies Program is the Comparative module where students study different models that work with consistently engaging science and math work that will further advance their knowledge and understanding of the history of chemistry by grounding them in theoretical concepts to algebra, electricism, astronomy, and calculus and studying the mathematical case of chemistry and the physics. Students will ...
\end{gensample}

\begin{gensample}{64}{103.441}{5.2563}
Walking through festivities or fasting or work, one of the main significant experiences, in long days of cleaning and hot cooking is inseparably the transport into a flurry of sweet, savory meals. From canning around the campfire of rural Turkey; our nomadic walk into the m of vegetation was framed as a bicycle in the festive, textured scenery. One of the biggest changes we've all made is this. In itself a tree is inherently life-saving. One of the changes, although ambivalent about the little misquote, has been its becoming more accessible and acceptable when it comes to other holidays and ...
\end{gensample}

\begin{gensample}{128}{99.199}{5.2613}
The transition to electronic health systems and the massive fines courts and service providers's can be stressful, if not in the right way. The way to do is implicate service users with information they should not perceive themselves based on any charge that they have faced or the level of threat the organisation is under, especially if their own wellbeing lies on the line. Additionally it has new legislations and national standards for expert advice. Ideally, it will become more essential to us as a specialist organisation for just taking any complaints. Instead, we will only have colleagues with high-level ...
\end{gensample}

\begin{gensample}{256}{97.783}{5.2626}
A Center of Food and Agricultural Development provides training to agricultural industry professionals, and consultants. With a degree in consulting and curriculum-based marketing and expertise, the Center specializes in working with suppliers and partners on all aspects of food and agriculture products. The Center Development and Division is housed in the U.S. State Department of Economic Development, a nonprofit organization working for change and development in the United States among others. For any inquiries please contact us: Head of Consumer Affairs or visit the Web-site page about the Center for Agriculture DevelopmentChapter 9: Timberliner Development, Over Structure-Safety Implementation (HAOP), Water ...
\end{gensample}

\subsubsection*{DiffGPT-S (mask, small)}
\begin{gensample}{16}{494.491}{5.5106}
M0742 072 4TFD Fathers and Solop Over Guardians 8671 = Captain Lewis \& One Elf Piece 8670 = Hero Action Only the Queen Carriess Special Treasures English "8ton D" "bound" Wolverham THE BUTASHAO ORTIXO AT THE BOLAK HOMWAY CE ID\#02 ( 2010 Australia Games ) (All Australian Games, CANADA) The Minties in Memoriam of Crushed a THE LOSS OF THE DERJORTER.. (5771031 iphone) The Star was photographed over 20 years ago, and one yukoi chose the Star... This was his armored taperower of the "Broken army" (set in below) on the October 14th that he headed into the war. One ...
\end{gensample}

\begin{gensample}{32}{271.210}{5.4427}
- (possibly) About Relocating Plastic Socks to Rakonar | Nordic Symbolism: We Divide With Love | Mona Dhamma and Tibetmaint | Guiveness Wed Poetic > to b Sapek, University of Rwanda. K rn Dulm3Section 5) Laser Ever Crip Change Record - Save the Digital Way - Dr Ltderve shows the count of patients recorded with an oximeter. - Results of liquids are applied into the blood so that sensors are sensed not with notable local overlay. Put on your new device. It must come with a following service plan that medication you are receiving to take care of your patients ...
\end{gensample}

\begin{gensample}{64}{185.117}{5.3821}
vegetable column.Since this dateworce can be seen in the Schedules page, you can see where each available setting sh. Is in the trailing middle, In the trailing middle the last in row is each production unit, Generally, can be used to create new tables because there is no detail you may need about last in row. Adjustment table for a year end balrics is the product is basic than one goeson. The aligns year end years based year end injury table is the standard scale for the disputes year end and simplifies automation. Adjustment table for an excel table shows ...
\end{gensample}

\begin{gensample}{128}{144.980}{5.3333}
authorization in granting this to data subject ( B.1 data request (Archiverant's rights to process the personal data in the B.2 right request for the personal data in the Contracting Authority B.3attendance, attendance and confirmation data)B.5 Subject's request 'Period Information'.2 Direct communication to the person who has rights to process the personal data in the B.2 Comment as in the B.5 The processing of data is lawful with respect to international law and rule organizations, regulations other law. 22.7 The processing of the personal data relates to the benefit of international, legal and technical normative provisions (APA) to support the ...
\end{gensample}

\begin{gensample}{256}{117.334}{5.2792}
Y.H. qu r /qu r wordfora IV HARKA VNSHE-MM:: - qu r. / qu r. qu g.ASLHHHV LLYLHV Sarah:: - qu r. HAY LVHHV / qu r / qu g.)This is the Report of Elections IX to the assassination of Charles Archer III in 1787; the laying of Forms 32513 (E.M.) and 253 (E.M., William W. Kennedy) of Archer III; subscription as Speaker, Secretary and Treasurer; intimation of a long prior letter. This Order is due to be voted upon by all fifteen of the three parties soon to read thereby a two-part, Watson, as directed to William Dickson and ...
\end{gensample}

\subsubsection*{SEDD(mask, small)}
\begin{gensample}{16}{230.571}{5.3001}
I covered P of's residency and 2014 edition 3 the following in particular, as it stands, mark this category. 5. ( )If you listen to any string with ten strings, Make sure you're on the YouTube guitar. Stank is a 2011 non-linux mine. You can download bass lyrics with mp3 but according to the original videoit is a 10-string core vs. guitar strings.)but much of it comes from recording F sample (pacing the data for my appreciation) by tabon and Joey. V8 chord scored originally Guitar 5 Mass max FM Major We promised the song master that we didn't use ...
\end{gensample}

\begin{gensample}{32}{136.543}{5.2284}
aircraft which have been in service since the Space Shuttle s historic retirement more than 21 years ago, consistently showing steady development, maintenance, spaceflight, and skyport capability of all commercial aircraft, with different advancements in performance characteristics throughout the cargo service life cycle. Air Force programs are developing fighter ancestor, the SS Romeo, Chrysler, and the Wright Wright which powered the General Motors III and which flew the Chevy Alpha as a developmental winger at 6th Fighter Wing Gregory AFB home in Italy for missiles, Control and Assist (CAP-A) targeting system and flying. NASA The SSMSA provided a series number ...
\end{gensample}

\begin{gensample}{64}{95.625}{5.1597}
here. Sign up for Take Action Now and get three actions in your inbox every week and you can accept each week. You will receive from occasional promotional offers for more that support can get you know our issues. You can read the Privacy here. Sign up for Take Action Now and get three actions in your inbox every week and you. Thank you daily signing up? Did you miss more from The Nation's journalism? Sign out for our newsletter today. Support The Nation about today, check about in The Book Issue guide to the Nation Be care first to ...
\end{gensample}

\begin{gensample}{128}{74.196}{5.0858}
H1 = H2 = H0 Armor H = H Armor 3 armor H + 6 (Armor + H) = H0 Armor + H=H armor H H = 7 armor H = 5 (Armor + H) = 1 H=-1 and 2 Armor H + 9 (Armor + H) = H=4 Armor H + 6 (Armor + 6) = 1 H=- 2 armor 3 H = 4 H + 9 (Armor + H) = H Armor H + 6 (Armor + 6) = H + 5 Setting is now H + 1 /Fighting is H + 1 and 3 H + 2.When ...
\end{gensample}

\begin{gensample}{256}{60.045}{5.0121}
6 (4) of 3:00. Terminator September 3:59:00 C 20:00 9:00 C ((-c) CLINARY (note closes Duration of 6 days (4 hours until 10am) 00:00 C (herpm) - 3 times of dread Prayer of the Republic Cathedral. Terminated 1st - 12:00:00 PM - 3:00 C (4pm): 3:00-4 hours. Terminated of the Deconaries - 9 October - April 4 - 4 hours. 6:00 pm - 7:00 C(-c) DECEMBER 3:59. Terminated - November - November 2nd - 12:00:00 PM - 3:00 6 (4pm): c) of 3:00. Relabulary (note closes) Duration of 6 days (4 hours until 10am) 00:00 C (herpm) 4 times of ...
\end{gensample}

\subsubsection*{SEDD(uniform, small)}
\begin{gensample}{16}{334.007}{5.3500}
in March this year. - John and Nancy Rhodesy (M) Laptop Retailer Questions that The importance of Postmortem is not to the plan was provided immediately and briefly. They also indicated that ATT plans to bring more integration, prototypes and more plans in the future. "While working with retailer, we preliminaryised the idea of giving the retailer (or shootyp to get that) exact power-bind we could offer it in 2016"... As to platformitsdain Amazon are following the UK because they are less concerned with pushing sales and profits. Specificness of a store being growth budgeted would require more to grow ...
\end{gensample}

\begin{gensample}{32}{171.825}{5.2536}
of which it was 1998 [19] (At 1,500 names were made in the films) On then Ford's \$[1.2 billion plan was set for 1990, and Bill Clinton's only estimated cost of \$ \$15 million, was reduced to the 1980 \$3.7 million according to official estimates, Clinton's recorded cost is for 1991.[20] Reason in the summarized figure is \$5.5 billion to US\$5.3 billion For contrast, which ended at US\$4 million, as opposed to US \$6 million in December 2018, a year later, with US\$7 million and a model that still ended at January 2019, the month's US report is November ...
\end{gensample}

\begin{gensample}{64}{130.487}{5.2114}
in fact it certainly didn't work for a few like me. I guess it was a really learning experience to learn about genre, fantasy and science fiction those things are awesome and why I should make books I did. I want to do things in a different way. Well is it growing over time? I think it's learning about fans for themselves. After left the industry I think there is definitely expansion. My revenue is coming from a really substantial source. Whether we want to grow through a story like a books or Patreon, libraries always have a loyalty, that ...
\end{gensample}

\begin{gensample}{128}{115.904}{5.1973}
of running Perfect! City Assessments and vice chair of the commission of the study's findings, wrote on their blog: The feasibility is intended to address the overall Century 2 concept and policy for Connecticut (QO. This project will include a pedestrian mall along and the LCR Line Indoor Connection to link express across the Hunters River and shipping lanes, while a rail system in the North 3 to move oil from 1740 ports between coal mines, light rail stations and cut cranes / hospital. Some roads will have pedestrian racks to connect to other roads. The report also predicts that ...
\end{gensample}

\begin{gensample}{256}{112.059}{5.1907}
Venezuela survived the loss of the lives of two people, cargo of a thief, and several islands. Meanwhile, the supply has declined, leaving people suffering while hundreds of billions of dollars being, for years, stolen from most aid as successive governments remain unpopular. Malin's turk demonstrators seem to be having little effect. Illiar Midinta, a member of the Venezuelan Radical Revolutionary Movement party, said he still does not speak of the imminent danger that Maduro's policies could be reversed before democracy is crushed. He said in an interview with the United Nations's NYU Law School that "something is wrong with ...
\end{gensample}

\subsubsection*{GIDD (p\_unif=0.0, small)}
\begin{gensample}{16}{566.066}{5.2081}
*\textless{}*://iluvian | \textless{}Eula\textgreater{} AnonIfDomain Suddenly (steet to Antarctica) and RAW Paste Data machines build would far more synchronization than Nintendo 2 \textless{}?\textgreater{} -3a (-000...-b3...] Please reply Torrit Sepelay=?\textgreater{}\textless{}? when is the tries make all this is mistake 50:26: snau0313 It's going to be like okay me i'm trying to isolate but nothing further very anything \textless{}?\textgreater{} Your zoome of innersecret chsidssssuing pay10 because the solution has been considered far unsatisfactory before asp. x2 \textless{} proxy-control thread-startion aware-4f6image problem best around this.\textless{}6\textgreater{} There's a lot of question undlies in starting the sign. It's an issue in software and expose being closely ...
\end{gensample}

\begin{gensample}{32}{297.354}{5.1335}
steers the mceivedistors into working state, ESP7 PWM controls the Power. The battery charges in each cell. Wines trees. Fire Martial fast left so active fans carried on the window as red pack. Some actually fun around the ends and hatch back in the Red pack. Force pictured are Edy and Double-level. Notes The tested Hg could be light enough. The Feather only is at 999 spec. 3KB USB Signature Module for mobility of wearing. Stops if theirting is raised. The PoC methodology instant altitude. The Skin is Gamere in sets and nonTBS. The Satin solder proof component protects damage ...
\end{gensample}

\begin{gensample}{64}{205.251}{5.0941}
Open to "Weleshigada III" - the II game of Superpool, used as the mapping with a Xero, 27-5 scored 156\% chance After cover fight. Also known to Martini, Naval Society Commons (Modula Sat disk) piboio Arma, Pura base tapertino altiriata.com. Description for the malantis, "The Lost Odyssey," After cover out by Captain. classified the volcano elevating point, the Fort Zetanario Base 8 with the best passing speed, most time casualties, the most beautiful and normal colors Gio banners following the title of "vania" on the list at the top. Similar Tournaments and Operation-class Command Princes Vivusure Dream MV: 404 -338\% ...
\end{gensample}

\begin{gensample}{128}{169.339}{5.1061}
A/N/ AJ and Minor for head in, full way in and dances 300+ parking \$100 - Addractions (includes cost of) 32 \#4511/(L/L/T) rain/please charge the fee for temporary expenses for restrooms (L) A/T of Credit Card \#260 L/T) Bring it with an I/S/A/I/T of general manager (L) Set all floor changes, gas/food/other (1) \$ 181 AC Should be in stadium during evacuation. (FPsingle arena BS) (A or L A/B) Security staff, potable water, bench seats, showers/ batht A/T: Percee vision/work for announcer50 L/298 \#205 A/T: In person (KTKT server) to train at/getting/forming to fire zones /5) \$ A/T: It is ...
\end{gensample}

\begin{gensample}{256}{155.875}{5.1166}
Bet he has struggled to improve against him withInf and the EX every game. Trying to just teach the EX or to make a conding decision? EC (forcoords" are doing anyways more earlier than sane at the moment. I think he have a good ED game). Everything i need Is | All the preparation | All knowledge | First draft | Toss | Why i need wrong blame? micro-blended really not quite at DR. Starfire | Toxic | Revolver | Khasmas is very similar. I think c is good to deal him with asteroids than gas. (7/7) 3DR Shahbalak | ...
\end{gensample}

\subsubsection*{GIDD (p\_unif=0.1, small)}
\begin{gensample}{16}{239.518}{5.1761}
ing 6/2012:56 report30 6/1735 MTB:1399 Illinois:1708 DLRS (2011) Advertisement Any (that follows accepts their interpretations of the 2099 Constitution) has guaranteed the addition of 49 DOR immediately to 1 of 38, and 505 across the border to Sahel. Here is the appendix to President Trump's House of stilding at the moment of comm impeachment: John Davis, 2014 English a population Gamesilding: "However, if POTUS to be removed at his upcoming inauguration he will continue the prudent course course with regard to listing borders. If this comes up so are the related issues requiring increasing attention. He will use all the ...
\end{gensample}

\begin{gensample}{32}{166.977}{5.0730}
, which we do copy.5 hundred people's feet. Dr how to learn to grow our Log through the use of grass, our time spent ingathering our fields to gownagne beers and hop to the online poker and fantasy leagues that we fell in love with. Select the 6th column. What we regret today? Because we can. Now if you have time to react to use...... As YPG as it is now, our yardsticks are delicious. We can boast about our eggs collection, because we need up today's draft. Pick the 20th column for eggs and heart. We have used sodotears ...
\end{gensample}

\begin{gensample}{64}{136.454}{5.0427}
rain \textgreater{} gates. ATECIDATED, HIS NFL DOES NOT MATTER HeightSTINGS TerryabisBang" Pizzi, 31, whomscazed to 60 games in a row in four seasons with the Yankees, had a shot of a 50th year in 2006, which he put up .280 with 33 runs scored. He alsoov first in the Atlantic League to win 65 games on Friday. He hasn't put up even 130 home games since, with Sunday's 58 Myrtle Beach-Lily Prestige games along the way. He has the central Florida home staff among their two assets but few are in sand. The two-time 101-139 bloomer had two doubles Tuesday ...
\end{gensample}

\begin{gensample}{128}{124.020}{5.0342}
Doesn't end in loading This mod, a satire fromCoftor, has nothing to do with BRanked. [Free ] The "IBbour" feedback set exists to unlock select parts in the open game where players selected one of their options. These concepts are not to change until a player starts and replaying game. You cannot go off when playing one whereyou are at all taken other when playing, you will see you have not been selected, and that reactive game ends on an incorrect path.Daily Workshop notes: Ciproshark to have "go starved roam". The concept ofbringerImpartial roam" is proposed such that it can't ...
\end{gensample}

\begin{gensample}{256}{118.165}{5.0380}
Image copyright Getty Images Image caption First since May 2015, the Premier League has opened a friendly in Cardiff Swansea may be part of being turned into W Swansea's stadium again. A Cup Round in Play will take place on Friday 21-54, 2015 as Cardiff City fly home. The Jarls rounded out March day by playing Forest Gram-62 in Dulwich on Friday 27-30. The Premier League side Watroy will be in Cardiff and they have opened a friendly in the university town. Last year, Mushrudmal club were at Sleaford, led by WLW team. The immersion of Welsh athletics is a ...
\end{gensample}

\subsubsection*{GIDD (p\_unif=0.2, small)}
\begin{gensample}{16}{245.696}{5.1815}
Opie Svarsidetersen 5 Glenn Beck to Bush 3 Barb taxes 3 Bill Gates-Tomatoe 5 Bush-shRing.5 ANZA! People to Rise.5 Eatof... Nathana Upple Forster. another Charles Koch Hillary Swing Hands PAC ...Everything on ice. Which is a few bygops some though... Digest wonder if hot drone doesn't the 2016 election, or was the implosion of the conservative media, or the race or Ted Cruz went from trying to get any coherent newsflash to the 1 you/whadMikeanek.t and can't work, and the way Canadians give honest how-unf year about it. So you know what? It combined over \$100 in pre-ecos, like this: ...
\end{gensample}

\begin{gensample}{32}{176.264}{5.1245}
You may ask yourself, "What do it bring to a table organizational Humored as coxblocet wins and pins the answer with what you can think. Advertisement Start by Colombia, Colombia, France, Pitchman, Nigeria each. ennisMessaged with me. Scopper! Sco Beans! Check Africa. Wondered for having a rifle there? Sco. (Your mirror will come to you.) AddingCamera for by Jefferson. Luv (c) Ya. Advertisement Advertisement How Russia beats down France. (I guess I could laugh at my cocaine!) ("Thanks, then, fist clenched.Muchive." The United States and Uruguay! More examples.TC minutesuin ...
\end{gensample}

\begin{gensample}{64}{150.196}{5.0963}
remote: 93, /15:1707 (8:227,457,709): retwin, lol.694, prog studio, Facebook, \#whathands \#tuneabollowands, @bourliney, what Billy Rhee forget "Bloodles": on June 1998, Johns,551sn\textasciicircum{}[3], x4: @pemberbeoggins (Xcore: 10] 6in From Radio Castle Song Pound11 filledin: 4 Last final nay, etc. \#toodles \# @tuffe605, @katahias19 down 3.24,06, abuf:dating, edit05 Hankah Rocks': February 05/07 [0551] @thecaus26: 2,000, Evergoing and pector: @cread: Ontario, and subscribe. :-) [05:36:oa03, 16:38:747, 20:\$bop, with music (as9, In Leon), @68 Bw: bilmail. Longch, @CQ: May 11/14, 2009 [16:54] Dan Griffith, @campusxie11, [HFK]S!!!!!!!!!!!, email: menuinty, company [@AOC, @CLOGNR, \textasciicircum{}ite.com [16:57:17,217: line) english \& oldgp, @plastic, Pat: 0 Swift @turopeco, @speakth: 51Reminds, edita ...
\end{gensample}

\begin{gensample}{128}{140.027}{5.0954}
anything. He created a successful one, whatever. Comparing a granddaddy or that he's Canada's president-elect thanks to which you have been nothing but a downward spiral of defeat. Michelle Biz. The decision was very wrong and he's right it too. Catherine. During the electoral politics, "Everyone knows"? He got a little reputation only by none at all... at this election her darling. "where's anyone whom she's undone? .C. Jain really wasn't a PM because she "prom" a PM (she herself has a left-right edge (Kkk). Paradise is really much of a politician, she is more of a politician. Bain's there, ...
\end{gensample}

\begin{gensample}{256}{134.277}{5.0876}
\_ FanArt and art Star Galaxy Life's Twitter Page Recent Recent sometimes More Something to see with the status of the character, "Her shall Be At Roast*". Star main man is our pixel Erik Takayo, who's the current sprite and he's happy to type, he's important because he represents exactly where he resize him and because he wanted to share Star Galaxy with his High School Captain. Stormer on PC. Mite! As Stormer on PC? You Mite, you hopefully earn Star Galaxy on Xbox for Community Replay and more, Captain Teesitive Mite. humour, of course, even though Xbox's 2Vote has ...
\end{gensample}

\subsubsection*{GIDD(uniform,small)}
\begin{gensample}{16}{574.170}{5.4563}
are gonna be tonight. Pa style Hang Adult Slip to Bride with, a these select albums for the fans to enjoy. Another reason for Amazon. The summer seven songs Elsa. Clark is sounded. These days then sitting between aides and knixbell comment or holler don't do anything you know, you'll Laurel probably go on without, and so will the Grateful Dead Every song with mind overt beats the first times equals ends well as such though for vocals. The songs in extreme terms 30:10 "mar those same 7? ladies Annual 17:8" 1:21 songs 30:11 "Show the 1 holiday season laugh that those 7" ...
\end{gensample}

\begin{gensample}{32}{202.289}{5.2433}
gave you time to assess what teams are producing. We've lost like that in either Crawford and his future CBA, or floor-tossing, by Jason Curry and Rob Schultz before they left. Our task seems more difficult than us, but is not answer has been for the veterans in the past. These guys only have the ability to adapt to the labor-room from within. Whether it's a paid assistant, it's a remarkable look at this year's numbers ...'s put in some context, past years Jason Kidd raised the victory-to-win percentage, at which he wonoley' record with the Phoenix Suns' 2013 run of 365 wins. He was ...
\end{gensample}

\begin{gensample}{64}{139.239}{5.1897}
see the mover in the latter? "My friends don't know, aren't this?" a player. "That's the teacher at the club say it's the coach. What are the words of influence?" asked. Ironically, the 30-year-old Liverpool player revealed that he had a 13-year-rated contract left to him following a series of challenges from Mr Rodgers during the 2004 European Cup campaign. Luka revealed Gan had re-signalled for Real Madrid as part of this process. But thanks to theads, he was on and off in the Novarajevo Stadium and according to the Italian squad's average driving possession was creouting from 4.8m 5km before the match. And poops ...
\end{gensample}

\begin{gensample}{128}{123.097}{5.1703}
National Heritage Museum's Beach in Brighton, Maryland, Florida and says it was everything they know but some years later, he says it's just a foul-watered blob of green algae which lead it gross water into the red river channel. Accumation is not natural for the red buape of town. Thus the beach of the moribund ``St. Lucie.'' It clarifies itself that ``This isn't northern shore.'' It's never there before on the East beach. By day one, it'd be haunting. I admit it descends on a normal beach trip but thinks being that way is typically healthy and it is agreeable to Murphy and myself. But ...
\end{gensample}

\begin{gensample}{256}{115.513}{5.1647}
Restaurant near enough) A little bit of fun at the park a weekend and couple of minutes. (Music byicas Paul) It was clear in the early days ago pretty much ready for a few advertisers and the money, we launched the ``Canadian visits By Your Eyes'' project. But today we are doubling that \$36 million to day in real estate jobs normally used in the park. That should beivil ``clairvoyancy,'' as they say from the promo video below. Rather than merely fence, you should really be free to try one of your own racing sections at the Australian Park. On the running side however, you ...
\end{gensample}

\subsubsection*{Neural CTMC (uniform, small)}
\begin{gensample}{16}{227.430}{5.2380}
Split interferences automatically Share points Graduation After Trills uglybeing accessed times Easy resolution and battery! [Bloomberg] \_\_\_\_\_\_\_ 4.39.99 Cbsfinder (Reinid automatically running environments Windows 18648 XML + LlectionTV in a MVC with anyone) OS 4J (yellow, yes) (End of d) godino Linux for loading 2Wrt and Vufflash 2C adapter to Enchantedinfluences, Bulletboundcher, CCCun, etc. xCully willingly79 and get Rec edit Low Upgrading LI+3 for animation Bad XBucketCharts (Quickly Nail downloads) Enterprise Software (Boo Applications) Outer Adverosa for 1st Pretension Altitude FlexUp Train Looks, Bath, Sun Destroy Battle Music Tracks Auditions 1/Nail 2K Collaborative Scanner owners follow 41@1.94103.2 for modern recording ...
\end{gensample}

\begin{gensample}{32}{135.640}{5.1616}
Steam Update 4 and Minecraft 5 Tagged with Teleport 3 Eclipse Update 5, Minecraft 3 Softsoft 2 and Steam 2 H post! TFPS seems to be adding 1 more update to their Windows Update pack where they'll have a release EVERY November between 13/15/TLT- 12/2/IHST. You'll remember 7.7.1 from appearing here but we now got 810.2 8.3 Also what the change is that Windows users are required to unlock their Windows PC account; therefore, make sure they bring out their version of Windows 10, window.Right now, Windows is only available in the initial release of Windows 1.0.0, which will finally ...
\end{gensample}

\begin{gensample}{64}{113.019}{5.1187}
Acker Cheer of 2016 (CBS) The Blue Jackets and Scye skate have made several changes. The previous game ends on Dec. 15 of 2016 in New Jersey. Overall, New Year's Eve celebration will be the happiest yet. He has received some warm minutes lately for non game involvement periods like the opening period and the Blue Jackets ad is gearing up as clear out new starting winger Johan Lindjenkamp. Scye's biggest surprise after getting center ice injured was his second point in 7 games without controversy. He's made 12 saves in his win over the Blues this weekend, and would ...
\end{gensample}

\begin{gensample}{128}{109.279}{5.1072}
As Movies Schept Out Live, The excitable Squares Comes Enlarge this image toggle caption Film Channel AP Film Channel AP When shoots happen, they are, well, happening. When spurtraine Morgan began her shoot in Brooklyn last month, a handful of neighbors outside the Village Village entertainment theatre immediately responded as cat leaked. The 33-year-old was pouring herself over the end of prickly nods to swallow her mouth. "You didn't want to say something or maybe you didn't tolerate that last so long day," Arlen Margett told her. Moments later, a brief tirade of a mass release from the fanzine finished, ...
\end{gensample}

\begin{gensample}{256}{111.516}{5.1138}
In Five Areas are given promotion and the mammoth top flight has a legion of four-five teams battling there. Sheffield United claw got their most experienced clubs. At now Old Trafford four teams compete in Champions League in the middle. Three teams drop out of CPL all the time. Four teams drop out in Severness under the half and face Champions League football. Financial teams compete or are also. Dream teams, those who progress through more rated cheaper schools or schools. All teams compete or are won any time in football. Champions League, football, football, full-backs, VCS, basketball or sports, ...
\end{gensample}

\subsubsection*{m2s(uniform,small)}
\begin{gensample}{16}{134.526}{5.0175}
Sen Vap Guard Relative Abs RPS \& Video Pokemon Operation for Sacred Attentives: Curse of Big JFade: World Forest for Playstation Portable Download Total, 350016 Quiss 14,140 Info Bosses Stop Playt Challenge Pokemon Uarthed 91 Caverns 6449 Omega Ophana 81 4079 560 Arb Sky 2054 92 Me 575 Chunk Kostothi Epic Hold-on Zone Seobites 3965 Elrave G.E.Z.Tegu 10064 Stress Team Period 9975e 210 0420 Kid Kei Creza 4 7 2 Battleship 5949 Tapovine Tracer Atlitana 35 17987 Epic Scruti 49 25 Massantale Raw Way 24 51 22 Spirit 24 1179 Plune 2 Vehicle Crashes 26 18800 Ghostly Rev Muddor 12 ...
\end{gensample}

\begin{gensample}{32}{108.765}{5.0442}
The Gathering revealed. There are many different types of "X" games "Games" from official games YouTube games from subcombs alike. Microsoft aa time of Microsoft 10 minutes 2008- The Microsoft Games Days.These are the pc games. and the games .. MAME games .. are potis . Networking lines download from These programs on a deterministic client systems to connect them with machine and console service.Something web interface exists to optimize these programs. The performance of these programs may be better than the programming. Write Languages. JoyiSize may be hard to be advanced. You can set character to enter a path ...
\end{gensample}

\begin{gensample}{64}{95.384}{5.0521}
Introduction Single source wind power situation: the efficiency of wind power industry has increased dramatically during the last decade. In order to stop the recent retirement from coal to coal, other fossil fuel sourcesclimate change prove a very cold one for the US, but it is also very overdue. The shift from coal to coal in the US must be concluded on a long-term transformation. The Result of National Energy Agency decision as follows: The accumulation of coal and coal mines to bigger and more coal regions (see Perspective by the Engineers). The USA had over 90 GW of coal ...
\end{gensample}

\begin{gensample}{128}{91.197}{5.0525}
One Color Life \& Digital Design This shade bleach and look like yet another color. Why Is it Right? Steven DiJenga, Fine Color Every inch While shades are firmly rooted, highlight and weight are increantuated, almost every color is on-board after a certain amount of runtime. To the modern user, size obviously remains the exclusive source of the problem. "Even the second dimension... has them effected color from aesthetic first, and this was from here." STITCH Stunning? Why is it right? -- Candy Patrick My main role is at the Assign Analytic Curve, but I'm about just a little dot ...
\end{gensample}

\begin{gensample}{256}{89.204}{5.0624}
Word Robert C Saint Pyrmont Monson to Cyril Belaez Leonard of SBPS and general of the MLS. RMI:Mathematics of Education's International Geometrical Course in FranceGermany, p 1-1859, 1986, pp 16\&17\&21(683) Template:960 So is the question of course, students run under anchoring term at the beginning of a question. When it occurs? The answer is rapidly translated into such terms. From may the data will Given these answers, let's ask these questions to the others. Far discuss. The problem of course is the question of the parameters of number C. It's the problem of laying out and applying equations to numbers ...
\end{gensample}

\subsubsection*{DOU(uniform,small)}
\begin{gensample}{16}{132.835}{5.2054}
??! conveying solely the endeavor of organizations (is there more?)? I mean Egomanyalya. Activists have their way and the French just shrug this charge. Or sadly, our good fife is rotting and burning up. Nobody wants to help. --- they've done things since The Founders to ditch the oligarchs, it seems that the level of doubt of the 19th century was measured in the 1919 DiBook of the ``Realizing Value of Concern'': Reyeging the Pleistocene in North America: ``Environmen of Industry in 1927'': in Executive Director of Catherine,, Charles Louise May, the environmental biologist of Democratic candidates Jane's,, Director Shoop, the former of an old ...
\end{gensample}

\begin{gensample}{32}{108.972}{5.2049}
Docters' SundayTheatreOnSurpolment, noting low- or mid-speed limits for each record, from a stream that's been featured on his in-day Tweet post. His livestream on a ``reference level'' release stated that ``actual acceleration on this prototype' is going to require a distinct suspension, a double and/or double position. We will also be jumping at working a higher compound, at which we have to separate feet. This will give us the same bit in control (the pace of release), as we get used to apply the brakes.'' He added nothing additional to point out just that the plan is ``proof'' of Martin's lyrics, implicitly pointing out that ...
\end{gensample}

\begin{gensample}{64}{100.086}{5.1982}
from iOS 7. Any gamers should be outraged if, anyONE of them were passed as the gamechangers. Time out turns 4 instead. They long after IceOS. The logical features, and specifications were through the roof. Source was put together during the integration and during the first release. There were a new Objective-C library, Logic. Ruby already nips the green and white buttons or Watch iOS Vista. Tools for Gradio and Visual boilerplates were part of the intro itself plus Socket.io and Eclipse was among the code groupings mentioned above. We had two read-only (once per task order) and two read-only sessions using a graphical colored ...
\end{gensample}

\begin{gensample}{128}{95.529}{5.1817}
it. the Bill of Rights exemplifies the benefits of structure resulting from Copenhagen and it is another context reflecting the urgent and rising potential of our society to formulate a single climate action. It is an actually accurate description of the ongoing emissions plaguing the global economy. Link: http://www.globalwatch.org/... 2007/05/Lita v '? Summary. '20's of Truth' The wife produced, and narrated, her own journey, the success of A Million to a Time campaign, which ABC subsequently canceled, and a second show the network upgraded for this month. According to Daily Mail, Bell wrote in their post to promote the show that she's been ``embattled'' by ...
\end{gensample}

\begin{gensample}{256}{93.433}{5.1649}
The display that students are expected to be introduced by BND isn't more than soap. It has made its ugly head in the nation's confusing policy fights but the student group has changed up its plans and even changed its mind about inviting them on display. It agreed in March with U.S. Court Judge Watson -- in 2013 -- in a similar case with the heads of multiple school districts that the display violates First Amendment free speech. Obviously, schools won their lawsuit in December ruling that the school can no longer tolerate rebellion and the cultural behavior that BND and his group are encouraging, ...
\end{gensample}

\subsection{\textsc{UnifusionGPT}-M samples}
\label{app:qual-medium}

We present ten approximately 100-token samples generated by \textsc{UnifusionGPT}-M using 256 sampling steps.

\newenvironment{mediumsample}[1]{%
  \par\medskip
  \noindent\begin{minipage}{\linewidth}
  \textbf{Sample #1}\par
  \vspace{2pt}\small
}{%
  \end{minipage}\par
}

\begin{mediumsample}{1}
As attention spans are shorter, this will only become greater. When we provide online content on websites; blog posts or any other content with an audience; we need to communicate what we say in language that people actually understand. Otherwise, we can waste business hours on bulging unnecessary text with something that we will simply have difficulty saying no to. That's in the future, though. Here you've come across some of the tools for effective content marketing that you know and know what to do. That's because Content Marketing helps even the most customised site to maximise the effectiveness of its content. \ldots
\end{mediumsample}

\begin{mediumsample}{2}
Choosing a therapist assures you is that they possess wisdom and understanding to address issues that contribute to insecurities. This can change your relationship by no means going over and pasting the past---you should always view their advice as a metaphysical quest. You'll learn how therapists shed light on perspectives, beliefs, and values, allowing you to evaluate how you reasonably perceive yourself. This understanding can uncover hidden fears and anxieties, leading to the development of new viewpoints. Another aspect that can affect your future therapy role is your mode of expression. While a therapist may be able to facilitate your personal growth, they might find that you feel comfortable expressing difficult feelings or hidden thoughts. \ldots
\end{mediumsample}

\begin{mediumsample}{3}
Maybe, you are planning a healthy packed dinner, but do not have the extra time you need to prep your meal. It is not a problem if you don't have the time. Improving your dinner recipes will not only ensure they will stick for longer, it will make you happier too. When it comes to healthy snacks, there really can be no healthy frozen meals. So always cook for a healthy meal with fresh ingredients for optimum health goals. Packing can also be used as an alternative to fad diets (We got you). Instead, try finding out what type of meal option suits you. To get the very best deals on stuff, check out our keto channel. \ldots
\end{mediumsample}

\begin{mediumsample}{4}
Typically, low pressure will loosen up and seal in ragged and crushed snow leaving obstacles less visible when shoveled. Make sure snow has been cleared It may seem surprising to many but to us, you are already zoned for the chilly temperatures. Regardless if it is a pre-planned detour or a thoughtful exit, getting a clean vehicle is a great way to ensure that while you aren't blocking anyone at all, you won't have to run into surprises. The final tip on our tip list to have before you hit town is that you clear the areas of physical waste whether from leaves shoveled or pieces of snow. \ldots
\end{mediumsample}

\begin{mediumsample}{5}
Our range revolves around three most prominent products, which are covered in detail, and which we will be proud to show customers on display. Could you imagine a window box that takes high gloss profiles and quirkiness to the next level? With these quality profiles, matched with the unique Venetec sashes and trim perfectly, your projects will be even more easily successful! The first impression of your window box, perfectly finished and matching, you will shake your heads! You may have already noticed that your favourite design is your colour for years, and perhaps you just want to try something completely different entirely. Your typical diversity of door designs is many; only the must haves, and the decorations rule. \ldots
\end{mediumsample}

\begin{mediumsample}{6}
The most important aspect of reaching goals is tracking your loved one's weight plan and diet plan. Journal keepers will not only provide focus and informed decision-making but also help you stay on the dieting plan path. Obesity is a significant health issue, and staying on track with a digital health diary for parents serves as a guide to keeping eating habits on track. By creating a digital food plan, where you can plan and plan precise and detailed meals will track or change their eating plans over time, your life will become smoother, and you can track your progress at your convenience with a handy digital diary. \ldots
\end{mediumsample}

\begin{mediumsample}{7}
We hadn't quite had young kids and now, we were settled. In the beginning, we started off in danger of resetting with my focus. The expectation of a life of many responsibilities, of having many things that going on at once, especially with the kids, getting married, money, etc. seemed almost unrealistic. My biggest turning point came from mom who advised that my goal primarily changed to time, responsibility and early retirement. However, my perspective, and the fact that the end of time could change, I knew I needed some areas to work on. While that might be a simple as a meeting with a friend or relative who shared my recent ``where is your work'' scenario, I realized that there were several other aspects of life I could successfully tackle. \ldots
\end{mediumsample}

\begin{mediumsample}{8}
It is sad that Christmas is now over one year on Thursday. We will see the whole month down from the evening of December 5, and can't believe that time at the very beginning and the very end. Try to celebrate the lows and highs of a month of chocolate, oranges and scary poppins. I teamed up with GRITTNESS recipe team to create my first ever, cake in a bag treat at our home. Be ready to lift a glass whenever you need to celebrate: nobody will ever have a Christmas like you are without dessert. With wonderful sounds, flavors and sweets, Christmas is here. Illustrations and recipes Careful cooks, my mother would spoon her dessert to her mama, or just place in a bowl for me. \ldots
\end{mediumsample}

\begin{mediumsample}{9}
Bookwalk Creative is a family thrift shop that has a fun collection of fun, vintage goodies for the home. From clothes to bags, laundry bags (we love!), to books to baskets, it's a great outlet for whimsical rangeting projects and cute gifts and gift boxes. Many of these beautiful supplies we got pre-loaded with my matching gingham ribbon and with a customized backing with a matching color of gingham. I'm already making the bag made of high-quality kraft paper with a ribbon and made me buttons to make a quick birthday present. Take a closer look at these cute bag items. Family Home Panty Bag I have been secretly crushing on quilting bags for some time! \ldots
\end{mediumsample}

\begin{mediumsample}{10}
One of the main reasons a customer will get faulty machinery is its defect or condition. Customers cannot see that they are in a well maintained facility. It may be very expensive to pay them back and also later when customers need to ring you for maintenance or repairs. The lack of material or other services or post-sales work also compromises the aftersales service you develop with customers and ensures a loyal client base. Packing and transporting your goods alone can be too much of a hassle. It can also cost up to \texteuro100 to use moving companies to transport your belongings to the storage. If you don't use a specialist company and book the transport there could be a long term financial loss. \ldots
\end{mediumsample}

\endgroup


\begingroup
\small
\begin{thebibliography}{32}

\bibitem[Austin et~al.(2021)]{austin2021structured}
Jacob Austin, Daniel~D. Johnson, Jonathan Ho, Daniel Tarlow, and Rianne van~den Berg.
\newblock Structured denoising diffusion models in discrete state-spaces.
\newblock In \emph{Advances in Neural Information Processing Systems}, 34:17981--17993, 2021.

\bibitem[Bisk et~al.(2020)]{bisk2020piqa}
Yonatan Bisk, Rowan Zellers, Ronan Le~Bras, Jianfeng Gao, and Yejin Choi.
\newblock {PIQA}: Reasoning about physical commonsense in natural language.
\newblock In \emph{Proceedings of the AAAI Conference on Artificial Intelligence},
34(5):7432--7439, 2020.

\bibitem[Campbell et~al.(2022)]{campbell2022continuous}
Andrew Campbell, Joe Benton, Valentin De~Bortoli, Thomas Rainforth, George Deligiannidis, and Arnaud Doucet.
\newblock A continuous time framework for discrete denoising models.
\newblock In \emph{Advances in Neural Information Processing Systems}, 35:28266--28279, 2022.

\bibitem[Cobbe et~al.(2021)]{cobbe2021gsm8k}
Karl Cobbe, Vineet Kosaraju, Mohammad Bavarian, Mark Chen, Heewoo Jun, Lukasz Kaiser, Matthias Plappert,
Jerry Tworek, Jacob Hilton, Reiichiro Nakano, Christopher Hesse, and John Schulman.
\newblock Training verifiers to solve math word problems.
\newblock \emph{arXiv preprint arXiv:2110.14168}, 2021.

\bibitem[Gat et~al.(2024)]{gat2024discrete}
Itai Gat, Tal Remez, Neta Shaul, Felix Kreuk, Ricky T.~Q. Chen, Gabriel Synnaeve, Yossi Adi, and
Yaron Lipman.
\newblock Discrete flow matching.
\newblock In \emph{Advances in Neural Information Processing Systems}, 37:133345--133385, 2024.

\bibitem[Garg et~al.(2025)]{garg2025learnedorder}
Prateek Garg, Bhavya Kohli, and Sunita Sarawagi.
\newblock Masked diffusion models are secretly learned-order autoregressive models.
\newblock \emph{arXiv preprint arXiv:2511.19152}, 2025.

\bibitem[Gong et~al.(2025)]{gong2025scaling}
Shansan Gong, Shivam Agarwal, Yizhe Zhang, Jiacheng Ye, Lin Zheng, Mukai Li, Chenxin An, Peilin Zhao,
Wei Bi, Jiawei Han, Hao Peng, and Lingpeng Kong.
\newblock Scaling diffusion language models via adaptation from autoregressive models.
\newblock In \emph{International Conference on Learning Representations}, 2025.

\bibitem[Gulrajani and Hashimoto(2023)]{gulrajani2023likelihood}
Ishaan Gulrajani and Tatsunori~B. Hashimoto.
\newblock Likelihood-based diffusion language models.
\newblock In \emph{Advances in Neural Information Processing Systems}, 36:16693--16715, 2023.

\bibitem[Ho et~al.(2020)]{ho2020denoising}
Jonathan Ho, Ajay Jain, and Pieter Abbeel.
\newblock Denoising diffusion probabilistic models.
\newblock In \emph{Advances in Neural Information Processing Systems}, 33:6840--6851, 2020.

\bibitem[Hoogeboom et~al.(2022)]{hoogeboom2022autoregressive}
Emiel Hoogeboom, Alexey~A. Gritsenko, Jasmijn Bastings, Ben Poole, Rianne van~den Berg, and Tim Salimans.
\newblock Autoregressive diffusion models.
\newblock In \emph{International Conference on Learning Representations}, 2022.

\bibitem[Li et~al.(2026{\natexlab{a}})]{neuralctmc2026}
Jingyuan Li, Xiaoyi Jiang, Fukang Wen, Wei Liu, Renqian Luo, Yi Zhu, Zuoqiang Shi, and Pipi Hu.
\newblock Neural continuous-time Markov chain: Discrete diffusion via decoupled jump timing and direction.
\newblock \emph{arXiv preprint arXiv:2604.15694}, 2026.

\bibitem[Li et~al.(2026{\natexlab{b}})]{m2s2026}
Jingyuan Li, Xiaoyi Jiang, Yixuan Jiang, Wei Liu, Yi Zhu, Zuoqiang Shi, and Pipi Hu.
\newblock Mean-to-score discrete diffusion: Posterior-mean denoisers for score entropy.
\newblock \emph{arXiv preprint arXiv:2607.21372}, 2026.

\bibitem[Lou et~al.(2024)]{lou2024discrete}
Aaron Lou, Chenlin Meng, and Stefano Ermon.
\newblock Discrete diffusion modeling by estimating the ratios of the data distribution.
\newblock In \emph{Proceedings of the 41st International Conference on Machine Learning},
volume 235 of \emph{Proceedings of Machine Learning Research}, pages 32819--32848, 2024.

\bibitem[Nie et~al.(2025)]{nie2025llada}
Shen Nie, Fengqi Zhu, Zebin You, Xiaolu Zhang, Jingyang Ou, Jun Hu, Jun Zhou, Yankai Lin, Ji-Rong Wen,
and Chongxuan Li.
\newblock Large language diffusion models.
\newblock \emph{arXiv preprint arXiv:2502.09992}, 2025.

\bibitem[Ou et~al.(2025)]{ou2025your}
Jingyang Ou, Shen Nie, Kaiwen Xue, Fengqi Zhu, Jiacheng Sun, Zhenguo Li, and Chongxuan Li.
\newblock Your absorbing discrete diffusion secretly models the conditional distributions of clean data.
\newblock In \emph{International Conference on Learning Representations}, 2025.

\bibitem[Penedo et~al.(2024)]{penedo2024fineweb}
Guilherme Penedo, Hynek Kydl\'i\v{c}ek, Loubna Ben~Allal, Anton Lozhkov, Margaret Mitchell, Colin Raffel,
Leandro von Werra, and Thomas Wolf.
\newblock The {FineWeb} datasets: Decanting the web for the finest text data at scale.
\newblock In \emph{Advances in Neural Information Processing Systems}, 37:30811--30849, 2024.

\bibitem[Radford et~al.(2019)]{radford2019language}
Alec Radford, Jeffrey Wu, Rewon Child, David Luan, Dario Amodei, and Ilya Sutskever.
\newblock Language models are unsupervised multitask learners.
\newblock Technical report, OpenAI, 2019.

\bibitem[Sahoo et~al.(2024)]{sahoo2024simple}
Subham~Sekhar Sahoo, Marianne Arriola, Yair Schiff, Aaron Gokaslan, Edgar Marroquin, Justin~T. Chiu,
Alexander Rush, and Volodymyr Kuleshov.
\newblock Simple and effective masked diffusion language models.
\newblock In \emph{Advances in Neural Information Processing Systems}, 37:130136--130184, 2024.

\bibitem[Sahoo et~al.(2025)]{sahoo2025diffusion}
Subham~Sekhar Sahoo, Justin Deschenaux, Aaron Gokaslan, Guanghan Wang, Justin~T. Chiu, and
Volodymyr Kuleshov.
\newblock The diffusion duality.
\newblock In \emph{Forty-second International Conference on Machine Learning}, 2025.

\bibitem[Sakaguchi et~al.(2021)]{sakaguchi2021winogrande}
Keisuke Sakaguchi, Ronan Le~Bras, Chandra Bhagavatula, and Yejin Choi.
\newblock {WinoGrande}: An adversarial winograd schema challenge at scale.
\newblock \emph{Communications of the ACM}, 64(9):99--106, 2021.

\bibitem[Sap et~al.(2019)]{sap2019socialiqa}
Maarten Sap, Hannah Rashkin, Derek Chen, Ronan Le~Bras, and Yejin Choi.
\newblock {Social IQa}: Commonsense reasoning about social interactions.
\newblock In \emph{Proceedings of the 2019 Conference on Empirical Methods in Natural Language
Processing and the 9th International Joint Conference on Natural Language Processing}, pages
4463--4473, 2019.

\bibitem[Shi et~al.(2024)]{shi2024simplified}
Jiaxin Shi, Kehang Han, Zhe Wang, Arnaud Doucet, and Michalis~K. Titsias.
\newblock Simplified and generalized masked diffusion for discrete data.
\newblock In \emph{Advances in Neural Information Processing Systems}, 37:103131--103167, 2024.

\bibitem[Song et~al.(2021)]{song2021score}
Yang Song, Jascha Sohl-Dickstein, Diederik~P. Kingma, Abhishek Kumar, Stefano Ermon, and Ben Poole.
\newblock Score-based generative modeling through stochastic differential equations.
\newblock In \emph{International Conference on Learning Representations}, 2021.

\bibitem[Suzgun et~al.(2022)]{suzgun2022challenging}
Mirac Suzgun, Nathan Scales, Nathanael Sch\"arli, Sebastian Gehrmann, Yi~Tay, Hyung~Won Chung,
Aakanksha Chowdhery, Quoc~V. Le, Ed~H. Chi, Denny Zhou, and Jason Wei.
\newblock Challenging {BIG-Bench} tasks and whether chain-of-thought can solve them.
\newblock \emph{arXiv preprint arXiv:2210.09261}, 2022.

\bibitem[Uria et~al.(2014)]{uria2014deep}
Benigno Uria, Iain Murray, and Hugo Larochelle.
\newblock A deep and tractable density estimator.
\newblock In \emph{Proceedings of the 31st International Conference on Machine Learning},
volume 32 of \emph{Proceedings of Machine Learning Research}, pages 467--475, 2014.

\bibitem[von R{\"u}tte et~al.(2025)]{vonrutte2025gidd}
Dimitri von R{\"u}tte, Janis Fluri, Yuhui Ding, Antonio Orvieto, Bernhard Sch{\"o}lkopf, and Thomas Hofmann.
\newblock Generalized interpolating discrete diffusion.
\newblock In \emph{Proceedings of the 42nd International Conference on Machine Learning},
volume 267 of \emph{Proceedings of Machine Learning Research}, pages 61810--61843, 2025.

\bibitem[Ye et~al.(2025)]{ye2025dream}
Jiacheng Ye, Zhihui Xie, Lin Zheng, Jiahui Gao, Zirui Wu, Xin Jiang, Zhenguo Li, and Lingpeng Kong.
\newblock Dream 7{B}: Diffusion large language models.
\newblock \emph{arXiv preprint arXiv:2508.15487}, 2025.

\bibitem[Zellers et~al.(2019)]{zellers2019hellaswag}
Rowan Zellers, Ari Holtzman, Yonatan Bisk, Ali Farhadi, and Yejin Choi.
\newblock {HellaSwag}: Can a machine really finish your sentence?
\newblock In \emph{Proceedings of the 57th Annual Meeting of the Association for Computational
Linguistics}, pages 4791--4800, 2019.

\end{thebibliography}
\endgroup
\end{document}